\newcommand{\bfunc}[1]{\llbracket#1\rrbracket}
\newcommand{\Mv}[1]{\mathbf{#1}}
\newcommand{\Mt}[1]{\mathtt{#1}}
\newcommand{\Mvh}[1]{\hat{\mathbf{#1}}}
\newcommand{\Mvb}[1]{\bar{\mathbf{#1}}}
\newcommand{\Mvt}[1]{\tilde{\mathbf{#1}}}
\newcommand{\argmin}[1]{\underset{#1}{\arg\min}\quad}
\newcommand{\journalminor}[1]{{\color{black}#1}}
\newcommand{\journallast}[1]{{\color{black}#1}}
\newcommand{\setre}[1]{\renewcommand{\algorithmicrequire}{\textbf{#1}}}
\renewcommand{\algorithmicrequire}{\textbf{Input:}}
\begin{document}

\title{Dynamic Principal Projection for  Cost-Sensitive Online Multi-Label Classification}


\author{Hong-Min Chu         \and
        Kuan-Hao Huang       \and 
        Hsuan-Tien Lin        
}


\institute{Hong-Min Chu, Kuan-Hao Huang, Hsuan-Tien Lin \at
              CSIE Department, National Taiwan University, Taiwan \\
              \email{\{r04922031, r03922062, htlin\}@csie.ntu.edu.tw}           
}

\date{Received: date / Accepted: date}

\maketitle

\begin{abstract}
We study multi-label classification (MLC) with three important real-world issues: 
online updating, label space dimension reduction (LSDR), and cost-sensitivity.
Current MLC algorithms have not been designed to address these three issues simultaneously.
In this paper, we propose a novel algorithm, cost-sensitive dynamic principal projection (CS-DPP) that resolves all three issues.
The foundation of CS-DPP is an online LSDR framework derived from a leading LSDR algorithm. 
In particular, CS-DPP is equipped with an efficient online dimension reducer motivated by matrix stochastic gradient, 
and establishes its theoretical backbone when coupled with a carefully-designed online regression learner. 
In addition, CS-DPP embeds the cost information into label weights to achieve cost-sensitivity along with theoretical guarantees. 
Experimental results verify that CS-DPP achieves better practical performance than current MLC algorithms across different evaluation criteria, and demonstrate the importance of resolving the three issues simultaneously.
\keywords{Multi-label classification \and Cost-sensitive \and Label space dimension reduction} 
\end{abstract}

\section{Introduction}\label{sec:intro}

The multi-label classification (MLC) problem allows each instance to be associated with a set of labels and
reflects the nature of a wide spectrum of real-world applications \cite{nus-wide,emotions,yeast}.
Traditional MLC algorithms mainly tackle the batch MLC problem, where the input data are presented in a batch~\cite{cc,intro}. 
Nevertheless, in many MLC applications such as e-mail categorization \cite{stream_mlc_mtt}, multi-label examples arrive as a stream. 
Online analysis is therefore required
as batch MLC algorithms may not meet the needs to make a prediction and update the predictor on the fly.
The needs of such applications can be formalized as the online MLC (OMLC) problem. 

The OMLC problem is generally more challenging than the batch one, and many mature algorithms for the batch problem have not yet been carefully extended to OMLC.
Label space dimension reduction (LSDR) is a family of mature algorithms for the batch MLC problem~\cite{cplst,cs,faie,plst,bcs,cca,leml,cssp,landmark,sleec}. 
By viewing the label set of each instance as a high-dimensional label vector in a label space,
LSDR encodes each label vector as a code vector in  a lower-dimensional code space, and learns a predictor within the code space. 
An unseen instance is predicted by coupling the predictor with a decoder from the code space to the label space.
For example, compressed sensing (CS)~\cite{cs} encodes using random projections, and decodes with sparse vector reconstruction; 
principal label space transformation (PLST)~\cite{plst} encodes by projecting to the key eigenvectors of the known label vectors obtained from principal component analysis (PCA), and decodes by reconstruction with the same eigenvectors.
This low-dimensional encoding allows LSDR algorithms to exploit the key joint information between labels to be more robust to noise and be more effective on learning~\cite{plst}.
Nevertheless, to the best of our knowledge, all the LSDR algorithms mentioned above are designed only for the batch MLC problem. 

Another family of MLC algorithms that have not been carefully extended for OMLC contains the cost-sensitive MLC algorithms.
In particular, different MLC applications usually come with different evaluation criteria (costs) that reflect their realistic needs.
It is important to design MLC algorithms that are cost-sensitive to systematically cope with different costs, because
an MLC algorithm that targets one specific cost may not always perform well under other costs~\cite{cft}.
Two representative cost-sensitive MLC algorithms are probabilistic classifier chain (PCC)~\cite{pcc} and condensed filter tree (CFT)~\cite{cft}. 
PCC estimates the conditional probability with the classifier chain (CC) method~\cite{cc} and makes Bayes-optimal predictions with respect to the given cost; 
CFT decomposes the cost into instance weights when training the classifiers in CC. 
Both algorithms, again, targets the batch MLC problem rather than the OMLC one.

From the discussions above, there is currently no algorithm that considers the three realistic needs of online updating, label space dimension reduction, and cost-sensitivity at the same time. The goal of this work is to study such algorithms. 
We first formalize the OMLC and cost-sensitive OMLC (CSOMLC) problems in Section~\ref{sec:related} and discuss related work. 
We then extend LSDR for the OMLC problem and propose a novel online LSDR algorithm, dynamic principal projection (DPP), by connecting PLST with online PCA. 
In particular, we derive the DPP algorithm in Section~\ref{sec:DPP} along with its theoretical guarantees, and resolve the issue of possible basis drifting caused by online PCA. 

In Section~\ref{sec:CSDPP}, we further generalize DPP to cost-sensitive DPP (CS-DPP) to fully match the needs of CSOMLC with a theoretically-backed label-weighting scheme inspired by CFT. 
Extensive empirical studies demonstrate the strength of CS-DPP in addressing the three realistic needs in Section~\ref{sec:exp}. 
In particular, we justify the necessity to consider LSDR, basis drifting and cost-sensitivity. 
The results show that CS-DPP significantly outperforms other OMLC competitors across different CSOMLC problems, which validates the robustness and effectiveness of CS-DPP, as concluded in Section~\ref{sec:conclusion}.

\section{Preliminaries and Related Work}\label{sec:related} 

For the MLC problem, we denote the feature vector of an instance as $\Mv x \in \mathbb{R}^d $ and 
its corresponding label vector as $\Mv y \in \mathcal{Y} \equiv \{+1,-1\}^K$, where $\Mv y[k] = +1$ iff the instance is associated with the $k$-th label out of a total of $K$ possible labels.
We let $\Mv y[k] \in \{+1,-1\}$ to conform with the common setting of online binary classification~\cite{PA}, which is equivalent to another scheme, $\Mv y[k] \in \{1,0\}$, used in other MLC works \cite{cft,cc}.

Traditional MLC methods consider the batch setting, where a training dataset $ \mathcal{D} = \{(\Mv x_n, \Mv y_n)\}_{n=1}^{N}$ is given at once,
and the objective is to learn a classifier $g\colon \mathbb{R}^d \rightarrow \{+1,-1\}^K$ from $\mathcal{D}$
with the hope that $\Mvh y = g(\Mv x)$ accurately predicts the ground truth $\Mv y$ with respect to an unseen $\Mv x$.
In this work, we focus on the OMLC setting, which assumes that 
instance $(\Mv x_t, \Mv y_t)$ arrives in sequence from a data stream.
Whenever an~$\Mv x_t$ arrives at iteration $t$, the OMLC algorithm is required to make a prediction $\Mvh y_t = g_t(\Mv x_t)$ based on the current classifier~$g_t$ and feature vector $\Mv x_t$. 
The ground truth~$\Mv y_t$ with respect to $\Mv x_t$ is then revealed, and the penalty of $\Mvh y_t$ is evaluated against~$\Mv y_t$.

Many evaluation criteria for comparing $\Mv y$ and $\Mvh y$ have been considered in the literature to satisfy different application needs.
A simple criterion~\cite{intro} is the Hamming loss $c_{\textsc{ham}}(\Mv y, \Mvh y)$ = $\frac{1}{K}\sum^{K}_{k=1} \llbracket \Mv y[k] \neq \Mvh y[k] \rrbracket$.
The Hamming loss separately considers each label during evaluation. 
There are other criteria that jointly evaluate all labels, 
such as the F1 loss~~\cite{intro} 
$$c_{\textsc{f}}(\Mv y, \Mvh y) = 1 - 2 \dfrac{\sum\nolimits_{k=1}^K \bfunc{\Mv y[k]\!=\!+1 \mbox{ and } \Mvh y[k] \!=\!+1}}{\sum\nolimits_{k=1}^K \left(\bfunc{\Mv y[k]\!=\!+1} + \bfunc{\Mvh y[k]\!=\!+1}\right)}.$$
In this work, we follow existing cost-sensitive MLC approaches~\cite{cft} to extend OMLC to the cost-sensitive OMLC (CSOMLC) setting, which further takes
the evaluation criterion as an additional input to the learning algorithm. We call the criterion a \textit{cost function} and overload $c\colon \{+1, -1\}^K\times\{+1,-1\}^K \rightarrow \mathbb{R}$ as its notation. The cost function
evaluates the penalty of~$\Mvh y$ against~$\Mv y$ by $c(\Mv y, \Mvh y)$.
We naturally assume that $c(\cdot, \cdot)$ satisfies $c(\Mv y, \Mv y) = 0$ and $\max_{\Mvh y} c(\Mv y, \Mvh y) \leq 1$.
The objective of a CSOMLC algorithm is to adaptively learn a classifier $g_t\colon \mathbb{R}^d \rightarrow \{+1,-1\}^K$ 
based on not only the data stream but also the input cost function $c$ such that
the cumulative cost $\sum_{t=1}^T c(\Mv y_t, \Mvh y_t)$ with respect to the input $c$, where $\Mvh y_t = g_t(\Mv x_t)$, can be minimized. 

\journalminor{
  Note that the cost function within the CSOMLC setting above corresponds to the
  \emph{example-based} evaluation criteria for MLC, named because the prediction $\Mvh y_t$ of each example is evaluated
  against the ground truth $\Mv y_t$ independently. More sophisticated evaluation criteria such as micro-based
  and macro-based criteria~\cite{eval,Mao13} can also be found in the literature. The following equations highlight the difference between example-F1 (what our CSOMLC setting can handle), micro-F1 and macro-F1 when calculated on $T$ predictions
  \begin{eqnarray*}
    \mbox{example-F1 loss} &=& 1 - \frac{2}{T} \sum_{t=1}^T \frac{\sum\nolimits_{k=1}^K \bfunc{\Mv y_t[k]\!=\!+1 \mbox{ and } \Mvh y_t[k] \!=\!+1}}{\sum\nolimits_{k=1}^K \left(\bfunc{\Mv y_t[k]\!=\!+1} + \bfunc{\Mvh y_t[k]\!=\!+1}\right)} \;\;\; \journallast{;}\\
    \mbox{micro-F1 loss} &=& 1 - \frac{2}{K} \sum_{k=1}^K \frac{\sum\nolimits_{t=1}^T \bfunc{\Mv y_t[k]\!=\!+1 \mbox{ and } \Mvh y_t[k] \!=\!+1}}{\sum\nolimits_{t=1}^T \left(\bfunc{\Mv y_t[k]\!=\!+1} + \bfunc{\Mvh y_t[k]\!=\!+1}\right)} \;\;\; \journallast{;} \\
    \mbox{macro-F1 loss} &=& 1 - 2  \frac{\sum\nolimits_{t=1}^T \sum_{k=1}^K \bfunc{\Mv y_t[k]\!=\!+1 \mbox{ and } \Mvh y_t[k] \!=\!+1}}{\sum\nolimits_{t=1}^T \sum_{k=1}^K \left(\bfunc{\Mv y_t[k]\!=\!+1} + \bfunc{\Mvh y_t[k]\!=\!+1}\right)}  \;\;\; \journallast{.}
  \end{eqnarray*}
  In particular, the three criteria differ by the averaging process. Average example-F1 computes the geometric mean of precision and recall (F1) \textit{per example} and then computes the arithmetic mean over all examples; micro-F1 computes the geometic mean of precision and recall \textit{per label} and then computes the arithmetic mean over all labels; macro-F1 computes the geometric mean of precision and recall \textit{over the set of all example-label predictions}.
  The more sophisticated ones \journallast{are known to be} more difficult to optimize. 
  Thus, similar to many existing cost-sensitive MLC algorithms for the batch setting~\cite{cft}, 
  we consider only example-based criteria in this work,
  \journallast{and leave the investigation of achieving} cost-sensitivity for micro- and macro-based criteria \journallast{to the future}.
}

Several OMLC algorithms have been studied in the literature, including
online binary relevance~\cite{stream_mlc}, Bayesian OMLC framework~\cite{bayes_stream_mlc}, and the multi-window approach using $k$ nearest neighbors~\cite{cdp_stream_mlc}. However, none of them are cost-sensitive. That is, they cannot take the cost function into account to improve learning performance.

Cost-sensitive MLC algorithms have also been studied in the literature.
Cost-sensitive RA$k$EL~\cite{cs-rakel} and progressive RA$k$EL~\cite{prakel} are two algorithms
that generalize a famous batch MLC algorithm called RA$k$EL~\cite{rakel} to cost-sensitive learning.
The former achieves cost-sensitivity for any weighted Hamming loss, and the latter achieves this for any cost function.
Probabilistic classifier chain (PCC)~\cite{pcc} and condensed filter tree (CFT)~\cite{cft} are two other algorithms
that generalizes another famous batch MLC algorithm called classifier chain (CC)~\cite{cc} to cost-sensitive learning.
PCC estimates the conditional probability of the label vector via CC, and makes a Bayes-optimal prediction with respect
to the cost function and the estimation. 
PCC in principal achieves cost-sensitivity for any cost function, but the prediction
can be time-consuming unless an efficient Bayes inference rule is designed for the cost function (\textit{e.g.} the F1 loss~\cite{pcc_f1}). 
CFT embeds the cost information into CC by an $O(K^2)$-time step that re-weights the training instances for each classifier. All four algorithms above are designed for the batch cost-sensitive MLC problem, and it is not clear how they can be modified for the CSOMLC problem.
CC-family algorithms typically suffer from the problem of ordering the labels properly to achieve decent performance.
Some works start solving the ordering problem for the original CC algorithm, such as \journalminor{
  the easy-to-hard paradigm~\cite{Liu17},
} but
whether those works can be well-coupled with CFT or PCC \journallast{has yet} to be studied.


Label space dimension reduction (LSDR) is another family of MLC algorithms. 
LSDR encodes each label vector as a code vector in the lower-dimensional 
code space, and learns a predictor from the feature vectors to the corresponding code vectors.
The prediction of LSDR consists of the predictor followed by a decoder from the code space to the label space.
For example, compressed sensing (CS)~\cite{cs} uses random projection for encoding, takes a regressor as the predictor, and
decodes by sparse vector reconstruction. Instead of random projection,
principal label space transformation (PLST)~\cite{plst} encodes the label vectors
$\{\Mv y_n\}_{n=1}^N$ to their top principal components for the batch MLC problem.
Some other LSDR algorithms,
including conditional principal label space transformation (CPLST)~\cite{cplst}, 
feature-aware implicit label space encoding (FaIE)~\cite{faie}, 
canonical-correlation-analysis method~\cite{cca}, and low-rank empirical risk minimization for multi-label learning~\cite{leml},
jointly take the feature and the label vectors into account during encoding~\cite{cplst,faie,cca,leml} 
to further improve the performance.

  The physical intuition behind LSDR algorithms is to capture the key joint information between labels before learning. By encoding to a more concise code space, LSDR algorithms enjoy the advantage of learning the predictor more effectively to improve the MLC performance. Moreover, compared with non-LSDR algorithms like RA$k$EL and CFT, LSDR algorithms are generally more efficient, which in turn \journallast{makes them favorable} candidates to be extended to online learning.

Motivated by the possible applications of online updating, the realistic needs of cost-sensitivity, and the potential effectiveness of label space dimension reduction, we take an initiative to study LSDR algorithms for the CSOMLC setting. 
In particular, we first adapt PLST to the OMLC setting in Section~\ref{sec:DPP}, and further generalize it to the CSOMLC setting in Section~\ref{sec:CSDPP}.




    \section{Dynamic Principal Projection} \label{sec:DPP}
    \begin{table}[t]
\caption{Summary of common notations}
\label{tbl:notation}  
\centering
\resizebox{0.95\textwidth}{!}{
\begin{tabular}{ll}
notation & meaning \\ \hline \hline
$d$ & number of features \\
$K$ & number of labels \\
$M$ & dimension of the code space\\
$\Mv x \in \mathbb{R}^d$ & feature vector \\
$\Mv y \in \{+1, -1\}^K$ & ground truth label vector \\
$\Mvh y \in \{+1, -1\}^K$ & predicted label vector\\
$\Mv c(\Mv y, \Mvh y)$ & cost for predicting $\Mv y$ as $\Mvh y$\\
$\Mv z \in \mathbb{R}^M$ & code vector \\
$\Mv P \in \mathbb{R}^{M \times K}$ & encoding matrix from the label space to the code space\\
$\Mv W \in \mathbb{R}^{d \times M}$ & linear predictor matrix from the input space to the code space\\
$\Mv U \in \mathbb{R}^{K \times K}$ & (roughly) rank-$M$ matrix within matrix stochastic gradient (MSG)\\
$(\Mv Q \in \mathbb{R}^{(M+1) \times K}, \Mv \sigma \in \mathbb{R}^{M+1})$ & decomposition of $\Mv U$ such that $\Mv U = \Mv Q \mbox{diag}(\sigma) \Mv Q^\top$\\
$\Gamma \in [0, 1]^{M+1}$ & discrete probability distribution for sampling the rows of $\Mv Q$ to get $\Mv P$\\
$\delta^{(k)} \in \mathbb{R}$ & weight of the $k$-th label for representing the cost in CS-DPP\\
$\Mv C \in \mathbb{R}^{K \times K}$ & a diagonal matrix that stores $\{\sqrt{\delta^{(k)}}\}_{k=1}^K$ in CS-DPP
\end{tabular}
}
\end{table}

    In this section, we first propose an online LSDR algorithm, dynamic principal projection (DPP), that optimizes the Hamming loss.
    DPP is motivated by the connection between PLST, which encodes the label vectors to their top principal components,
    and the rich literature of online PCA algorithms~\cite{online_pca_sgd,online_pca_optimal_regret,online_pca_spca}.
    We shall first introduce the detail of PLST.
    Then, we discuss the potential difficulties along with our solutions to advance PLST to our proposed DPP.
      To facilitate reading, the common notations that will be used for the coming sections are summarized in Table~\ref{tbl:notation}.

    \subsection{Principal Label Space Transformation} \label{sec:PLST}

    Given the dimension $M \leq K$ of the code space and a batch training dataset
    $\mathcal{D} = \{(\Mv x_n, \Mv y_n)\}_{n=1}^N$, PLST, as a batch LSDR algorithm, encodes each 
    $\Mv y_n \in \{+1, -1\}^K$ into a code vector $\Mv z_n = \Mv P^* (\Mv y_n - \Mv o)$, where $\Mv o$ is a fixed reference point for shifting~$\Mv y_n$, and 
    $\Mv P^*$ contains the top~$M$ eigenvectors of $\sum_{n=1}^N (\Mv y_n - \Mv o) (\Mv y_n - \Mv o)^\top$. While PLST works with any fixed~$\Mv o$, it is worth noting that when $\Mv o$ is taken as $\frac{1}{N} \sum_{n=1}^N \Mv y_n$, the code vector~$\Mv z_n$ contains the top $M$ principal components of $\Mv y_n$.
    A multi-target regressor $\Mv r$ is then learned on $\{(\Mv x_n, \Mv z_n)\}_{n=1}^N$, 
    and the prediction of an unseen instance~$\Mv x$ is made by
    \begin{equation}\label{eq:PLST_predict}
    \Mvh y = \mathrm{round}\left((\Mv P^*)^\top \Mv r(\Mv x)  + \Mv o\right) 
    \end{equation}
    where%
    \footnote{\journalminor{The naming of the $\mathrm{round}(\cdot)$ operator follows directly from the original paper of PLST~\cite{plst}, which represents $\Mv y \in \{0, 1\}^K$ instead of $\{-1, +1\}^K$. Our use of $\mathrm{sign}$ is thus equivalent to the rounding steps used in the original PLST.}}
    $
    \mathrm{round}(\Mv v) = 
    \bigl( \mathrm{sign}(\Mv v[1]), \hdots, \mathrm{sign}(\Mv v[K]) \bigr)^\top$. 

    By projecting to the top principal components, PLST preserves the maximum amount of information within the observed label vectors. In addition, PLST is backed by the following theoretical guarantee:
    \begin{theorem}\cite{plst}\label{thm:hamming_bound}
    When making a prediction~$\Mvh y$ from $\Mv x$ by $\Mvh y = \mathrm{round}\left(\Mv P^\top \Mv r(\Mv x) + \Mv o\right)$ with any left orthogonal matrix $\Mv P$, the Hamming loss
    \begin{equation}\label{eq:hamming_bound}
    c_{\textsc{ham}}(\Mv y, \Mvh y) \leq \frac{1}{K} (\underbrace{\|\Mv r(\Mv x) - \Mv z\|^2_2}_{\text{pred. error}} + 
    \underbrace{\|(\Mv I - \Mv P^\top\Mv P)(\Mv y')\|^2_2}_{\text{reconstruction error}})
    \end{equation}
    where $\Mv z \equiv \Mv P \Mv y'$ and $\Mv y' \equiv \Mv y - \Mv o$ with respect to any fixed reference point $\Mv o$.
    \end{theorem}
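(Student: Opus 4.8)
The plan is to bound $K\cdot c_{\textsc{ham}}(\Mv y, \Mvh y)$ coordinate-by-coordinate, converting the count of sign disagreements into a squared Euclidean distance in the label space, and then to split that distance \emph{exactly} via Pythagoras into the prediction-error and reconstruction-error terms. Write $\Mvt y \equiv \Mv P^\top \Mv r(\Mv x) + \Mv o$, so that $\Mvh y[k] = \mathrm{sign}(\Mvt y[k])$. The first step is the elementary fact that, because $\Mv y[k] \in \{+1,-1\}$, a disagreement $\Mv y[k] \neq \Mvh y[k]$ forces $\Mvt y[k]$ to lie weakly on the wrong side of $0$, hence $|\Mv y[k] - \Mvt y[k]| \geq 1$ and therefore $\bfunc{\Mv y[k] \neq \Mvh y[k]} \leq (\Mv y[k] - \Mvt y[k])^2$. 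Summing over $k$ and using $\Mv y[k] - \Mvt y[k] = \Mv y'[k] - (\Mv P^\top \Mv r(\Mv x))[k]$ gives $K\cdot c_{\textsc{ham}}(\Mv y, \Mvh y) \leq \|\Mv y' - \Mv P^\top \Mv r(\Mv x)\|_2^2$.

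It then remains to show $\|\Mv y' - \Mv P^\top \Mv r(\Mv x)\|_2^2 = \|(\Mv I - \Mv P^\top \Mv P)\Mv y'\|_2^2 + \|\Mv r(\Mv x) - \Mv z\|_2^2$. I would decompose $\Mv y' - \Mv P^\top \Mv r(\Mv x) = (\Mv I - \Mv P^\top \Mv P)\Mv y' + \Mv P^\top(\Mv P \Mv y' - \Mv r(\Mv x))$, where $\Mv P \Mv y' = \Mv z$. Left-orthogonality $\Mv P \Mv P^\top = \Mv I$ gives $(\Mv I - \Mv P^\top \Mv P)\Mv P^\top = \Mv P^\top - \Mv P^\top(\Mv P \Mv P^\top) = \Mv 0$, so the two summands are orthogonal and Pythagoras yields $\|\Mv y' - \Mv P^\top \Mv r(\Mv x)\|_2^2 = \|(\Mv I - \Mv P^\top \Mv P)\Mv y'\|_2^2 + \|\Mv P^\top(\Mv z - \Mv r(\Mv x))\|_2^2$. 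Finally $\|\Mv P^\top \Mv v\|_2^2 = \Mv v^\top \Mv P \Mv P^\top \Mv v = \|\Mv v\|_2^2$, so the last term equals $\|\Mv r(\Mv x) - \Mv z\|_2^2$; dividing by $K$ completes the argument.

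Every step is routine linear algebra, so I do not expect a real obstacle; the two points that deserve care are the boundary case $\Mvt y[k] = 0$ in the per-coordinate inequality, which must be handled consistently with the tie-breaking convention for $\mathrm{sign}(\cdot)$ so that $|\Mv y[k] - \Mvt y[k]| \geq 1$ still holds, and the choice of the additive split above, which is exactly what makes the two pieces orthogonal. It is worth emphasizing that the hypothesis genuinely used is \emph{left}-orthogonality $\Mv P \Mv P^\top = \Mv I$ (orthonormal rows), not full orthogonality: this is precisely what drives both the Pythagorean split and the norm-preservation identity $\|\Mv P^\top \Mv v\|_2 = \|\Mv v\|_2$, and it is the property enjoyed by the top-$M$ eigenvector matrix $\Mv P^*$ used by PLST.
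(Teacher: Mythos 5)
Your proof is correct and takes essentially the same route as the paper: the per-coordinate bound $\bfunc{\Mv y[k]\neq\Mvh y[k]}\le(\Mv y[k]-\Mvt y[k])^2$ followed by the Pythagorean split of $\|\Mv y'-\Mv P^\top\Mv r(\Mv x)\|_2^2$ using $\Mv P\Mv P^\top=\Mv I$ is exactly the argument the paper gives for the cost-sensitive generalization (Theorem~\ref{thm:cost_decomp} via Lemma~\ref{lem:1}), which it states closely follows the original PLST proof of this theorem. Your explicit handling of the tie case $\Mvt y[k]=0$ and the remark that only left-orthogonality is needed are both consistent with, and slightly more careful than, the paper's presentation.
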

    Theorem~\ref{thm:hamming_bound} bounds the Hamming loss by the prediction and reconstruction errors. Based on the results of singular value decomposition, $\Mv P^*$ in PLST is the optimal solution for minimizing the total reconstruction error of the observed label vectors with respect to any fixed $\Mv o$, and the particular reference point $\frac{1}{N} \sum_{n=1}^N \Mv y_n$ minimizes the reconstruction error over all possible $\Mv o$.
    Then, by minimizing the prediction error with regressor $\Mv r$, PLST is able to minimize the Hamming loss approximately.


    \subsection{General Online LSDR Framework for DPP}

    The upper bound in Theorem~\ref{thm:hamming_bound} works for any regressor $\Mv r$ and any left orthogonal encoding matrix $\Mv P$. Based on the bound, we propose an online LSDR framework that approximately minimizes the Hamming loss with an online regressor $\Mv r_t$ and an online encoding matrix $\Mv P_t$ in each iteration $t$. Similar to PLST, the proposed framework works with any fixed referenced point $\Mv o$. But for simplicity of illustration, we assume that $\Mv o = \Mv 0$ to remove $\Mv o$ from the derivations below. The steps of the framework are:

    \vspace*{.5\baselineskip}
    \fbox{\begin{minipage}[c]{.6\textwidth}
    \noindent For $t = 1,\hdots,T$ \\
    \hspace*{1em} Receive $\Mv x_t$ and predict $\Mvh y_t = \mathrm{round}(\Mv P^\top_t \Mv r_t(\Mv x_t))$ \\
    \hspace*{1em} Receive $\Mv y_t$ and incur error $\ell^{(t)}(\Mv r_t, \Mv P_t)$  \\
    \hspace*{1em} Update $\Mv P_t$ and $\Mv r_t$ 
    \end{minipage}
    }

    \vspace*{.5\baselineskip}
    \noindent In each iteration $t$ of the framework, an online prediction $\Mvh y_t$ is made with the updated $\Mv r_t$ and $\Mv P_t$. We take the online error function $\ell^{(t)}(\Mv r, \Mv P)$ to be $\|\Mv r(\Mv x_t) - \Mv P \Mv y_t\|_2^2 + \|(\Mv I - \Mv P^\top \Mv P)\Mv y_t\|_2^2$, which upper bounds the Hamming loss $c_{\textsc{ham}}(\Mv y_t, \Mvh y_t)$ of the online prediction. Then, by updating $\Mv r_t$ and $\Mv P_t$ with online learning algorithms that minimize the cumulative online error $\sum_{t=1}^T \ell^{(t)}(\Mv r_t, \Mv P_t)$, we can approximately minimize the cumulative Hamming loss.

    The simple framework above transforms the OMLC problem to an online learning problem with an error function composed of two terms. 
      Ideally, the online learning algorithm should update $\Mv P_t$ and $\Mv r_t$ to jointly minimize the total error from both terms.
      Optimizing the two terms jointly has been studied in batch LSDR algorithms like CPLST~\cite{cplst}, which is a successor of PLST~\cite{plst} that
      also operates with the upper bound in Theorem~\ref{thm:hamming_bound}.
      Nevertheless, it is very challenging to extend CPLST to the online setting efficiently. In particular, a
      na{\i}ve online extension would require computing the hat matrix of the ridge regression part (from $\Mv x$ to $\Mv z$) within CPLST in order
      to obtain $\Mv P_t$, 
      and the hat matrix grows quadratically with the number of examples. That is, in an online setting, computing and storing the hat matrix needs at
      least $\Omega(T^2)$ complexity up to iteration $T$, which is practically infeasible.

      Thus, we resort to PLST~\cite{plst}, the predecessor of CPLST, to make an initial attempt towards tackling OMLC problems. PLST minimizes the two terms separately in the batch setting, and our proposed extension of PLST similarly contains two online learning algorithms, one for minimizing each term.
      That is, we further decompose the online learning problem to two sub-problems, one for minimizing the cumulative reconstruction error (by updating~$\Mv P_t$), and one for minimizing the cumulative prediction error (by updating $\Mv r_t$). 
      Designing efficient and effective algorithms for the two sub-problems turns out to be non-trivial,
      and will be discussed
    in Sections~\ref{sec:P_learning} and \ref{sec:W_learning}.


    \subsection{Online Minimization of Reconstruction Error} \label{sec:P_learning}

    Next, we discuss the design of our first online learning algorithm to tackle the sub-problem of minimizing the cumulative reconstruction error  
    $\sum_{t=1}^T \|(\Mv I - \Mv P_t^\top \Mv P_t)\Mv y_t\|_2^2$, which corresponds to the second term in (\ref{eq:hamming_bound}). The goal is to generate a left-orthogonal matrix $\Mv P_t \in \mathbb{R}^{M \times K}$ in each iteration that guarantees to minimize the cumulative reconstruction error theoretically.

    Our design is motivated by a simple but promising online PCA algorithm,  
    matrix stochastic gradient (MSG)~\cite{online_pca_sgd}. 
    MSG does not directly solve the sub-problem of our interest because the problem is non-convex over $\Mv P_t$. Instead,
    MSG substitutes $\Mv P_t^\top \Mv P_t$ with a rank-$M$ matrix $\Mv U_t \in \mathbb{R}^{K \times K}$ and rewrites the cumulative reconstruction error as $\sum_{t=1}^T \Mv y_t^\top (\Mv I - \Mv U_t) \Mv y_t$.
    By further assuming that $\|\Mv y_t\|_2 \leq 1$, MSG loosens the constraint of $rank(\Mv U_t) = M$ to $tr(\Mv U_t) = M$, 
and updates $\Mv U_t$ with online projected gradient descent upon receiving a new~$\Mv y_t$ as
\begin{equation}\label{eq:MSG update}
\begin{aligned}
\Mv U_{t+1} = \mathcal{P}_{tr}(\Mv U_t + \eta \Mv y_t \Mv y_t^\top)
\end{aligned}
\end{equation}
where $\eta$ is the learning rate and $\mathcal{P}_{tr}(\cdot)$ is the projecting operator to a feasible $\Mv U$. The less-constrained $\Mv U_t$ in MSG carries the theoretical guarantee of minimizing the cumulative reconstruction error (subject to $\Mv U_t$), but decomposing $\Mv U_t$ to a left-orthogonal $\Mv P_t \in \mathbb{R}^{M \times K}$ with theoretical guarantee on~$\Mv P_t$ is not only non-trivial but also time-consuming.

Capped MSG~\cite{online_pca_sgd} is an extension of MSG with the \journallast{hope of lightening the computational} burden of decomposing $\Mv U_t$.
In particular, Capped MSG introduces an additional (non-convex) constraint of $rank(\Mv U_t) \leq M + 1$,
and indirectly maintains the decomposition of $\Mv U_t$ 
as $(\Mv Q_t, \Mv \sigma_t)$, where
the left-orthogonal matrix $\Mv Q_t \in \mathbb{R}^{(M+1)\times K}$ and the vector of singular values $\sigma_t \in \mathbb{R}^{M+1}$ 
such that $\Mv U_t = \Mv Q_t \mbox{diag}(\sigma_t) \Mv Q_t^\top$.
The decomposed $(\Mv Q_t, \Mv \sigma_t)$ in Capped MSG enjoys the same theoretical guarantee of minimizing the reconstruction error as the $\Mv U_t$ in MSG,
while the maintenance step of Capped MSG is more efficient than MSG.
Nevertheless, because we want $\Mv P_t$ to be $M$ by $K$ while $\Mv Q_t$ is $(M\!+\!1)$ by $K$,
the generated $\Mv Q_t$ in Capped MSG cannot be directly used to solve our sub-problem.
A na{\"i}ve idea is to generate $\Mv P_t$ by truncating the least important row of $\Mv Q_t$, but the na{\"i}ve idea is no longer backed by the theoretical guarantee of Capped MSG.


Aiming to address the above difficulties, we propose an efficient and effective algorithm to stochastically 
generate $\Mv P_t$ from $(\Mv Q_t, \sigma_t)$ maintained by Capped MSG in each iteration. 
To elaborate, let $\Mv Q_t^{-i}$ be $\Mv Q_t$ with its $i$-th row removed and $\sigma_t[i]$ be the eigenvalue corresponding to $i$-th row of $\Mv Q_t$.
We generate $\Mv P_t$ by sampling from a discrete probability distribution $\Gamma_t$,
which consists of $M+1$ events  $\{\Mv Q_t^{-i}\}_{i=1}^{M+1}$ 
with probability of $\Mv Q_t^{-i}$ being $1-\sigma_t[i]$.
As the projecting operator $\mathcal{P}_{tr}(\cdot)$ ensures $0 \leq \sigma_t[i] \leq 1$ for each $\sigma_t[i]$,
one can easily verify $\Gamma_t$ to be a valid distribution with the additional fact that $\sum_{i}\sigma_t[i] = tr(\Mv U_t) = M$.
The following lemma shows that the online encoding matrix generated by our simple stochastic algorithm is truly effective, 
and the proof can be found in the supplementary materials.
\begin{lemma}\label{lem:P sampling}
Suppose $(\Mv Q_t, \sigma_t)$ is obtained after an updated of Capped MSG such that $\Mv U_t = \Mv Q_t diag(\sigma_t) \Mv Q_t^\top$.
If $\Gamma_t$ is a discrete probability distribution over events $\{\Mv Q_t^{-i}\}_{i=1}^{M+1}$ with probability of $\Mv Q_t^{-i}$ being $1-\sigma_t[i]$,
we have for any $\Mv y$ 
\begin{equation}\label{eq:P sampling}
\mathbb{E}_{\Mv P_t \sim \Gamma_t} [\Mv y^\top (\Mv I - \Mv P_t^\top \Mv P_t) \Mv y ] = \Mv y^\top (\Mv I - \Mv U_t) \Mv y
\end{equation}
\end{lemma}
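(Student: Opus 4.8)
The plan is to reduce the identity to computing a single \emph{deterministic} matrix, namely $\mathbb{E}_{\Mv P_t\sim\Gamma_t}[\Mv P_t^\top\Mv P_t]\in\mathbb{R}^{K\times K}$, and showing it equals $\Mv U_t$. Since $\Mv y$ is fixed and the map $\Mv A\mapsto\Mv y^\top(\Mv I-\Mv A)\Mv y$ is affine in $\Mv A$, linearity of expectation immediately gives
\[
\mathbb{E}_{\Mv P_t\sim\Gamma_t}\big[\Mv y^\top(\Mv I-\Mv P_t^\top\Mv P_t)\Mv y\big]=\Mv y^\top\big(\Mv I-\mathbb{E}_{\Mv P_t\sim\Gamma_t}[\Mv P_t^\top\Mv P_t]\big)\Mv y,
\]
so it suffices to prove $\mathbb{E}_{\Mv P_t\sim\Gamma_t}[\Mv P_t^\top\Mv P_t]=\Mv U_t$.

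First I would rewrite $\Mv P_t^\top\Mv P_t$ in terms of the rows of $\Mv Q_t$. Let $\Mv q_1^\top,\dots,\Mv q_{M+1}^\top$ denote the rows of $\Mv Q_t$, so each $\Mv q_i\in\mathbb{R}^K$ and, by left-orthogonality of $\Mv Q_t$, $\Mv q_i^\top\Mv q_j=\bfunc{i=j}$. Deleting the $i$-th row gives $\Mv P_t=\Mv Q_t^{-i}$ with $(\Mv Q_t^{-i})^\top\Mv Q_t^{-i}=\sum_{j\neq i}\Mv q_j\Mv q_j^\top$; since this occurs with probability $1-\sigma_t[i]$ under $\Gamma_t$,
\[
\mathbb{E}_{\Mv P_t\sim\Gamma_t}[\Mv P_t^\top\Mv P_t]=\sum_{i=1}^{M+1}(1-\sigma_t[i])\sum_{j\neq i}\Mv q_j\Mv q_j^\top .
\]
Next I would interchange the two sums and collect the coefficient of each $\Mv q_j\Mv q_j^\top$, which is $\sum_{i\neq j}(1-\sigma_t[i])=\big(\sum_{i=1}^{M+1}(1-\sigma_t[i])\big)-(1-\sigma_t[j])$. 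The crucial input here is the trace constraint that Capped MSG maintains: $\sum_{i=1}^{M+1}\sigma_t[i]=tr(\Mv U_t)=M$, hence $\sum_{i=1}^{M+1}(1-\sigma_t[i])=(M+1)-M=1$, and the coefficient collapses to $1-(1-\sigma_t[j])=\sigma_t[j]$. Therefore $\mathbb{E}_{\Mv P_t\sim\Gamma_t}[\Mv P_t^\top\Mv P_t]=\sum_{j=1}^{M+1}\sigma_t[j]\,\Mv q_j\Mv q_j^\top=\Mv U_t$, which is exactly the decomposition $\Mv U_t=\Mv Q_t\,\mathrm{diag}(\sigma_t)\,\Mv Q_t^\top$ carried by the pair $(\Mv Q_t,\sigma_t)$; substituting back into the affine identity finishes the proof.

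There is no real obstacle here: the computation is short once the expectation is pushed through the quadratic form. The only two points to be careful about are the bookkeeping when the double sum is transposed, and — more importantly — the explicit use of $\sum_i\sigma_t[i]=M$ (equivalently that $\Gamma_t$ is a genuine probability distribution, already established just before the lemma from $0\le\sigma_t[i]\le1$ and $tr(\Mv U_t)=M$); without this trace identity the coefficients would not simplify to $\sigma_t[j]$ and the claim would fail. It is also worth recording in passing that every realisation $\Mv P_t=\Mv Q_t^{-i}$ is itself left-orthogonal, since deleting one row of a left-orthogonal matrix leaves the remaining rows orthonormal, so the sampled $\Mv P_t$ is always an admissible encoding matrix for Theorem~\ref{thm:hamming_bound}.
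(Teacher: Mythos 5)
Your proof is correct and follows essentially the same route as the paper's: both reduce the claim via linearity of expectation to showing $\mathbb{E}_{\Mv P_t\sim\Gamma_t}[\Mv P_t^\top\Mv P_t]=\Mv U_t$, expand the expectation over the rank-one terms $\Mv q_j\Mv q_j^\top$, and collapse the coefficients to $\sigma_t[j]$ using the trace identity $\sum_i\sigma_t[i]=tr(\Mv U_t)=M$. Your closing remark that each $\Mv Q_t^{-i}$ remains left-orthogonal is a sensible extra observation but not needed for the identity itself.
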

The proof of the lemma can be found in \ref{proof:P sampling}.
Moreover, our sampling algorithm is highly efficient regarding its $\mathcal{O}(M)$ time complexity.
\journallast{
  Note that there is an earlier work that contains another algorithm of similar spirit~\cite{online_pca_optimal_regret}. Somehow the algorithm's time complexity is
  $\mathcal{O}(K^2)$, which is less efficient than ours.
  }

To sum up, our online learning algorithm that minimizes the cumulative reconstruction error for DPP takes Capped MSG as its building block
to maintain $\Mv U_t$ by $\Mv Q_t$ and $\sigma_t$, and then samples the online encoding matrix $\Mv P_t$ from $\Gamma_t$ derived by $\Mv Q_t$ in each iteration by our proposed sampling algorithm.
Note that to fulfill the assumption of $\|\Mv y_t\|_2 \leq 1$ required by Capped MSG,
we apply a simple trick to scale each $\Mv y_t \in \{+1,-1\}^K$ with a factor of $\frac{1}{\sqrt{K}}$.
The predictions given by our online LSDR framework remain unchanged after the constant scaling due to the use of $\mathrm{round}(\cdot)$ operator. 

\subsection{Online Minimization of Prediction Error}\label{sec:W_learning}
Next, we discuss another proposed online learning algorithm to 
solve the second sub-problem of minimizing the cumulative prediction error $\sum_{t=1}^T \|\Mv r_t(\Mv x_t) - \Mv P_t \Mv y_t\|^2$, 
which corresponds to the first term in (\ref{eq:hamming_bound}).
The proposed online learning algorithm is based on the well-known online ridge regression, 
and incorporates two different carefully designed techniques to remedy the negative effect caused by the variation of $\Mv P_t$ in each iteration. 

The na{\"i}ve online ridge regression parameterizes  $\Mv r_t(\Mv x)$ 
to be an online linear regressor $\Mv W_t^\top \Mv x$ with $\Mv W_t \in \mathbb{R}^{d \times M}$,
and update $\Mv W_t$ by 
\begin{equation}\label{eq:W_naive}
\Mv W_t = \argmin{\Mv W} \frac{\lambda}{2} tr(\Mv W \Mv W^\top) + \sum_{i=1}^{t-1} \|\Mv W^\top \Mv x_i - \Mv z_i \|_2^2 
\end{equation}
where $\Mv z_i = \Mv P_i \Mv y_i$ is the code vector of $\Mv y_i$ regarding $\Mv P_i$, and $\lambda$ is the regularization parameter.
However, the na{\"i}ve online ridge regression suffers from the drifting of projection basis 
caused by \journallast{varying the online} encoding matrix $\Mv P_t$ as $t$ advances. 
To elaborate, recall that the online regressor $\Mv W_t$ aims to predict $\Mv z_t = \Mv P_t \Mv y_t$ from $\Mv x_t$,
where the code vector $\Mv z_t$ can essentially be viewed as the set of combination coefficients with reference projection basis formed by $\Mv P_t$. 
However, $\Mv W_t$ is learned from $\{(\Mv x_i, \Mv z_i)\}_{i=1}^{t-1}$, 
where the learning target $\{\Mv z_i\}_{i=1}^{t-1}$ is mixed up with coefficients $\Mv z_i$ induced from different projection basis $\Mv P_i$.
As a consequence, expecting $\Mv W_t^\top \Mv x_t$ to give accurate prediction of $\Mv z_t$ for any specific $\Mv P_t$ is unrealistic.
  For a very extreme case, if $\Mv P_1 = \Mv P_3 = \ldots = \Mv P_{2\tau-1} = \Mv P$ and $\Mv P_2 = \Mv P_4 = \ldots = \Mv P_{2\tau} = - \Mv P$, the $\Mv z_i$'s in the odd and even iterations are of totally opposite meanings although the projection matrices $\Mv P$ and $-\Mv P$ are mathematically equivalent in quality. The totally opposite meanings make it impossible for $\Mv W_t$ to predict $\Mv z_t$ accurately.

To remedy the problem of basis drifting, we propose two different techniques, principal basis correction (PBC) and principal basis transform (PBT),
to improve online regressor $\Mv W_t$.
Each of them enjoys different advantages.

\subsubsection{Principal Basis Correction}
The ideal solution to handle basis drifting is to ``correct'' the reference basis of each $\Mv z_i$ to be the latest $\Mv P_t$ used for prediction. 
More specifically, we want $\Mv W_t$ to be the ridge regression solution obtained from $\{(\Mv x_i, \Mv P_t \Mv y_i)\}_{i=1}^{t-1}$
instead of $\{(\Mv x_i, \Mv P_i \Mv y_i)\}_{i=1}^{t-1}$.
  Such a correction step ensures that the reference basis \journallast{for generating the previous} $\Mv z_i$'s is the same as \journallast{the basis that will be} used for the predicting $\Mv z_t$ and decoding $\Mvh y_t$ from $\Mv z_t$. Denote $\Mv W^{\text{PBC}}_t$ as the  ridge regression solution of $\{(\Mv x_i, \Mv P_t \Mv y_i)\}_{i=1}^{t-1}$. The closed-form solution of $\Mv W^{\text{PBC}}_t$ is
\begin{equation}\label{eq:W_PBC}
\Mv W^{\text{PBC}}_t = \underbrace{(\lambda \Mv I + \sum_{i=1}^{t-1} \Mv x_i \Mv x_i^\top)^{-1}}_{\Mv A_{t}^{-1}}\underbrace{(\sum_{i=1}^{t-1} \Mv x_i \Mv y_i^\top)}_{\Mv B_{t}} \Mv P_t^\top \; \journallast{.} 
\end{equation}
  The part $\Mv A_t^{-1} \Mv B_t$ is independent of the projection matrix $\Mv P_t$. Thus,
  by maintaining another $d$ by $K$ matrix \begin{eqnarray*}
    \Mv H_t = \Mv A_t^{-1} \Mv B_t \label{eq:ht}
  \end{eqnarray*}
    throughout the iterations,
  $\Mv W^{\text{PBC}}_t$ can be easily obtained by $\Mv H_t \Mv P_t^\top$ for any~$\Mv P_t$.
\journalminor{
  The update of $\Mv H_t$ to $\Mv H_{t+1}$, on the other hand, 
  requires the calculation of $\Mv H_{t+1} = (\Mv A_t + \Mv x_t \Mv x_t^\top)^{-1}(\Mv B_t + \Mv x_t \Mv y_t^\top)$,
  which at a first glance \journallast{has a time complexity of $\mathcal{O}(d^3 + Kd^2)$.}
  Fortunately, we can speed up the calculation by applying the Sherman-Morrison formula, which states that
  \begin{eqnarray*}
    (\Mv A_t + \Mv x_t \Mv x_t^\top)^{-1} &=& \left(\Mv A_t^{-1} - \frac{\Mv A_t^{-1} \Mv x_t \Mv x_t^\top \Mv A_t^{-1}}{1 + \gamma}\right)
  \end{eqnarray*}
  with $\gamma = \Mv x_t^\top \Mv A_t^{-1} \Mv x_t$.
  Then, the calculation can be rewritten as
  \begin{align*}
    \Mv H_{t+1} & = \left(\Mv A_t^{-1} - \frac{\Mv A_t^{-1} \Mv x_t \Mv x_t^\top \Mv A_t^{-1}}{1 + \gamma}\right)\left(\Mv B_t + \Mv x_t \Mv y_t^\top\right)  \\
    & = \Mv A_t^{-1} \Mv B_t - \frac{\Mv A_t^{-1} \Mv x_t \Mv x_t^\top \Mv A_t^{-1} \Mv B_t}{1 + \gamma} + \Mv A_t^{-1} \Mv x_t \Mv y_t^\top  - \frac{\Mv A_t^{-1} \Mv x_t \Mv x_t^\top \Mv A_t^{-1} \Mv x_t \Mv y_t^\top}{1 + \gamma} \\
              & = \Mv H_t - \frac{\Mv A_t^{-1}\Mv x_t \Mvt y_t^\top}{1 + \gamma} + \Mv A_t^{-1} \Mv x_t \Mv y_t^\top - \frac{\gamma \Mv A_t^{-1} \Mv x_t \Mv y_t^\top}{1+\gamma}\\
              & = \Mv H_{t} - \frac{\Mv A_{t}^{-1} \Mv x_{t} (\Mvt y_{t} - \Mv y_{t})^\top}{1 + \gamma}, \\
  \end{align*}
  where $\Mvt y_t = \Mv H_t^\top \Mv x_t$. The third line follows from the fact that $\Mv H_t = \Mv A_t^{-1} \Mv B_t$. 
Thus, the $d$ by~$K$ matrix $\Mv H_t$ can be efficiently updated online by
\begin{equation}\label{eq:W update}
\Mv H_{t+1} = \Mv H_{t} - \frac{\Mv A_{t}^{-1} \Mv x_{t} (\Mvt y_{t} - \Mv y_{t})^\top}{1 + \Mv x_{t}^\top \Mv A_{t}^{-1} \Mv x_{{t}}}
\end{equation}
which requires \journallast{only a time complexity of $\mathcal{O}(d^2 + Kd)$.}
}

\journalminor{
It is worth noting that $\Mv H_t$ actually stores the online ridge regression solution from $\Mv x$ to~$\Mv y$. Based on the definition
of $\Mv H_t$, we can then theoretically analyze the performance of our online ridge regression solution $\Mv W_t^{\textsc{PBC}}$ from $\Mv x$ to $\Mv z$ with respect to the error $\ell^{(t)}(\cdot,\cdot)$ in our proposed online LSDR framework.
}
Following the convention of online learning, we analyze the expected average regret $\frac{\mathcal{R}}{T}$, defined as
\begin{eqnarray}
  \frac{\mathcal{R}}{T} = \frac{1}{T}\sum_{t=1}^T\mathbb{E}_{\Mv P_t\sim \Gamma_t}[\ell^{(t)}(\Mv W_t^{\text{PBC}}, \Mv P_t) - \ell^{(t)}(\Mv W_\#, \Mv P^*)], \label{eq:regret}
\end{eqnarray}
for any given sequence of $\{(\Mv P_t, \Gamma_t)\}_{t=1}^T$, where each $\Mv P_t$ is sampled from the distribution~$\Gamma_t$. 
$(\Mv W_\#, \Mv P^\ast)$ here denotes the offline reference algorithm that is allowed to peek the whole data stream $\{(\Mv x_t, \Mv y_t)\}_{t=1}^T$. 
As our algorithm aims to minimize the online error function by a similar decomposition of sub-problems as PLST
, we particularly consider $(\Mv W_\#, \Mv P^\ast)$ to be the solution of PLST when treating $\{(\Mv x_t, \Mv y_t)\}_{t=1}^T$ as the input batch data.
That is,
$\Mv P^*$ is the minimizer of $\sum_{t=1}^T \Mv y_t^\top(\Mv I-\Mv P^\top \Mv P) \Mv y_t$, which corresponds to the second term of $\ell^{(t)}(\cdot,\cdot)$,
and $\Mv W_\#$  is the minimizer of $\sum_{t=1}^T\|\Mv W^\top \Mv x_t - \Mv P^* \Mv y_t\|_2^2$, which corresponds to the first term of $\ell^{(t)}(\cdot,\cdot)$ given $\Mv P^*$.
It can be easily proved that $\Mv W_\# = \Mv H^* (\Mv P^*)^\top$ where $\Mv H^*$ is the optimal linear regression solution of $\{(\Mv x_t, \Mv y_t)\}_{t=1}^T$. That is,
\begin{eqnarray}
\Mv H^* = \argmin{\Mv H} \sum_{t=1}^T \|\Mv H^\top \Mv x_t - \Mv y_t\|_2^2 \; . \label{eq:hstar}
\end{eqnarray}


\journalminor{
  With the expected average regret defined, we can prove its convergence by assuming the convergence of the subspace spanned by $\Mv P_t$ to the subspace spanned by $\Mv P^*$. The assumption generally holds when the $M$-th and $(M+1)$-th eigenvalues of $\sum_{t=1}^T (\Mv y_t - \Mv o) (\Mv y_t - \Mv o)^\top$ are different, as the subspace spanned by $\Mv P^*$ to reach the minimum reconstruction error is consequently unique. In particular, define the expected subspace difference
  \begin{eqnarray}
    \Delta_t = \|\mathbb{E}_{\Mv P_t \sim \Gamma_t}[\Mv P_t^\top \Mv P_t] - \left(\Mv P^*\right)^\top \Mv P^*\|_2 \; \journallast{.} \label{eq:subspace}
  \end{eqnarray}  
  \begin{theorem}\label{thm:DPP_drift} With the definitions of $\Mv H_t$ in \eqref{eq:ht}, $\Mv H^*$ in \eqref{eq:hstar}, $\frac{\mathcal{R}}{T}$ in \eqref{eq:regret} and $\Delta_t$ in \eqref{eq:subspace}, assume that $\|\Mv x_t\| \le 1$, $\|\Mv y_t\| \le 1$ and
    $\|\Mv H_t \Mv x_t - \Mv y_t\|_2^2 \le \epsilon$.
    \begin{enumerate}
    \item For any given $T$, the expected cumulative regret $\mathcal{R}$ is upper-bounded by
      \[
      (1+\epsilon) \sum_{t=1}^T \Delta_t + \frac{M}{2} \|\Mv H^*\|_F^2 + 2 \epsilon Md \log \left(1 + \frac{T}{d}\right).
      \]
    \item If $\lim_{T \rightarrow \infty} \Delta_T = 0$ and $\|\Mv H^*\|_F \le h^*$ across all iterations,\footnote{
      The technicality of requiring $\|\Mv H^*\|_F$ to be bounded is because we defined regret (up to the $T$-th iteration) with respect to the optimal offline solution upon receiving $T$ examples, and hence $\Mv H^*$ depends on $T$. Standard regret proof in online learning alternatively defines regret with respect to any \textit{fixed} $\Mv H$.  Our proof could also go through with the alternative definition, which changes $\|\Mv H^*\|_F$ to a constant $\|\Mv H\|_F$ (that is trivially bounded).
      }
      $\lim_{T \rightarrow \infty} \frac{\mathcal{R}}{T} = 0$.
    \end{enumerate}
  \end{theorem}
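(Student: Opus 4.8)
The plan is to control the expected instantaneous regret $\mathbb{E}_{\Mv P_t\sim\Gamma_t}\bigl[\ell^{(t)}(\Mv W_t^{\text{PBC}},\Mv P_t)-\ell^{(t)}(\Mv W_\#,\Mv P^*)\bigr]$ term by term and then sum over $t$. Write $\Mv e_t=\Mv H_t^\top\Mv x_t-\Mv y_t$ and $\Mv e_t^*=(\Mv H^*)^\top\Mv x_t-\Mv y_t$, and note that $\Mv H_t$, $\Mv U_t$ and $\Gamma_t$ are deterministic functions of $\{(\Mv x_i,\Mv y_i)\}_{i\le t}$, so the only randomness at step $t$ is the draw $\Mv P_t\sim\Gamma_t$ and $\Mv e_t$ is independent of it. First I would put both errors in closed form. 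Since $\Mv W_t^{\text{PBC}}=\Mv H_t\Mv P_t^\top$ and $\Mv P_t$ is left orthogonal, $\ell^{(t)}(\Mv W_t^{\text{PBC}},\Mv P_t)=\Mv e_t^\top(\Mv P_t^\top\Mv P_t)\Mv e_t+\Mv y_t^\top(\Mv I-\Mv P_t^\top\Mv P_t)\Mv y_t$; taking $\mathbb{E}_{\Mv P_t\sim\Gamma_t}$ and using Lemma~\ref{lem:P sampling} — which, holding for every vector, is equivalent to $\mathbb{E}_{\Mv P_t\sim\Gamma_t}[\Mv P_t^\top\Mv P_t]=\Mv U_t$ — together with the independence of $\Mv e_t$, this expectation equals $\Mv e_t^\top\Mv U_t\Mv e_t+\Mv y_t^\top(\Mv I-\Mv U_t)\Mv y_t$. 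Likewise, with $\Mv W_\#=\Mv H^*(\Mv P^*)^\top$ and $\Mv\Pi^*:=(\Mv P^*)^\top\Mv P^*$ (a projection, by left-orthogonality of $\Mv P^*$), $\ell^{(t)}(\Mv W_\#,\Mv P^*)=\Mv e_t^{*\top}\Mv\Pi^*\Mv e_t^*+\Mv y_t^\top(\Mv I-\Mv\Pi^*)\Mv y_t$.

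Subtracting, the expected instantaneous regret splits into three pieces: $\Mv e_t^\top(\Mv U_t-\Mv\Pi^*)\Mv e_t$, $\Mv y_t^\top(\Mv\Pi^*-\Mv U_t)\Mv y_t$, and $\Mv e_t^\top\Mv\Pi^*\Mv e_t-\Mv e_t^{*\top}\Mv\Pi^*\Mv e_t^*$. The first two are bounded at once via $\Delta_t=\|\Mv U_t-\Mv\Pi^*\|_2$: by $\|\Mv e_t\|_2^2\le\epsilon$ and $\|\Mv y_t\|_2\le1$ they are at most $\epsilon\Delta_t$ and $\Delta_t$, so after summation they contribute $(1+\epsilon)\sum_{t=1}^T\Delta_t$. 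The crux is the third piece. Because $\Mv\Pi^*$ is a projection, $\Mv e_t^\top\Mv\Pi^*\Mv e_t=\|\Mv P^*\Mv e_t\|_2^2=\|\tilde{\Mv W}_t^\top\Mv x_t-\Mv z_t^*\|_2^2$ with $\tilde{\Mv W}_t:=\Mv H_t(\Mv P^*)^\top$ and $\Mv z_t^*:=\Mv P^*\Mv y_t$, and similarly $\Mv e_t^{*\top}\Mv\Pi^*\Mv e_t^*=\|\Mv W_\#^\top\Mv x_t-\Mv z_t^*\|_2^2$. Hence the summed third piece is precisely the cumulative regret of the online predictor $\tilde{\Mv W}_t$ against the fixed comparator $\Mv W_\#$ on the \emph{stationary} regression task $\Mv x\mapsto\Mv z^*:=\Mv P^*\Mv y$ — the basis drift has disappeared because everything is now read in the single fixed basis $\Mv P^*$ — and $\tilde{\Mv W}_t=\Mv A_t^{-1}\Mv B_t(\Mv P^*)^\top=\Mv A_t^{-1}\sum_{i<t}\Mv x_i(\Mv z_i^*)^\top$ is exactly the online ridge-regression iterate for that task.

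It then suffices to invoke the standard regret bound for online ridge regression. Treating the $M$ output coordinates of $\Mv z^*$ separately and applying the one-dimensional bound to each (the regret of the ridge iterate built from rounds $1,\dots,t-1$, against any fixed weight vector, is at most $\lambda$ times that vector's squared norm plus twice its per-round squared error summed against $\Mv x_t^\top\Mv A_{t+1}^{-1}\Mv x_t$), then using the elliptical-potential / log-determinant lemma $\sum_{t=1}^T\Mv x_t^\top\Mv A_{t+1}^{-1}\Mv x_t\le d\log(1+T/d)$ (valid since $\|\Mv x_t\|_2\le1$), the per-round bound $\|\Mv P^*\Mv e_t\|_2^2\le\|\Mv e_t\|_2^2\le\epsilon$, and $\|\Mv W_\#\|_F^2=\|\Mv H^*(\Mv P^*)^\top\|_F^2\le\|\Mv H^*\|_F^2$ (again by left-orthogonality), I would collect constants into the remaining two terms $\tfrac{M}{2}\|\Mv H^*\|_F^2+2\epsilon Md\log(1+T/d)$, establishing part~1. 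Part~2 is then immediate: dividing the part-1 bound by $T$, the second term is $O(1/T)$ because $\|\Mv H^*\|_F\le h^*$, the third is $O((\log T)/T)$, and $\tfrac1T\sum_{t=1}^T\Delta_t\to0$ by the Ces\`aro mean theorem since $\Delta_T\to0$; hence $\mathcal{R}/T\to0$.

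The step I expect to be the real obstacle is the reduction just described: one must resist comparing the online iterate to the offline optimum inside its own drifting code space (which does not work), and instead compare the online and offline residuals $\|\Mv P^*\Mv e_t\|_2$ and $\|\Mv P^*\Mv e_t^*\|_2$ \emph{both} in the fixed basis $\Mv P^*$, then recognize $\Mv H_t(\Mv P^*)^\top$ as the ridge iterate for a stationary target so that textbook analysis applies. The remaining ingredients — the closed-form rewrites, the operator-norm estimates through $\Delta_t$, the elliptical-potential lemma, and tracking the constants $\tfrac{M}{2}$ and the factor $2$ — are routine. It is also worth recording that the ``$\lim=0$'' in part~2 is the familiar no-regret statement: it is the stated upper bound on $\mathcal{R}/T$ that vanishes.
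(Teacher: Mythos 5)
Your proposal is correct and follows essentially the same route as the paper: the same three-way split of the expected instantaneous regret (the paper's $\mathcal{R}_{\text{MSG}}$, $\mathcal{R}_1$, $\mathcal{R}_2$), the same operator-norm bounds through $\Delta_t$ using $\|\Mv y_t\|\le 1$ and $\|\Mv e_t\|_2^2\le\epsilon$, and the same key reduction of the remaining term to standard online ridge regression regret for the stationary target $\Mv x\mapsto\Mv P^*\Mv y$ with comparator $\Mv W_\#=\Mv H^*(\Mv P^*)^\top$, finished by the log-determinant lemma and a Ces\`aro argument for part~2. The only cosmetic difference is how the constant in front of $\|\Mv H^*\|_F^2$ is obtained (you bound $\|\Mv H^*(\Mv P^*)^\top\|_F^2\le\|\Mv H^*\|_F^2$ directly, the paper uses the looser $\|\Mv P^*\|_F^2=M$), and both land within the stated bound.
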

}
\journalminor{
The third assumption requires the residual errors of online ridge regression without projection to be bounded, which generally holds
when there is some linear relationship between $\Mv x_t$ and $\Mv y_t$.
}
The detailed of the proof of the theorem can be found in \ref{proof:DPP_drift}.
Theorem~\ref{thm:DPP_drift} guarantees the performance of PBC to be competitive 
with a reasonable offline baseline in the long run given the convergence of subspace spanned by $\Mv P_t$. 
\journallast{Such a guarantee} makes online linear regressor with PBC a solid option for DPP to tackle the sub-problem of minimizing cumulative prediction error. 



\subsubsection{Principal Basis Transform}
While PBC always gives the $\Mv W^{\text{PBC}}_t$ learned on the correct code vectors with respect to the basis formed by $\Mv P_t$, 
the time and space complexity of PBC depends on $\Omega(Kd)$ at the cost of maintaining $\Mv H_t \in \mathbb{R}^{d\times K}$. 
The $\Omega(Kd)$ dependency can make PBC computationally inefficient when both $K$ and $d$ are large. 

To address the issue, we propose another technique, principal basis transform (PBT). 
Different from PBC, 
when a new online encoding matrix $\Mv P_{t+1}$ is presented, 
PBT aims at a direct basis transform of the online linear regressor from $\Mv P_t$ to $\Mv P_{t+1}$. 
To be more specific, PBT assumes the regressor $\Mv W_t^{\textsc{PBT}}$ to be the 
low-rank coefficients matrix of some \emph{unknown} $\Mv H_t' \in \mathbb{R}^{d \times K}$
with reference projection basis formed by $\Mv P_t$, which can equivalently be described as $\Mv W_t^{\textsc{PBT}} = \Mv H'_t \Mv P_t^\top$. 
The goal of PBT is to update $\Mv W_t^{\textsc{PBT}}$ to  $\Mv W_{t+1}^{\textsc{PBT}}$ with $(\Mv x_t, \Mv y_t)$ 
such that the reference projection basis of $\Mv W_{t+1}^{\textsc{PBT}}$ is now induced from $\Mv P_{t+1}$.
PBT achieves the goal by a two-step procedure. 
The first step is to find the low-rank coefficients matrix $\Mv W_t'$ of $\Mv H_t'$ based on the new reference basis formed by $\Mv P_{t+1}$.
However, as only the low rank coefficients matrix $\Mv W_{t}^{\textsc{PBT}}$
rather than $\Mv H_t'$ itself is known, we approximate $\Mv W_t'$ by 
\begin{equation}\label{eq:min_transform}
\Mv W_t' = \argmin{\Mv W} \|\Mv W \Mv P_{t+1} -  \Mv W_{t}^{\textsc{PBT}} \Mv P_{t} \|_F^2  \; \journallast{.}
\end{equation}
Solving (\ref{eq:min_transform}) analytically gives 
\begin{equation}\label{eq:sol_min_transform}
\Mv W_t' = \Mv W_{t}^{\textsc{PBT}} \Mv P_{t} \Mv P_{t+1}^\top \; \journallast{.}
\end{equation}
The second step is to update $\Mv W_t'$ with $(\Mv x_t, \Mv y_t)$ to obtain $\Mv W_{t+1}^{\textsc{PBT}}$ by 
\begin{equation}\label{eq:W update PBT}
\Mv W_{t+1}^{\textsc{PBT}} = \Mv W_{t}' - \frac{\Mv A_{t}^{-1} \Mv x_{t} (\Mvt z'
        _{t} - \Mv P_{t+1} \Mv y_{t})^\top}{1 + \Mv x_{t}^\top \Mv A_{t}^{-1} \Mv x_{{t}}}
\end{equation}
where $\Mvt z'_t = \left(\Mv W_t'\right)^\top \Mv x_t$.
\journalminor{
Equation (\ref{eq:W update PBT}) can be derived with a similar use of the Sherman-Morrison formula as that for (\ref{eq:W update}) 
by replacing $(\Mvt y_t, \Mv y_t)$ with $(\Mvt z'_t, \Mv P_t \Mv y_t)$ respectively. 
}
One can easily verify that $\Mv W_{t+1}^{\textsc{PBT}}$ obtained by (\ref{eq:W update PBT}) still keeps its reference basis as $\Mv P_{t+1}$.

Comparing to PBC, PBT only has $\Omega(M^2(K+d))$ dependency, which is particularly useful when $M^2 \ll min(K,d)$.
The appealing time complexity makes PBT a highly practical option for DPP to minimize the cumulative prediction error with.
  The time and space complexity of the two variants of DPP are listed in Table~\ref{tbl:complexity}.
\begin{table}[t]
\centering
\caption{Time and space complexity for two DPP variants}
\label{tbl:complexity}
\resizebox{0.8\textwidth}{!}{
\begin{tabular}{l| c | c}
 & time complexity & space complexity \\ 
\hline
DPP-PBC & $\mathcal{O}(d^2 + MK + Kd + M^2K)$ & $\mathcal{O}(d^2 + MK + Kd)$ \\
\hline
DPP-PBT & $\mathcal{O}(d^2 + M^2d + M^2K)$ & $\mathcal{O}(d^2 + MK + Md)$ \\
\end{tabular}
}
\end{table}

\section{Generalization to Cost-Sensitive Learning}\label{sec:CSDPP}

In this section, we generalize DPP to cost-sensitive DPP (CS-DPP), which meets the requirement of CSOMLC. 
The key ingredient to the generalization is a carefully designed label-weighting scheme that transforms cost $c(\Mv y, \Mvh y)$
into the corresponding weighted Hamming loss. 
With the help of the label weighting scheme,
we subsequently derive the optimization objective similar to Theorem~\ref{thm:hamming_bound} for general cost functions,
which allows us to derive CS-DPP by reusing the building blocks of DPP.


\begin{algorithm}[t]
\caption{Cost-Sensitive Dynamic Principal Projection with Principal Basis Transform}
\label{alg:CSDPP}
\begin{algorithmic}[1]

\setre{Parameters:}
\REQUIRE $\lambda$, $\eta$, $M$

    \STATE $\Mv P_0 \leftarrow \Mv O_{M \times K}$, $\Mv U_0 \leftarrow \Mv O_{K \times K}$, $\Mv A_0^{-1} \leftarrow \frac{1}{\lambda} \Mv I_{d \times d}$, $\Mv W_0 \leftarrow \Mv O_{d \times M}$ ($\Mv O$ is zero matrix) 
	\WHILE{Receive $(\Mv x_t, \Mv y_t)$}
		\STATE $\Mvh y_t \leftarrow \mathrm{round}(\Mv P_{t-1}^\top \Mv W^\top_{t-1} \Mv x_t)$
        \STATE Obtain $\Mv C_t$ by (\ref{eq:delta})
        \STATE Update $\Mv U_{t-1}$ to $\Mv U_t$ by Capped MSG (with $\Mv C_t \Mv y_t$) and sample $\Mv P_t$ from $\Gamma_t$ as defined in Lemma~\ref{lem:P sampling} 
        \STATE $\Mv W'_{t-1} \leftarrow \Mv W_{t-1} \Mv P_{t-1} \Mv P_t^\top$ (PBT)
        \STATE Update $\Mv W'_{t-1}$, $\Mv A_{t-1}^{-1}$ to $\Mv W_t$, $\Mv A_{t}^{-1}$ by (\ref{eq:W update PBT}) (with $\Mv C_t \Mv y_t$)

	\ENDWHILE
\end{algorithmic}
\end{algorithm}


We start from the detail of our label-weighting scheme based on the label-wise decomposition of $c(\Mv y, \Mvh y)$. 
To represent the cost with the label weights,
we propose a label-weighting scheme based on a label-wise and \emph{order-dependent} decomposition of $c(\cdot,\cdot)$, 
which is motivated by a similar concept in \cite{cft}. 
The label-weighting scheme works as follows.
Defining $\Mvh y_{\text{real}}^{(k)}$ and $\Mvh y_{\text{pred}}^{(k)}$ as  
\[
\Mvh y_{\text{real}}^{(k)}[i] = 
\begin{cases}
    \Mv y[i] & \text{if} \, i < k \\
    \Mv y[i] & \text{if} \, i = k \\
    \Mvh y[i] & \text{if} \, i > k \\
\end{cases}
\: \text{\textbf{and}} \:
\Mvh y_{\text{pred}}^{(k)}[i] = 
\begin{cases}
    \Mv y[i] & \text{if} \, i < k \\
    -\Mv y[i] & \text{if} \, i = k \\
    \Mvh y[i] & \text{if} \, i > k \\
\end{cases}
\]
we decompose $c(\Mv y, \Mvh y)$ into $\delta^{(1)},\hdots,\delta^{(K)}$ such that  
\begin{equation}\label{eq:label_weight}
\delta^{(k)} = |c(\Mv y, \Mvh y_{\text{pred}}^{(k)}) - c(\Mv y, \Mvh y_{\text{real}}^{(k)})| \; \journallast{.}
\end{equation}
\journalminor{
Recall that $\Mv y$ is the ground truth vector and $\Mvh y$ is the prediction vector from the algorithm. The two newly constructed vectors, $\Mvh y^{(k)}_{\text{real}}$ and $\Mvh y^{(k)}_{\text{pred}}$, can both be viewed as pseudo prediction vectors that are ``better'' than $\Mvh y$, as they are both perfectly correct up to the $(k-1)$-th label. The two vectors only differ on the $k$-th prediction, which is correct for $\Mvh y^{(k)}_{\text{real}}$ and incorrect for $\Mvh y^{(k)}_{\text{pred}}$. The difference allows the term $\delta^{(k)}$ in (\ref{eq:label_weight}) to quantify the price that the algorithm needs to pay if the $k$-th prediction is wrong.
}
Then, the price $\delta^{(k)}$ can be viewed as an indicator of importance for predicting the $k$-th label correctly.
Our label-weighting scheme follows such intuition by simply setting the weight of $k$-th label as $\delta^{(k)}$.
The label-weighting scheme with (\ref{eq:label_weight}) is not only intuitive, but also
enjoys nice theoretical guarantee under a mild condition of $c(\cdot,\cdot)$,
as shown in the following lemma.
\begin{lemma}\label{lem:weighted_hamming_bound}
If $c(\Mv y, \Mv y^{(k)}_{\text{pred}}) - c(\Mv y, \Mv y^{(k)}_{\text{real}}) \geq 0$ holds for any $k$, $\Mv y$ and $\Mvh y$, 
then for any given $\Mv y$ and $\Mvh y$, we have 
\[
c(\Mv y, \Mvh y) = \sum_{k=1}^K \delta^{(k)} \bfunc{\Mv y[k] \neq \Mvh y[k]}
\]
\end{lemma}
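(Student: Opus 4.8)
The plan is to prove the identity by telescoping a sequence of ``hybrid'' label vectors that walks from the prediction $\Mvh y$ to the ground truth $\Mv y$ one coordinate at a time, mirroring the construction used to define $\delta^{(k)}$.

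First I would introduce, for $k = 0, 1, \ldots, K$, the vector $\Mv v^{(k)}$ with $\Mv v^{(k)}[i] = \Mv y[i]$ for $i \le k$ and $\Mv v^{(k)}[i] = \Mvh y[i]$ for $i > k$. By construction $\Mv v^{(0)} = \Mvh y$ and $\Mv v^{(K)} = \Mv y$, and $\Mv v^{(k-1)}$ and $\Mv v^{(k)}$ coincide except possibly at the $k$-th coordinate, where $\Mv v^{(k)}[k] = \Mv y[k]$ while $\Mv v^{(k-1)}[k] = \Mvh y[k]$. Comparing with the statement, one reads off directly that $\Mv v^{(k)} = \Mvh y^{(k)}_{\text{real}}$ for every $k \ge 1$, and that $\Mvh y^{(k)}_{\text{pred}}$ agrees with $\Mv v^{(k)}$ on every coordinate except the $k$-th, where it equals $-\Mv y[k]$.

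The core of the argument is the per-coordinate identity
\[
c(\Mv y, \Mv v^{(k-1)}) - c(\Mv y, \Mv v^{(k)}) = \delta^{(k)}\,\bfunc{\Mv y[k] \neq \Mvh y[k]}, \qquad k = 1, \ldots, K,
\]
which I would prove by a two-case analysis on the $k$-th coordinate. If $\Mv y[k] = \Mvh y[k]$, then $\Mv v^{(k-1)} = \Mv v^{(k)}$, so both sides vanish. If $\Mv y[k] \neq \Mvh y[k]$, i.e.\ $\Mvh y[k] = -\Mv y[k]$, then $\Mv v^{(k-1)}$ agrees with $\Mv y$ on the first $k-1$ coordinates, equals $-\Mv y[k]$ at coordinate $k$, and agrees with $\Mvh y$ afterwards, so $\Mv v^{(k-1)} = \Mvh y^{(k)}_{\text{pred}}$; combined with $\Mv v^{(k)} = \Mvh y^{(k)}_{\text{real}}$, the left-hand side equals $c(\Mv y, \Mvh y^{(k)}_{\text{pred}}) - c(\Mv y, \Mvh y^{(k)}_{\text{real}})$. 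The hypothesis of the lemma says precisely that this difference is nonnegative, hence equal to its own absolute value, which by $(\ref{eq:label_weight})$ is $\delta^{(k)}$; and the indicator on the right-hand side is $1$. This is the only place the monotonicity assumption enters: it is what lets the signed difference be identified with $\delta^{(k)}$ despite the absolute value in the definition.

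Finally I would sum the per-coordinate identity over $k = 1, \ldots, K$: the left-hand side telescopes to $c(\Mv y, \Mv v^{(0)}) - c(\Mv y, \Mv v^{(K)}) = c(\Mv y, \Mvh y) - c(\Mv y, \Mv y)$, and since $c(\Mv y, \Mv y) = 0$ this equals $c(\Mv y, \Mvh y)$, giving the claim. I expect the only delicate point to be the bookkeeping in the case analysis — correctly matching $\Mv v^{(k-1)}$ and $\Mv v^{(k)}$ with the right members of $\{\Mvh y^{(k)}_{\text{real}}, \Mvh y^{(k)}_{\text{pred}}\}$ and handling the boundary index $k=1$ (where $\Mv v^{(0)} = \Mvh y$) cleanly; everything else is routine.
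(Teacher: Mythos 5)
Your proof is correct and takes essentially the same telescoping route as the paper's: both walk from $\Mvh y$ to $\Mv y$ one coordinate at a time and cancel the intermediate costs, using the monotonicity hypothesis only to drop the absolute value in the definition of $\delta^{(k)}$ and $c(\Mv y,\Mv y)=0$ to finish. The only cosmetic difference is that you telescope over all $K$ coordinates via the explicit hybrid vectors $\Mv v^{(k)}$ (with zero contribution at matched coordinates), whereas the paper telescopes only over the subsequence of mismatched indices $k_1<\dots<k_L$ using the identity $\Mvh y^{(k_i)}_{\text{real}} = \Mvh y^{(k_{i+1})}_{\text{pred}}$ --- your version makes the chain structure explicit and avoids that bookkeeping.
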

  The condition of the lemma, which generally holds for reasonable cost functions, simply says that for any label, a correct prediction should enjoy a lower cost than an incorrect prediction.
  The proof of the lemma
  can be found in \ref{proof:weighted_hamming_bound}.
Lemma~\ref{lem:weighted_hamming_bound} transforms $c(\Mv y, \Mvh y)$ into the corresponding weighted Hamming loss,
and thus enables the optimization over general cost functions. 
Note that condition implies that correcting a wrongly-predicted label leads to no higher cost,
and is considered mild as general cost functions for MLC satisfy the condition.


Next, we propose CS-DPP, which extends DPP based on our proposed label-weighting scheme.
Define $\Mv C$ as
\begin{equation}\label{eq:delta}
\Mv C = \mbox{diag}(\sqrt{\delta^{(1)}},...,\sqrt{\delta^{(K)}}) 
\end{equation}

With $\Mv C$, which carries the cost information, we establish a theorem similar to Theorem~\ref{thm:hamming_bound} to upper-bound $c(\Mv y, \Mvh y)$.

\begin{theorem}\label{thm:cost_decomp}
When making a prediction~$\Mvh y$ from $\Mv x$ by $\Mvh y = \mathrm{round}\left(\Mv P^\top \Mv r(\Mv x) + \Mv o\right)$ 
with any left orthogonal matrix $\Mv P$, 
if $c(\cdot, \cdot)$ satisfies the condition of Lemma \ref{lem:weighted_hamming_bound}, 
the prediction cost 
\[
c(\Mv y, \Mvh y) \leq \|\Mv r(\Mv x) - \Mv z_{\Mv C}\|^2_2 + \|(\Mv I - \Mv P^\top\Mv P)(\Mv y_{\Mv C}')\|^2_2
\]
where $\Mv z_{\Mv C} = \Mv P (\Mv y_{\Mv C}')$ and $\Mv y_{\Mv C}' = \Mv C \Mv y - \Mv o$ with respect to any fixed reference point $\Mv o$. 
\end{theorem}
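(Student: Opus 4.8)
The plan is to reduce Theorem~\ref{thm:cost_decomp} to the same Pythagorean argument that underlies Theorem~\ref{thm:hamming_bound}, using Lemma~\ref{lem:weighted_hamming_bound} as the bridge between the general cost and a weighted Hamming-type quantity. First I would invoke Lemma~\ref{lem:weighted_hamming_bound} (whose hypothesis is exactly the assumed condition on $c(\cdot,\cdot)$) to write $c(\Mv y, \Mvh y) = \sum_{k=1}^K \delta^{(k)} \bfunc{\Mv y[k] \neq \Mvh y[k]}$. Since $\Mv C = \mathrm{diag}(\sqrt{\delta^{(1)}},\dots,\sqrt{\delta^{(K)}})$, the $k$-th entry of the scaled label vector satisfies $(\Mv C \Mv y)[k] = \sqrt{\delta^{(k)}}\,\Mv y[k]$, so $|(\Mv C\Mv y)[k]| = \sqrt{\delta^{(k)}}$ and, whenever $\delta^{(k)} > 0$, $\mathrm{sign}((\Mv C\Mv y)[k]) = \Mv y[k]$.

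The core step is a per-coordinate inequality $\delta^{(k)}\bfunc{\Mv y[k]\neq\Mvh y[k]} \le \big((\Mv y_{\Mv C}')[k] - (\Mv P^\top\Mv r(\Mv x))[k]\big)^2$. For coordinates with $\delta^{(k)} = 0$ the left side vanishes and there is nothing to prove. For $\delta^{(k)} > 0$, if $\Mv y[k] \neq \Mvh y[k] = \mathrm{sign}\big((\Mv P^\top\Mv r(\Mv x))[k] + \Mv o[k]\big)$, then $(\Mv C\Mv y)[k]$ and $(\Mv P^\top\Mv r(\Mv x))[k] + \Mv o[k]$ have opposite signs (the latter possibly zero), so their distance is at least $|(\Mv C\Mv y)[k]| = \sqrt{\delta^{(k)}}$; subtracting the common shift $\Mv o[k]$ and using $\Mv y_{\Mv C}' = \Mv C\Mv y - \Mv o$ turns this into $\big|(\Mv y_{\Mv C}')[k] - (\Mv P^\top\Mv r(\Mv x))[k]\big| \ge \sqrt{\delta^{(k)}}$, and squaring gives the claim. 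Summing over $k$ yields $c(\Mv y, \Mvh y) \le \|\Mv y_{\Mv C}' - \Mv P^\top\Mv r(\Mv x)\|_2^2$.

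Finally I would split $\|\Mv y_{\Mv C}' - \Mv P^\top\Mv r(\Mv x)\|_2^2$ by the orthogonal decomposition $\Mv y_{\Mv C}' = \Mv P^\top\Mv P\,\Mv y_{\Mv C}' + (\Mv I - \Mv P^\top\Mv P)\Mv y_{\Mv C}'$: since $\Mv P$ is left orthogonal ($\Mv P\Mv P^\top = \Mv I_M$), the matrix $\Mv P^\top\Mv P$ is the orthogonal projection onto the row space of $\Mv P$, which also contains $\Mv P^\top\Mv r(\Mv x)$, so by Pythagoras $\|\Mv y_{\Mv C}' - \Mv P^\top\Mv r(\Mv x)\|_2^2 = \|\Mv P^\top\Mv P\,\Mv y_{\Mv C}' - \Mv P^\top\Mv r(\Mv x)\|_2^2 + \|(\Mv I - \Mv P^\top\Mv P)\Mv y_{\Mv C}'\|_2^2$; moreover $\|\Mv P^\top(\Mv P\Mv y_{\Mv C}' - \Mv r(\Mv x))\|_2^2 = \|\Mv P\Mv y_{\Mv C}' - \Mv r(\Mv x)\|_2^2 = \|\Mv r(\Mv x) - \Mv z_{\Mv C}\|_2^2$ because $\Mv P\Mv P^\top = \Mv I_M$ makes $\Mv P^\top$ norm-preserving and $\Mv z_{\Mv C} = \Mv P\Mv y_{\Mv C}'$. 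This is exactly the asserted bound. The only delicate point — and the part I would write out most carefully — is the per-coordinate step: it is where the scaling by $\sqrt{\delta^{(k)}}$ makes the effective rounding threshold $\sqrt{\delta^{(k)}}$ rather than $1$, and where one must dispose of the $\delta^{(k)}=0$ coordinates (on which $\mathrm{round}(\cdot)$ need not recover $\Mv y[k]$, but which carry zero weight anyway). Everything else is a verbatim replay of the PLST bound with $\Mv y$ replaced by $\Mv C\Mv y$, so the real content is that the label-weighting scheme of Section~\ref{sec:CSDPP} was designed precisely so that this substitution goes through.
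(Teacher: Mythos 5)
Your proposal is correct and follows essentially the same route as the paper: the paper isolates your per-coordinate step as an auxiliary lemma showing $c(\Mv y,\Mvh y)\le\|\Mv C\Mv y-\Mvt y\|_2^2$ for $\Mvt y=\Mv P^\top\Mv r(\Mv x)+\Mv o$ (proved coordinate-wise with the same $\delta^{(k)}=0$ / $\delta^{(k)}>0$ case split, just phrased by rescaling with $\nicefrac{1}{\sqrt{\delta^{(k)}}}$ rather than by your ``opposite signs give distance at least $\sqrt{\delta^{(k)}}$'' observation), and then performs the identical orthogonal decomposition using $\Mv P\Mv P^\top=\Mv I$. No gaps.
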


Theorem \ref{thm:cost_decomp} generalizes Theorem~\ref{thm:hamming_bound} to 
upper-bound the general cost $c(\Mv y, \Mvh y)$ instead of the original Hamming loss $c_{\textsc{ham}}(\Mv y, \Mvh y)$.
With Theorem~\ref{thm:cost_decomp}, extending DPP to CS-DPP is a straightforward task by reusing the online updating algorithms of DPP 
with $\Mv y_t$ replaced by $\Mv C_t \Mv y_t$.
The full details of CS-DPP using PBT is given in Algorithm~\ref{alg:CSDPP}, and we can easily write down similar steps for CS-DPP using PBC. Note that we simplify $\Mv W_t^{\textsc{PBT}}$ to $\Mv W_t$ in Algorithm~\ref{alg:CSDPP} to make a cleaner presentation.

\section{Experiments} \label{sec:exp}
To empirically evaluate the performance, and also to study the effectiveness and necessity of design components of CS-DPP, 
we conduct three sets of experiments:
(1) {necessity justification of online LSDR}, 
(2) {experiments on basis drifting}, and 
(3) {experiments on cost-sensitivity}.
Furthermore, recall that the label weighting scheme of CS-DPP depends on the label order.
We therefore conduct an additional set of experiments to study how different label orders affect the performance of CS-DPP.
\journalminor{
  To assist the readers in understanding the experiments, we list the full names and acronyms of the algorithms to be compared along with their key differences in Table~\ref{tbl:alg}. The details of the algorithms will be illustrated as needed.
}

  \begin{table}
\journalminor{
  \caption{Algorithms being compared in the experiments \label{tbl:alg}}
  \begin{tabular}{cp{1.5cm}p{1.5cm}p{1.5cm}p{1.5cm}p{1cm}p{1cm}p{1cm}}
    acronym & full name & dimension reduction & encode & basis transform & decode & cost-sensitivity\\ \hline \hline
    O-BR & Online Binary Relevance & - & no & - & - & no\\ \hline
    O-CS & Online Compressed Sensing & yes & random (static) & - & compressed sensing & no\\ \hline
    O-RAND & Online Random Projection & yes & random (static) & - & pseudo inverse & no\\ \hline \hline
    DPP-PBC & Dynamic Principal Projection (DPP) with Principal Basis Correction & yes & online PCA (dynamic) & exact & PCA & no\\ \hline
    DPP-PBT & Dynamic Principal Projection (DPP) with Principal Basis Transform (PBT) & yes & online PCA (dynamic) & approximate & PCA & no\\ \hline
    CS-DPP & Cost-Sensitive DPP (with PBT) & yes & online PCA & approximate & PCA & yes
  \end{tabular}
  }
  \end{table}

\subsection{Experiments Setup}
We conduct our experiments on eleven real-world datasets\footnote{$\Mt{CAL500}$, $\Mt{emotions}$, $\Mt{scene}$, $\Mt{yeast}$, $\Mt{enron}$, $\Mt{Corel5k}$, $\Mt{mediamill}$, $\Mt{nuswide}$, $\Mt{medical}$, $\Mt{delicious}$ and $\Mt{eurlex-eurovec}$} 
downloaded from Mulan~\cite{mulan}. 
Statistics of datasets can be found in Table~\ref{tbl:data_stats}.
In particular, datasets $\textit{eurlex-eurovec}$ and $\textit{delicious}$ are used only in the experiment to justify the necessity of online LSDR,
and only 7500 \journalminor{sub-sampled} instances are used on these two datasets to reduce the computational burden of the competitors in the experiment. 
In addition, only 50000 \journalminor{sub-sampled} instances are used for \textit{nuswide} because a competitor in the cost-sensitivity experiment is rather computationally inefficient. 
\begin{table}[t]
\centering
\caption{Statistics of datasets}
\label{tbl:data_stats}
\resizebox{0.7\textwidth}{!}{
\begin{tabular}{ccccc} \hline 
&\# of features & \# of labels &  \# of instances & cardinality \\ \hline
CAL500 & 68 & 174  & 502 & 26.044 \\
Corel5k & 499 & 374  & 5000 & 3.522 \\
emotions & 72 & 6  & 593 & 1.869 \\
enron & 1001 & 53  & 1702 & 3.378\\
mediamill & 120 & 101  & 43907 & 4.376\\
medical & 1449 & 45 & 978 & 1.245 \\ 
scene & 294 & 6  & 2407 & 1.074\\
yeast & 103 & 14  & 2417 & 4.237 \\ 
nuswide & 128 & 81  & 50000\textsuperscript{*} & 1.869 \\ \hline
delicious & 500 & 983  & 7500\textsuperscript{*} & 19.020 \\ 
eurlex-eurovec & 5000 & 3993 & 7500\textsuperscript{*} & 5.310 \\ \hline
\end{tabular}
}
\end{table}

\journalminor{
Data streams are generated by permuting datasets into different random orders. We perform sub-sampling on
\textit{eurlex-eurovec}, \textit{delicious} and \textit{nuswide} after computing the permutation so that each stream contains a diferent set of original instances for the three datasets.
}    

All LSDR algorithms, except for competitors run on \textit{delicious} and \textit{eurlex-eurovec}, are coupled with online ridge regression
and three different code space dimensions, $M = 10\%$, $25\%$, and $50\%$ of $K$, are considered.
For DPP we fix $\lambda = 1$ and follow~\cite{online_pca_sgd} to use the time-decreasing learning rate $\eta = \frac{2}{\sqrt{t}}\frac{M}{K}$,
and parameters of other algorithms will be elaborated along with their details in the corresponding section. 
For the two larger datasets \textit{delicious} and \textit{eurlex-eurovec}, we implement both DPP and O-BR using gradient descent instead of online ridge regression for calculating $\Mv W_t$,
where O-BR is the competitor that will be elaborated in Section~\ref{sec:exp:lsdr}.
In particular, 
for PBC of DPP we replace the \journallast{update of the online} ridge regressor (\ref{eq:W_PBC})
with online gradient descent,
while for PBT we replace (\ref{eq:W update PBT}), 
the update after basis transform, \journallast{with a gradient descent} update as well.
Note that even with online ridge regression replaced with gradient descent, \journallast{the ability} of DPP with PBT or PBC to handle the basis drifting problem remains unchanged. 
We use the time decreasing step-size $\frac{1}{\sqrt{t}}$ for gradient descent on \textit{delicious}, and  $\frac{0.001}{\sqrt{t}}$ on \textit{eurlex-eurovec}. 

We consider four different cost functions, Hamming loss, Normalized rank loss, F1 loss and Accuracy loss.
$$c_{\textsc{ham}}(\Mv y, \Mvh y) = \frac{1}{K} \left(\sum\limits_{k=1}^K \bfunc{\Mv y[k]\!\neq\!\Mvh y[k]}\right)$$
$$c_{\textsc{nr}}(\Mv y, \Mvh y) = \mathop{\mbox{average}}\limits_{\Mv y[i] > \Mv y[j]}\Bigl(\bfunc{\Mvh y[i]\!<\!\Mvh y[j]} + \frac{1}{2} \bfunc{\Mvh y[i]\!=\!\Mvh y[j]}\Bigr)$$
$$c_{\textsc{f1}}(\Mv y, \Mvh y) = 1 - 2 \left(\sum\limits_{k=1}^K \bfunc{\Mv y[k]\!=\!+1 \mbox{ and } \Mvh y[k] \!=\!+1}\right)/\left(\sum\limits_{k=1}^K (\bfunc{\Mv y[k]\!=\!+1} + \bfunc{\Mvh y[k]\!=\!+1})\right)$$
$$c_{\textsc{acc}}(\Mv y, \Mvh y)\!=\!1\!-\left(\sum\limits_{k=1}^K \bfunc{\Mv y[k]\!=\!+1 \mbox{ and } \Mvh y[k] \!=\!+1}\right) / \left(\sum\limits_{k=1}^K \bfunc{\Mv y[k]\!=\!+1 \mbox{ or } \Mvh y[k] \!=\!+1}\right)$$ 
The performances of different algorithms are compared using the average cumulative cost $\frac{1}{t}\sum_{i=1}^t c(\Mv y_i, \Mvh y_i)$ at each iteration $t$. 
We remark that \emph{lower} average cumulative cost imply better performance. 
We report the average results of each experiment after 15 repetitions.


\begin{table}[t]
\centering
\resizebox{0.95\textwidth}{!}{%
\begin{tabular}{l | c c c | c c c}
\hline
$\Mt{Dataset}$ & \multicolumn{3}{c|}{delicious} & \multicolumn{3}{c}{eurlex-eurovec} \\ 
\hline
$\Mt{Algorithms}$ & PBT & PBC & O-BR & PBT & PBC & O-BR  \\
\hline
$c_{\textsc{ham}}$ & ${0.1136}$ & $0.1153$ & $0.1245$ & ${0.4917}$ & $0.5011$ & $0.4993$  \\
$c_{\textsc{NR}}$ & ${0.5636}$ & $0.5641$ & $0.5756$ & ${0.7435}$ & $0.7467$ & ${0.7433}$  \\
$c_{\textsc{F1}}$ & $0.9143$ & $0.9138$ & ${0.9076}$ & $0.9972$ & $0.9928$ & ${0.9921}$  \\
$c_\textsc{{Acc}}$ & $0.9512$ & $0.9517$ & ${0.9494}$ & $0.9980$ & $0.9964$ & ${0.9958}$  \\
\hline
$\Mt{Avg.\:time\:(sec)}$ & $\mathbf{21.49}$ & $140.77$ & $105.18$ & $\mathbf{60.81}$ & 10522.25 & $4841.35$ \\ 
\end{tabular}
}
\caption{DPP vs. O-BR on large datasets}
\label{tbl:large-data}
\end{table}

\begin{figure}[t]
    \includegraphics[width=\linewidth]{./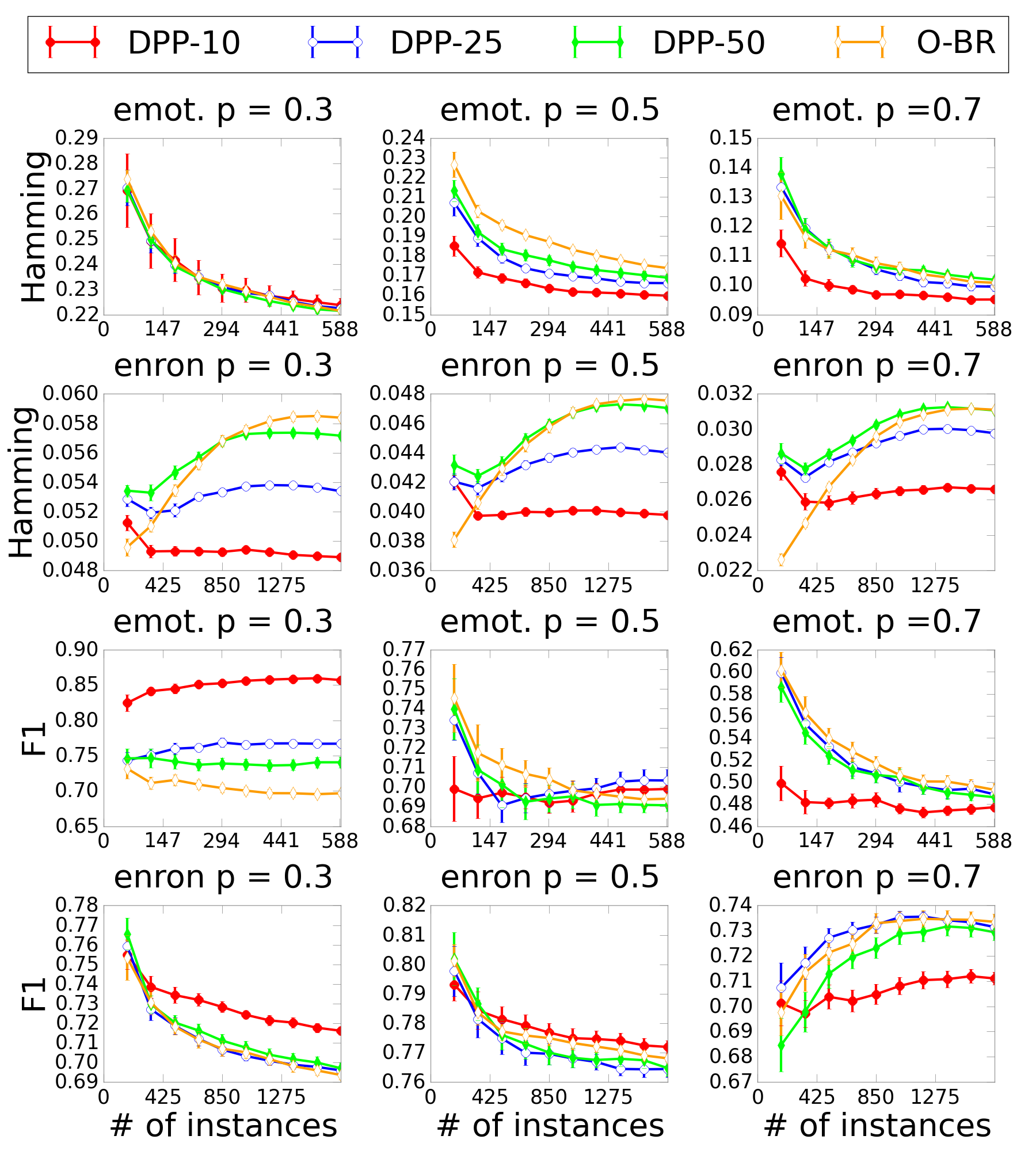}
    \caption{DPP vs. O-BR on noisy labels}
    \label{fig:5-2-1}
\end{figure}

\subsection{Necessity of Online LSDR}\label{sec:exp:lsdr} 
In this experiment, we aim to justify the necessity to address LSDR for OMLC problems.
We demonstrate that the ability of LSDR to preserve the key joint correlations between labels
can be helpful when facing
(1) data with noisy labels or (2) data with a large possible set of labels, 
which are often encountered in real-world OMLC problems. 
We compare DPP with online Binary Relevance (O-BR), 
which is a na\"{i}ve extension from binary relevance~\cite{intro} with online ridge regressor.
The only difference between DPP and O-BR is whether the algorithm incorporates LSDR.

We first compare DPP and O-BR on data with noisy labels.
We generate noisy data stream by randomly flipping each positive label $\Mv y[i] = 1$ 
to negative with probability $p = \{0.3, 0.5, 0.7\}$,
which simulates the real-world scenario in which human annotators fail to tag the existed labels.
We plot the results of O-BR and DPP with $M = 10\%$, $25\%$ and $50\%$ of $K$ on datasets \textit{emotions} and \textit{enron} 
with respect to Hamming loss and F1 loss in Figure~\ref{fig:5-2-1},
  \journallast{which contains error bars that represent the standard error of the average results.}
  The standard errors are naturally larger when $M$ is larger or when $t$ (number of iterations) is small, but in general for $M \ge 25\% \cdot K$ and for $t \ge 400$ the standard errors are small enough to justify the difference.
  The complete results are listed in \ref{appendix:exp:lsdr}.

The results from the first two rows of Figure~\ref{fig:5-2-1}
show that DPP with $M = 10\%$ of $K$ performs competitively and even better than O-BR as $p$ increases on dataset \textit{emotions}. 
The results from the last two rows of Figure~\ref{fig:5-2-1}
show that DPP always performs better on \textit{enron}. 
We can also observe from Figure~\ref{fig:5-2-1} that DPP with smaller $M$ tends to perform better as $p$ increases.
The above results clearly demonstrate that DPP better resists the effect of noisy labels with its incorporation of LSDR as the noise level ($p$) increases.
The observation that DPP with smaller $M$ tends to perform better demonstrates that 
DPP is more robust to noise by preserving the key of the key joint correlations between labels with LSDR.

Next, we demonstrate that LSDR is also helpful for handling data with a large label set. 
We compare O-BR with DPP that is coupled with either PBC or PBT on datasets 
\textit{delicious} 
and \textit{eurlex-eurovec}.\footnote{\textit{delicious}: $d$=$500$, $K$=$983$, \textit{eurlex-eurovec}: $d$=$5000$, $K$=$3993$.}
DPP uses $M = 10$ for \textit{delicious} and $M = 25$ for \textit{eurlex-eurovec}.
We summarize the results and average run-time in Table~\ref{tbl:large-data}. 
Table~\ref{tbl:large-data} indicates that DPP coupled with either PBT or PBC performs competitively with O-BR, 
while DPP with PBT enjoys significantly cheaper computational cost.
The results demonstrate that DPP enjoys more effective and efficient learning for data with a large label set than O-BR,
and also justifies the advantage of PBT over PBC in terms of efficiency when $K$ and $d$ are large while $M$ is relatively small,
as previously highlighted in Section~\ref{sec:DPP}.


\begin{figure}[t]
        \includegraphics[width=\linewidth]{./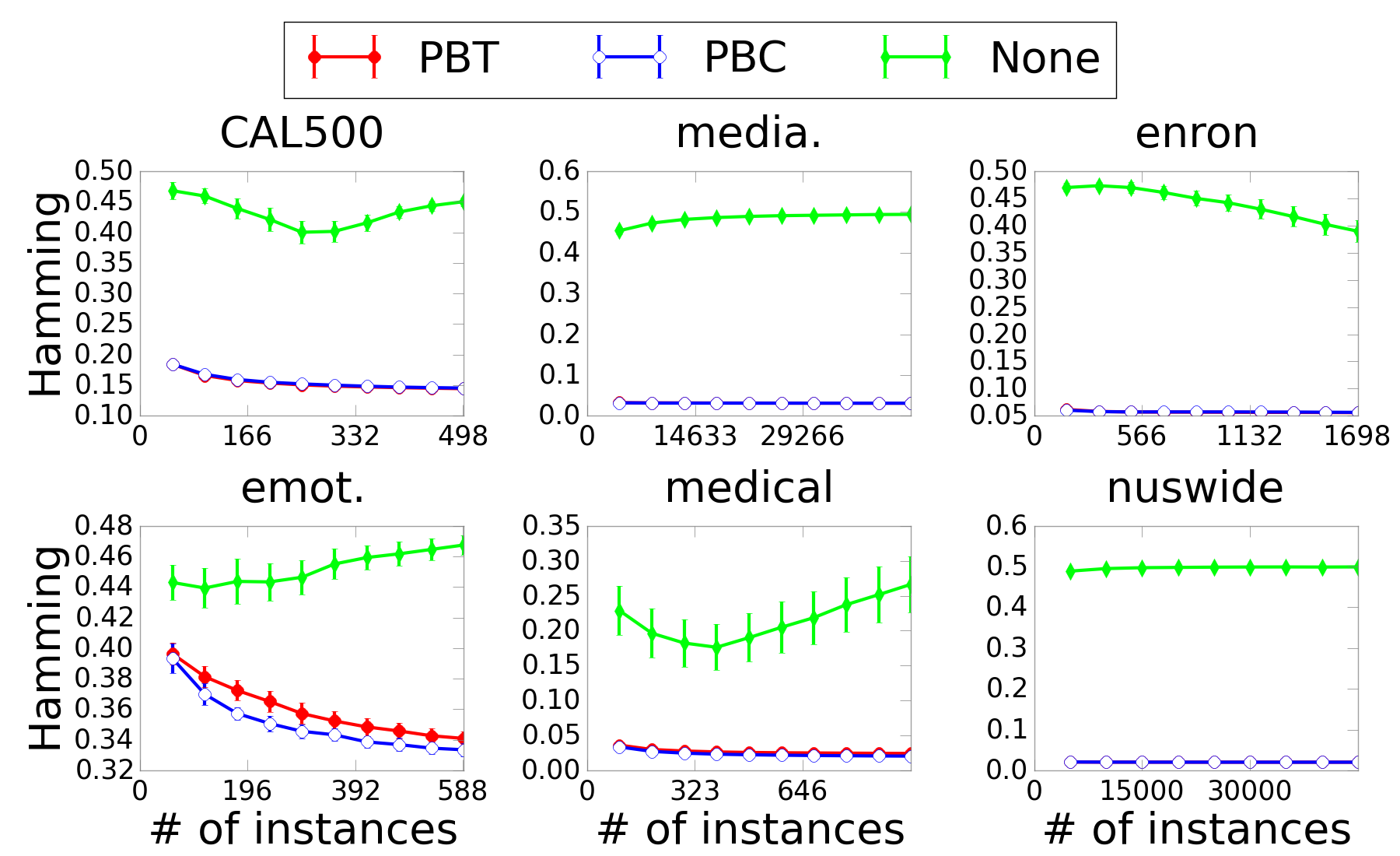}
    \caption{PBC vs. PBT vs. None, $M = 10\%$ of $K$}
    \label{fig:5-2-2}
\end{figure}

\subsection{Experiments on Basis Drifting}\label{sec:exp:basis drift}
To empirically justify the necessity of handling basis drifting, 
we compare variants of DPP that (a) incorporates PBC by (\ref{eq:W_PBC}), (b) incorporates PBT by (\ref{eq:W update PBT}), and (c) neglects basis drifting as (\ref{eq:W_naive}).
We plot the results for Hamming loss with $M=10\%$ of $K$ in
Figure~\ref{fig:5-2-2} on six datasets, 
and report the complete results in \ref{appendix:exp:drift}. 
The results on all datasets in Figure~\ref{fig:5-2-2} show that DPP with either PBC or PBT significantly improves the performance 
over its variant that neglects the basis drifting, which clearly demonstrates the necessity to handle the drifting of projection basis.

Further comparison of PBC and PBT based on Figure~\ref{fig:5-2-2} reveals that PBC in general performs slightly better than PBC,
reflecting its advantage of exact projection basis correction.
Nevertheless, as discussed in Section~\ref{sec:exp:lsdr},
PBT enjoys a nice computational speedup when $K$ and $d$ are large and $M$ is relatively small, 
making PBT more suitable to handle data with a large label set. 
\subsection{Experiments on Cost-Sensitivity}\label{sec:exp:cost}
To empirically justify the necessity of cost-sensitivity, we compare CS-DPP using PBT with DPP using PBT and other online LSDR algorithms.
To the best of our knowledge, no online LSDR algorithm has yet been proposed in the literature.
We therefore design two simple online LSDR algorithms, online compressed sensing (O-CS) and online random projection (O-RAND), 
to compare with CS-DPP. 
O-CS is a straightforward extension of CS~\cite{cs} with an online ridge regressor, and we follow~\cite{cs} to determine the parameter of O-CS. 
O-RAND encodes using random matrix $\Mv P_{R}$ and simply decodes with the corresponding pseudo inverse $\Mv P_{R}^{\dagger}$. 

We plot the results with respect to 
all evaluation criteria except for the Hamming loss with $M=10\%$ of $K$ in
Figure~\ref{fig:5-2-3} on three datasets, 
and report the complete results in \ref{appendix:exp:cost} 
Note that the results for CS-DPP here are obtained by using the original label order from the dataset. 

\begin{figure}[t]
    \includegraphics[width=\linewidth]{./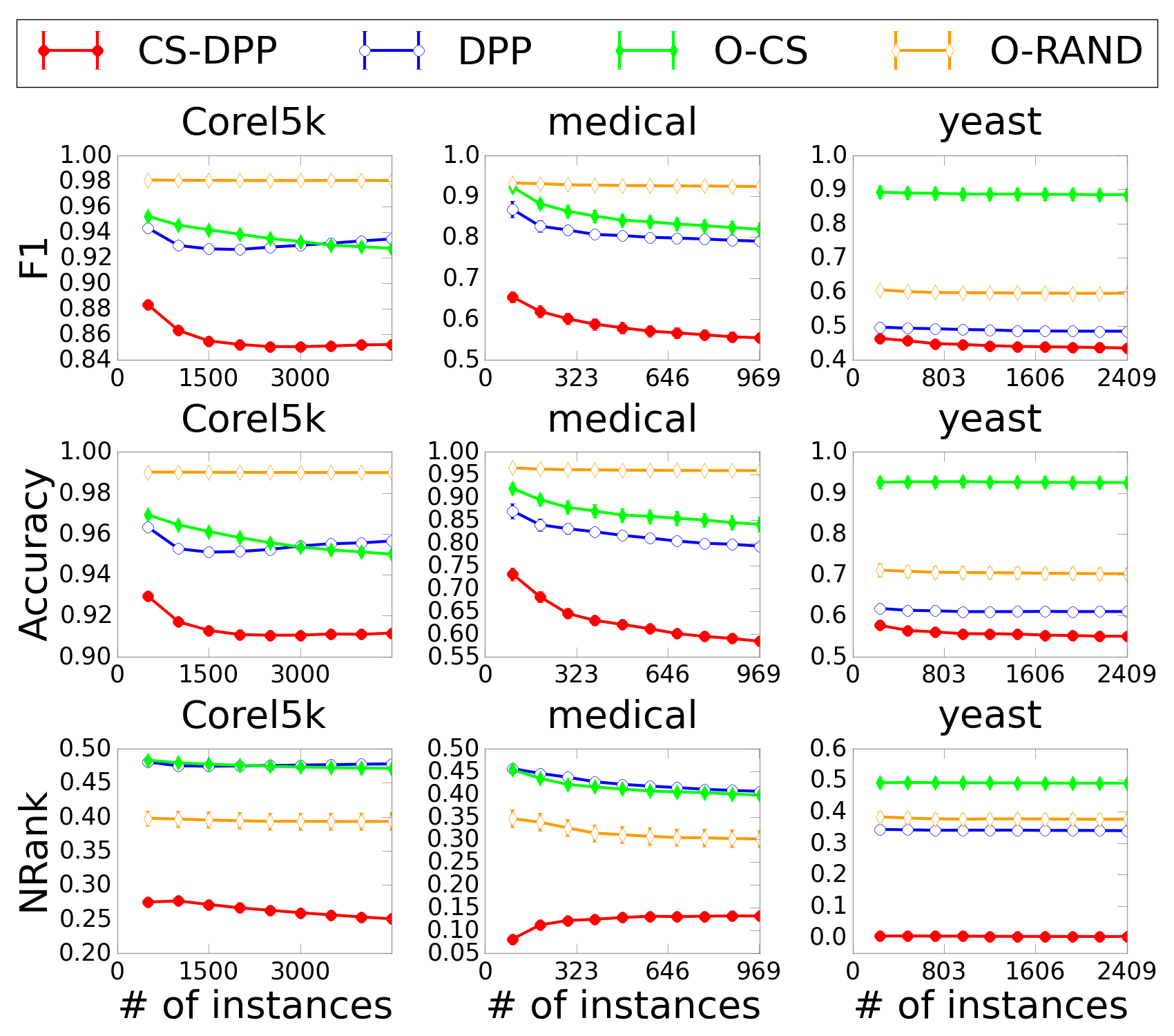}
    \caption{CS-DPP vs. Others, $M = 10\%$ of $K$}
    \label{fig:5-2-3}
\end{figure}

\subsubsection{CS-DPP versus DPP.} 
The results of Figure~\ref{fig:5-2-3} clearly indicate that CS-DPP performs significantly better than DPP 
on all evaluation criteria other than the Hamming loss, 
while CS-DPP reduces to DPP when $c_{\textsc{Ham}}(\cdot,\cdot)$ is used as the cost function.
These observations demonstrate that CS-DPP, by optimizing the given cost function instead of Hamming loss,
indeed achieves cost-sensitivity and is superior to its cost-insensitive counterpart, DPP.

\subsubsection{CS-DPP versus Other Online LSDR Algorithms.}
As shown in Figure~\ref{fig:5-2-3},
while DPP generally performs better than O-CS and O-RAND because of the advantage to 
preserve key label correlations rather than random ones, 
it can nevertheless be inferior on some datasets with respect to specific cost functions due to its cost-insensitivity.
For example, DPP loses to O-RAND on dataset \textit{Corel5k} with respect to the Normalized rank loss, 
as shown in the third row of Figure~\ref{fig:5-2-3}. 
CS-DPP conquers the weakness of DPP with its cost-sensitivity, and significantly outperforms O-CS and O-RAND 
on all three datasets with respect to all three evaluation criteria, as demonstrated in Figure~\ref{fig:5-2-3}.
The superiority of CS-DPP justifies the necessity to take cost-sensitivity into account.

\subsection{Experiment on Effect of Label Order for CS-DPP}
\begin{table}[t]
\centering
\begin{minipage}{0.8\textwidth}
\resizebox{0.99\textwidth}{!}{%
\begin{tabular}{l | c c c }
\hline
$\Mt{}$ & 10\% of K & 25\% of K & 50 \% of K   \\
\hline
$c_{\textsc{ham}}$ & $0.1458\pm 0.00019$ & $0.1489\pm 0.00012$ & $0.1503\pm 0.00008$ \\
$c_{\textsc{NR}}$  & $0.1247\pm 0.00224$ & $0.1321\pm 0.00210$ & $0.1371\pm 0.00222$ \\
$c_{\textsc{F1}}$  & $0.5914\pm 0.00108$ & $0.5956\pm 0.00110$ & $0.5949\pm 0.00101$ \\
$c_\textsc{{Acc}}$ & $0.7388\pm 0.00105$ & $0.7428\pm 0.00131$ & $0.7426\pm 0.00126$
\end{tabular}
}
\caption{Results of CS-DPP on \textit{CAL500} with 50 random label orders}
\vspace{0.5em}
\label{tbl:cost-order-CAL500}
\end{minipage}

\begin{minipage}{0.8\textwidth}
\resizebox{0.99\textwidth}{!}{%
\begin{tabular}{l | c c c }
\hline
$\Mt{}$ & 10\% of K & 25\% of K & 50 \% of K   \\
\hline
$c_{\textsc{ham}}$ & $0.2296\pm 0.00010$ & $0.2162\pm 0.00009$ & $0.2092\pm 0.00001$ \\
$c_{\textsc{NR}}$  & $0.0064\pm 0.00081$ & $0.0170\pm 0.00242$ & $0.0232\pm 0.00158$ \\
$c_{\textsc{F1}}$  & $0.4518\pm 0.00919$ & $0.3841\pm 0.00199$ & $0.3784\pm 0.00107$ \\
$c_\textsc{{Acc}}$ & $0.5448\pm 0.02252$ & $0.4971\pm 0.00379$ & $0.4901\pm 0.00124$ 
\end{tabular}
}
\caption{Results of CS-DPP on \textit{yeast} with 50 random label orders}
\vspace{0.5em}
\label{tbl:cost-order-yeast}
\end{minipage}

\begin{minipage}{0.8\textwidth}
\resizebox{0.99\textwidth}{!}{%
\begin{tabular}{l | c c c }
\hline
$\Mt{}$ & 10\% of K & 25\% of K & 50 \% of K   \\
\hline
$c_{\textsc{ham}}$ & $0.0562\pm 0.00020$ & $0.0600\pm 0.00011$ & $0.0632\pm 0.00009$ \\
$c_{\textsc{NR}}$  & $0.1432\pm 0.00333$ & $0.1364\pm 0.00244$ & $0.1305\pm 0.00216$ \\
$c_{\textsc{F1}}$  & $0.5421\pm 0.00334$ & $0.5392\pm 0.00291$ & $0.5428\pm 0.00293$ \\
$c_\textsc{{Acc}}$ & $0.6573\pm 0.00360$ & $0.6561\pm 0.00331$ & $0.6627\pm 0.00315$ 
\end{tabular}
}
\caption{Results of CS-DPP on \textit{enron} with 50 random label orders}
\label{tbl:cost-order-enron}
\end{minipage}
\end{table}

The goal of this experiment is to study how different label orders affect the performance of CS-DPP 
as our proposed label weighting scheme with (\ref{eq:label_weight}) is label-order-dependent. 
To evaluate the impact of label orders, 
we run CS-DPP with $50$ randomly generated label orders and $M = 10\%$, $25\%$ and $50\%$ of $K$ on each dataset. 
The permutation of each dataset is fixed to the original one given in Mulan~\cite{mulan},
which allows the variance of the performance to better indicate the effect of different orders.

We summarize the results of all four different cost functions with mean and standard deviation
on datasets \textit{CAL500}, \textit{enron} and \textit{yeast} in 
Table~\ref{tbl:cost-order-CAL500},~\ref{tbl:cost-order-yeast} and~\ref{tbl:cost-order-enron} respectively,
and report the complete results in~\ref{appendix:exp:order}.
Note that the results of Hamming loss \journallast{are unaffected} by the order of labels, and the reported deviation is due to the randomness from $\Mv P_t$.
From the results of Table~\ref{tbl:cost-order-CAL500},~\ref{tbl:cost-order-yeast} and~\ref{tbl:cost-order-enron},
we see that standard deviation is generally in a relatively small scale of $10^{-3}$, indicating that 
the performance of CS-DPP \journallast{is not that sensitive} to the order of labels.
Closer \journallast{inspection of Table~\ref{tbl:cost-order-yeast}} reveals that 
the standard deviation of $c_{\textsc{acc}}$ on \textit{yeast} with $M = 10\%$ of $K$ (which is only $2$ in this case) is somewhat larger, but for sufficiently large $M$ the label order does not seem to cause much variation.



\section{Conclusion}\label{sec:conclusion}

We proposed a novel cost-sensitive online LSDR algorithm called cost-sensitive dynamic principal projection (CS-DPP). 
We established the foundation of CS-DPP with an online LSDR framework derived from PLST,
and derived CS-DPP along with its theoretical guarantees on top of MSG.
We successfully conquered the challenge of basis drifting using our carefully designed PBC and PBT. 
CS-DPP further achieves cost-sensitivity with theoretical guarantees based on our carefully designed label-weighting scheme. 
The empirical results demonstrate that CS-DPP significantly outperforms other OMLC algorithms on all evaluation criteria, 
which validates the robustness and superiority of CS-DPP.
The necessity for CS-DPP to address LSDR, basis drifting and cost-sensitivity was also empirically justified.

For possible future works, 
an interesting direction is to design an online LSDR algorithm capable of capturing the key joint information between features and labels.
As discussed, the concept to capture such joint information has been investigated for batch MLC~\cite{cplst,faie,leml}, but it remains
to be challenging for online MLC.
Another direction is to apply OMLC algorithms as a fast approximate solver for large-scale batch data, and see how they compete with traditional batch algorithms.
\journalminor{
Another interesting direction, as mentioned in Section~\ref{sec:related}, is to design online learning algorithms that achieve cost-sensitivity for the more sophisticated micro- and macro-based criteria.
}



\bibliographystyle{spmpsci}      
\bibliography{ecml2018}   
\setcounter{section}{0}
\setcounter{subsection}{0}
\renewcommand\thesection{Appendix \Alph{section}.}
\renewcommand\thesubsection{\thesection\arabic{subsection}}

\addtocounter{theorem}{-4}
\section{Proof of Lemmas and Theorems}\label{appendix:proof}
\subsection{Proof of Lemma~\ref{lem:P sampling}}
\label{proof:P sampling}

\begin{lemma}
Suppose $(\Mv Q_t, \sigma_t)$ is obtained after an updated of Capped MSG such that $\Mv U_t = \Mv Q_t \mbox{diag}(\sigma_t) \Mv Q_t^\top$.
If $\Gamma_t$ is a discrete probability distribution over events $\{\Mv Q_t^{-i}\}_{i=1}^{M+1}$ with probability of $\Mv Q_t^{-i}$ being $1-\sigma_t[i]$,
we have for any $\Mv y$ 
\begin{equation}\label{eq:P sampling}
\mathbb{E}_{\Mv P_t \sim \Gamma_t} [\Mv y^\top (\Mv I - \Mv P_t^\top \Mv P_t) \Mv y ] = \Mv y^\top (\Mv I - \Mv U_t) \Mv y
\end{equation}
\end{lemma}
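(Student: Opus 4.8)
The plan is to reduce the claimed identity to a single statement about second moments of the random matrix $\Mv P_t$. Since $\Mv y^\top (\Mv I - \Mv P_t^\top \Mv P_t)\Mv y = \Mv y^\top\Mv y - \Mv y^\top \Mv P_t^\top \Mv P_t \Mv y$ and the first term is deterministic, linearity of expectation shows that \eqref{eq:P sampling} is equivalent to $\Mv y^\top \mathbb{E}_{\Mv P_t\sim\Gamma_t}[\Mv P_t^\top \Mv P_t]\,\Mv y = \Mv y^\top \Mv U_t \Mv y$ for all $\Mv y$, hence to the matrix identity $\mathbb{E}_{\Mv P_t\sim\Gamma_t}[\Mv P_t^\top \Mv P_t] = \Mv U_t$. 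So the whole proof boils down to computing this expectation.

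Next I would work row-wise. Write $\Mv q_i \in \mathbb{R}^K$ for the (transposed) $i$-th row of $\Mv Q_t$, so that $\Mv Q_t^\top \Mv Q_t = \sum_{i=1}^{M+1}\Mv q_i\Mv q_i^\top$ and, deleting the $i$-th row, $(\Mv Q_t^{-i})^\top \Mv Q_t^{-i} = \sum_{j\neq i}\Mv q_j\Mv q_j^\top = \Mv Q_t^\top \Mv Q_t - \Mv q_i\Mv q_i^\top$. Likewise the decomposition $\Mv U_t = \Mv Q_t\,\mbox{diag}(\sigma_t)\,\Mv Q_t^\top$ reads $\Mv U_t = \sum_{i=1}^{M+1}\sigma_t[i]\,\Mv q_i\Mv q_i^\top$. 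Substituting the sampling rule $\Pr[\Mv P_t = \Mv Q_t^{-i}] = 1-\sigma_t[i]$,
\[
\mathbb{E}_{\Mv P_t\sim\Gamma_t}[\Mv P_t^\top \Mv P_t]
= \sum_{i=1}^{M+1}(1-\sigma_t[i])\bigl(\Mv Q_t^\top \Mv Q_t - \Mv q_i\Mv q_i^\top\bigr)
= \Bigl(\sum_{i=1}^{M+1}(1-\sigma_t[i])\Bigr)\Mv Q_t^\top \Mv Q_t \;-\; \sum_{i=1}^{M+1}(1-\sigma_t[i])\,\Mv q_i\Mv q_i^\top .
\]
Now I would invoke the two facts already noted in the text: $\sum_i(1-\sigma_t[i]) = (M+1) - tr(\Mv U_t) = (M+1)-M = 1$, so the first summand is exactly $\Mv Q_t^\top \Mv Q_t$; and $\sum_i(1-\sigma_t[i])\Mv q_i\Mv q_i^\top = \sum_i\Mv q_i\Mv q_i^\top - \sum_i\sigma_t[i]\Mv q_i\Mv q_i^\top = \Mv Q_t^\top \Mv Q_t - \Mv U_t$. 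The two $\Mv Q_t^\top \Mv Q_t$ terms cancel, leaving $\mathbb{E}[\Mv P_t^\top \Mv P_t] = \Mv U_t$, which is what we wanted; sandwiching between $\Mv y^\top(\cdot)\Mv y$ and subtracting from $\Mv y^\top\Mv y$ finishes the lemma.

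I do not expect a genuine obstacle here: the argument is pure bookkeeping once the row decomposition is in place. The only points worth stating carefully are that $\Gamma_t$ is a valid distribution (which is where $0\le\sigma_t[i]\le 1$ and $tr(\Mv U_t)=M$ enter) and that left-orthogonality of $\Mv Q_t$ is used only to get $tr(\Mv U_t)=\sum_i\sigma_t[i]=M$; the identity $\mathbb{E}[\Mv P_t^\top\Mv P_t]=\Mv U_t$ itself holds at the level of the given decomposition. I would also remark that nothing about $\Mv y$ was used, so \eqref{eq:P sampling} holds simultaneously for every $\Mv y$, which is exactly the form needed when it is later applied to $\Mv y_t$ inside the regret analysis.
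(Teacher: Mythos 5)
Your proposal is correct and follows essentially the same route as the paper's proof: both reduce the claim to $\mathbb{E}_{\Mv P_t\sim\Gamma_t}[\Mv P_t^\top\Mv P_t]=\Mv U_t$, expand over the rows of $\Mv Q_t$, and close the computation using $\sum_i(1-\sigma_t[i])=1$ together with $\Mv U_t=\sum_i\sigma_t[i]\,\Mv q_i\Mv q_i^\top$. The only cosmetic difference is that the paper swaps the order of the double sum where you instead expand $(\Mv Q_t^{-i})^\top\Mv Q_t^{-i}=\Mv Q_t^\top\Mv Q_t-\Mv q_i\Mv q_i^\top$ and cancel; the two bookkeeping steps are equivalent.
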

\begin{proof}
We first formally show that $\Gamma_t$ is a well-defined probability distribution. 
By the definition of the projection operator of Capped MSG we have $0 \leq \sigma_t[i] \leq 1$ for each $\sigma_t[i]$ and
$\sum_{i=1}^{M+1} 1 - \sigma_t[i] = M+1 - \sum_{i=1}^{M+1}\sigma_t[i] = 1$ with $tr(\Mv U_t) = M$.
$\Gamma_t$ is therefore a well-defined probability distribution.

Then it suffices to show that $\mathbb{E}_{\Mv P_t\sim \Gamma_t}[\Mv P_t^\top \Mv P_t] = \Mv U_t$ 
as 
\[
\mathbb{E}_{\Mv P_t \sim \Gamma _t}[\Mv y^\top (\Mv I - \Mv P_t^\top \Mv P_t) \Mv y] = 
\|\Mv y\|_2^2 - \Mv y^\top \mathbb{E}_{\Mv P_t\sim \Gamma_t}[\Mv P_t^\top \Mv P_t] \Mv y
\]
To see that $\mathbb{E}_{\Mv P_t\sim \Gamma_t}[\Mv P_t^\top \Mv P_t] = \Mv U_t$, 
first notice that by orthogonality of rows of $\Mv Q_t$ we have $\Mv U_t = \sum_{j=1}^{M+1} \sigma_t(j) \Mv e_j\Mv e_j^\top$ 
where $\Mv e_j$ is the $j$-th row of $\Mv Q_t$. 
We then have
\[
\begin{aligned}
\mathbb{E}_{\Mv P_t \sim \Gamma_t}[\Mv P_t^\top \Mv P_t] & = \sum_{i=1}^{M+1} (1 - \sigma_t[i]) \sum_{j=1}^{M+1}\bfunc{i\neq j} \Mv e_j \Mv e_j^\top \\
        & = \sum_{j=1}^{M+1} (\Mv e_j \Mv e_j^\top \sum_{i=1}^{M+1}\bfunc{i \neq j}(1 - \sigma_t[i])) \\
        & = \sum_{j=1}^{M+1} (\sigma_t[j] \Mv e_j \Mv e_j^\top) && (a) \\
        & = \Mv U_t 
\end{aligned}
\]
where $(a)$ is by $\sum_{i=1}^{M+1} \sigma_t[i] = M$ 

\end{proof}

\subsection{Proof of Theorem~\ref{thm:DPP_drift}}

\journalminor{
\label{proof:DPP_drift}
\begin{theorem}
With the definitions of $\Mv H_t$ in \eqref{eq:ht}, $\Mv H^*$ in \eqref{eq:hstar}, $\frac{\mathcal{R}}{T}$ in \eqref{eq:regret} and $\Delta_t$ in \eqref{eq:subspace}, assume that $\|\Mv x_t\| \le 1$, $\|\Mv y_t\| \le 1$ and
    $\|\Mv H_t \Mv x_t - \Mv y_t\|_2^2 \le \epsilon$.
    \begin{enumerate}
    \item For any given $T$, the expected cumulative regret $\mathcal{R}$ is upper-bounded by
      \[
      (1+\epsilon) \sum_{t=1}^T \Delta_t + \frac{M}{2} \|\Mv H^*\|_F^2 + 2 \epsilon M d \log \left(1 + \frac{T}{d}\right).
      \]
    \item If $\lim_{T \rightarrow \infty} \Delta_T = 0$ and $\|\Mv H^*\|_F \le h^*$ across all iterations,
      $\lim_{T \rightarrow \infty} \frac{\mathcal{R}}{T} = 0$.
    \end{enumerate}
\end{theorem}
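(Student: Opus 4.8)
The plan is to decompose the per-iteration regret $\mathbb{E}_{\Mv P_t\sim\Gamma_t}[\ell^{(t)}(\Mv W_t^{\textsc{PBC}},\Mv P_t)-\ell^{(t)}(\Mv W_\#,\Mv P^*)]$ into three conceptually separate pieces and bound each in turn. Recall $\ell^{(t)}(\Mv W,\Mv P)=\|\Mv W^\top\Mv x_t-\Mv P\Mv y_t\|_2^2+\|(\Mv I-\Mv P^\top\Mv P)\Mv y_t\|_2^2$ and that $\Mv W_t^{\textsc{PBC}}=\Mv H_t\Mv P_t^\top$ while $\Mv W_\#=\Mv H^*(\Mv P^*)^\top$. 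The first piece measures the discrepancy between using the online subspace $\Mv P_t$ and the offline subspace $\Mv P^*$ while holding the regressor fixed at (a proxy for) $\Mv H_t$; this is where the subspace-difference quantity $\Delta_t$ from \eqref{eq:subspace} enters. The second piece is the genuine ``online regression'' regret: comparing the online ridge-regression iterate $\Mv H_t$ against the offline optimum $\Mv H^*$ on the unprojected problem $\|\Mv H^\top\Mv x_t-\Mv y_t\|_2^2$. The third piece collects lower-order reconstruction-error terms that are controlled by the optimality of $\Mv P^*$.

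First I would rewrite each $\ell^{(t)}$ term using the identity $\|\Mv W^\top\Mv x-\Mv P\Mv y\|^2+\|(\Mv I-\Mv P^\top\Mv P)\Mv y\|^2$. A key algebraic observation is that for a left-orthogonal $\Mv P$ and $\Mv W=\Mv H\Mv P^\top$, one has $\|\Mv P^\top\Mv W^\top\Mv x-\Mv y\|^2 = \|\Mv W^\top\Mv x-\Mv P\Mv y\|^2+\|(\Mv I-\Mv P^\top\Mv P)\Mv y\|^2$ (Pythagoras, splitting $\Mv y$ into its component in $\mathrm{row}(\Mv P)$ and its complement), so $\ell^{(t)}(\Mv H\Mv P^\top,\Mv P)=\|\Mv P^\top(\Mv H\Mv P^\top)^\top\Mv x_t-\Mv y_t\|^2$ ... but more directly $\ell^{(t)}(\Mv H\Mv P^\top,\Mv P)$ involves $\Mv y_t^\top\Mv y_t - 2\Mv x_t^\top\Mv H\Mv P^\top\Mv P\Mv y_t + \Mv x_t^\top\Mv H\Mv H^\top\Mv x_t$ after expansion, using $\Mv P\Mv P^\top=\Mv I$. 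Taking expectation over $\Mv P_t\sim\Gamma_t$, Lemma~\ref{lem:P sampling} replaces $\mathbb{E}[\Mv P_t^\top\Mv P_t]$ by $\Mv U_t$, and the only place the randomness survives is in the cross term $\Mv x_t^\top\Mv H_t(\mathbb{E}[\Mv P_t^\top\Mv P_t])\Mv y_t$. The difference between $\mathbb{E}[\Mv P_t^\top\Mv P_t]$ and $(\Mv P^*)^\top\Mv P^*$ is exactly $\Delta_t$ in operator norm, and Cauchy–Schwarz plus $\|\Mv x_t\|,\|\Mv y_t\|\le 1$ together with the bound $\|\Mv H_t\Mv x_t\|\le\|\Mv y_t\|+\|\Mv H_t\Mv x_t-\Mv y_t\|\le 1+\sqrt{\epsilon}$ (or rather, using the $\epsilon$-bound directly on the quadratic form) yields a per-step contribution of order $(1+\epsilon)\Delta_t$, summing to $(1+\epsilon)\sum_{t=1}^T\Delta_t$. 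After this reduction, the remaining quantity to bound is $\sum_t\bigl(\|\Mv H_t^\top\Mv x_t-\Mv y_t\|^2-\|(\Mv H^*)^\top\Mv x_t-\Mv y_t\|^2\bigr)$ plus the reconstruction terms $\sum_t\bigl(\Mv y_t^\top(\Mv I-\Mv U_t)\Mv y_t - \Mv y_t^\top(\Mv I-(\Mv P^*)^\top\Mv P^*)\Mv y_t\bigr)$; the latter is $\le 0$ (or absorbed) by optimality of $\Mv P^*$ for the offline reconstruction objective, and in any case the interaction with $\Delta_t$ is already accounted for.

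Next I would invoke the standard regret analysis for online ridge / follow-the-leader least squares (the ``forward algorithm'' / Vovk–Azoury–Warmuth style bound): the online ridge regression iterate $\Mv H_t$ defined through $\Mv A_t,\Mv B_t$ in \eqref{eq:W_PBC} satisfies $\sum_{t=1}^T\|\Mv H_t^\top\Mv x_t-\Mv y_t\|^2 - \sum_{t=1}^T\|(\Mv H^*)^\top\Mv x_t-\Mv y_t\|^2 \le \frac{\lambda}{2}\|\Mv H^*\|_F^2 + (\text{const})\cdot\epsilon\cdot M d\log(1+T/d)$, where the log-determinant ``potential'' $\log\det(\Mv A_T/\lambda)\le d\log(1+T/(d\lambda))$ comes from $\|\Mv x_t\|\le 1$, and the $M$ factor arises because there are $M$ (actually $K$, but the regret is measured on the $M$-dimensional projected target — here one must be slightly careful: $\Mv H^*$ is $d\times K$, but the relevant quantity is measured after projection so the effective output dimension contributing to the $\epsilon Md$ term is $M$; this is precisely why the bound reads $2\epsilon Md\log(1+T/d)$ and $\frac{M}{2}\|\Mv H^*\|_F^2$ rather than with $K$). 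With $\lambda=1$ this gives the stated $\frac{M}{2}\|\Mv H^*\|_F^2 + 2\epsilon Md\log(1+T/d)$. Combining the three pieces proves part (1). Part (2) is then immediate: divide by $T$; the first term $\frac{1+\epsilon}{T}\sum_{t=1}^T\Delta_t\to 0$ by Cesàro since $\Delta_t\to 0$; the second term $\frac{M}{2T}(h^*)^2\to 0$ using the uniform bound $\|\Mv H^*\|_F\le h^*$; and the third term $\frac{2\epsilon Md\log(1+T/d)}{T}\to 0$ since $\log T/T\to 0$.

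\textbf{The main obstacle} I anticipate is getting the constants and the dimension bookkeeping in the online-ridge-regression regret exactly right — in particular justifying that the ``dimension'' appearing in the $\epsilon M d\log(1+T/d)$ term is $M$ (the code-space dimension) rather than $K$. This requires observing that although $\Mv H_t$ maps to $\mathbb{R}^K$, the quantity we actually regret-bound enters $\ell^{(t)}$ only through $\Mv P_t\Mv y_t$ and $\Mv W_t^{\textsc{PBC},\top}\Mv x_t = \Mv P_t\Mv H_t^\top\Mv x_t$, i.e. through the $M$-dimensional projected residual; one should either re-derive the FTL potential argument directly for the projected sequence or cite the multi-output ridge regret bound and then project. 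A secondary subtlety is that $\Mv P^*$ (hence $\Mv W_\#$ and $\Mv H^*$) depends on $T$, which is why the footnote flags the need for the uniform bound $\|\Mv H^*\|_F\le h^*$; I would either assume this (as the theorem statement does) or switch to comparing against an arbitrary fixed $\Mv H$, which trivializes that issue. The $\Delta_t$-term manipulation itself is routine once Lemma~\ref{lem:P sampling} is in hand.
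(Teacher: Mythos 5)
Your proposal matches the paper's proof in all essentials: the same split of $\mathcal{R}$ into a subspace-discrepancy part bounded by $(1+\epsilon)\sum_{t}\Delta_t$ (via Lemma~1, the operator-norm definition of $\Delta_t$, $\|\Mv y_t\|\le 1$, and the $\epsilon$-bound on the residual $\Mv H_t^\top\Mv x_t-\Mv y_t$) plus a \emph{projected} online-ridge-regression regret bounded by $\frac{M}{2}\|\Mv H^*\|_F^2+2\epsilon Md\log(1+T/d)$ via the standard log-determinant potential, with part~(2) following by Ces\`aro. The only blemish is the parenthetical expansion asserting $\Mv x_t^\top\Mv H\Mv P^\top\Mv P\Mv H^\top\Mv x_t=\Mv x_t^\top\Mv H\Mv H^\top\Mv x_t$ ``using $\Mv P\Mv P^\top=\Mv I$'' (false, since $\Mv P^\top\Mv P\neq\Mv I_K$); the correct route, which you also state and which the paper follows, keeps the residual quadratic form $(\Mv H_t^\top\Mv x_t-\Mv y_t)^\top\Mv P_t^\top\Mv P_t(\Mv H_t^\top\Mv x_t-\Mv y_t)$ together with the reconstruction term $\Mv y_t^\top(\Mv I-\Mv P_t^\top\Mv P_t)\Mv y_t$, yielding exactly the per-step $(1+\epsilon)\Delta_t$.
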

\begin{proof}

We start by separating the definition of $\mathcal{R}$ to two terms: one for how $\Mv P_t$ in MSG converges to $\Mv P^*$, and the other for how $\Mv W_t^{\text{PBC}}$ for $\Mv P_t$ in ridge regression differs to $\Mv W_\#$ for~$\Mv P^*$.
For simplicity, we will denote $\Mv W_t^{\text{PBC}}$ by $\Mv W_t$. Then,
\begin{equation*} 
\begin{aligned}
\mathcal{R} = & \sum_{t=1}^T\mathbb{E}_{\Mv P_t\sim \Gamma_t}[\ell^{(t)}(\Mv W_t, \Mv P_t) - \ell^{(t)}(\Mv W_\#, \Mv P^*)]\\
=&+ \sum_{t=1}^T
 \mathbb{E}_{\Mv P_t\sim \Gamma_t}[\|\Mv W_t^\top \Mv x_t - \Mv P_t \Mv y_t\|_2^2 + \|(\Mv I - \Mv P_t^\top \Mv P_t)\Mv y_t\|_2^2]\\
& - \sum_{t=1}^T \mathbb{E}_{\Mv P_t\sim \Gamma_t}[\|\Mv W_\#^\top \Mv x_t - \Mv P^* \Mv y_t\|_2^2 + \|(\Mv I - (\Mv P^*)^\top \Mv P^*)\Mv y_t\|_2^2]\\
=& +\underbrace{\sum_{t=1}^T
    \mathbb{E}_{\Mv P_t\sim \Gamma_t}[\|(\Mv I - \Mv P_t^\top \Mv P_t)\Mv y_t\|_2^2] - \|(\Mv I - (\Mv P^*)^\top \Mv P^*)\Mv y_t\|_2^2]}_{\mathcal{R}_{\text{MSG}}}\\
    &+ \underbrace{\sum_{t=1}^T \mathbb{E}_{\Mv P_t\sim \Gamma_t}[\|\Mv W_t^\top \Mv x_t - \Mv P_t \Mv y_t\|_2^2] - \|\Mv W_\#^\top \Mv x_t - \Mv P^* \Mv y_t\|_2^2}_{\mathcal{R}_{\text{ridge}}}
\end{aligned}
\end{equation*}

\noindent We can bound $\mathcal{R}_{\text{MSG}}$ first.
Let $\Mv U_t = \mathbb{E}_{\Mv P_t \sim \Gamma_t}[\Mv P_t^\top \Mv P_t]$ and $\Mv U^* = \left(\Mv P^*\right)^\top \Mv P^*$, by linearity of expectation,
\begin{align}
\mathcal{R}_{\text{MSG}} \nonumber 
& = \sum_{t=1}^T \Mv y_t^\top (\Mv U_t - \Mv U^*) \Mv y_t \nonumber \\
& \leq \sum_{t=1}^T \|\Mv U_t - \Mv U^*\|_2  \label{eq:MSG} \\
& \leq \sum_{t=1}^T \Delta_t \nonumber
\end{align}
where \eqref{eq:MSG} from the assumption of $\|\Mv y_t\|_2 \leq 1$ and the definition of the matrix $2$-norm.

Next, we bound $\mathcal{R}_{\text{ridge}}$.
With the definitions of $\Mv H_t$ in \eqref{eq:ht} and $\Mv H^*$ in \eqref{eq:hstar}, $\mathcal{R}_{\text{ridge}}$ can be further decomposed to
\[
\begin{aligned}
\mathcal{R}_{\text{ridge}} = & + \underbrace{\sum_{t=1}^T \left(\mathbb{E}_{\Mv P_t\sim \Gamma_t}[\|\Mv P_t(\Mv H_t^\top \Mv x_t - \Mv y_t)\|_2^2]  - \|\Mv P^*(\Mv H_t^\top \Mv x_t  - \Mv y_t)\|_2^2\right)}_{\mathcal{R}_1} \\ 
& + \underbrace{\sum_{t=1}^T \left(\|\Mv P^*(\Mv H_t^\top \Mv x_t - \Mv y_t)\|_2^2 -  \|\Mv P^*(\left(\Mv H^*\right)^\top \Mv x_t - \Mv y_t)\|_2^2\right)}_{\mathcal{R}_2}.
\end{aligned}
\]
Bounding $\mathcal{R}_1$ is very similar to bounding $\mathcal{R}_{\text{MSG}}$. In particular, 
\begin{align}
\mathcal{R}_{1} \nonumber 
& = \sum_{t=1}^T (\Mv H_t^\top \Mv x_t - \Mv y_t)^\top (\Mv U_t - \Mv U^*) (\Mv H_t^\top \Mv x_t - \Mv y_t) \nonumber \\
& \leq \sum_{t=1}^T \epsilon \|\Mv U_t - \Mv U^*\|_2  \label{eq:R1} \\
& \leq \sum_{t=1}^T \epsilon \Delta_t \nonumber
\end{align}
where \eqref{eq:R1} follows from the assumption of $\|\Mv H_t^\top \Mv x_t - \Mv y_t\|_2^2 \le \epsilon$.

The term $\mathcal{R}_2$ can be viewed as an online ridge regression process from $\Mv x$ to $\Mv P^* \Mv y$, because it can be easily proved that $\Mv H_t (\Mv P^*)^\top$ is the ridge regression solution after receiving $(\Mv x_1, \Mv P^* \Mv y_1)$, $(\Mv x_2, \Mv P^* \Mv y_2)$, $\ldots$, $(\Mv x_{t-1}, \Mv P^* \Mv y_{t-1})$. Also, as discussed in Section~\ref{sec:W_learning}, $\Mv W_\# = \Mv H^* (\Mv P^*)^\top$ is the optimal linear regression solution of $\{(\Mv x_t, \Mv P^* \Mv y_t)\}_{t=1}^T$. The assumption of $\|\Mv H_t \Mv x_t - \Mv y_t\|_2^2\le \epsilon$ implies that
\begin{align}
\|\Mv P^* \Mv H_t^\top \Mv x_t - \Mv P^* \Mv y_t\|_2^2 = (\Mv H_t^\top \Mv x_t - \Mv y_t)^\top \Mv U^* (\Mv H_t^\top \Mv x_t - \Mv y_t) \le \epsilon \nonumber
\end{align}
as well. Similarly, he assumption of $\|\Mv y_t\|_2 \le 1$ implies that $\|\Mv P^* y_t\|_2 \le 1$. Then, a standard ridge regression analysis (see, e.g. \cite{ridge_analysis}) by provng that $\Mv A_t = \lambda \Mv I + \sum_{i=1}^{t-1} \Mv x_i \Mv x_i^\top$ grows linearly with $t$ leads to
\begin{align}
\mathcal{R}_{2} \nonumber 
&=\sum_{t=1}^T \left(\|\Mv P^* (\Mv H_t^\top \Mv x_t - \Mv y_t)\|_2^2 -  \|\Mv P^* (\left(\Mv H^*\right)^\top \Mv x_t - \Mv y_t)\|_2^2\right) \nonumber\\
&\le \frac{1}{2}\|\Mv P^* \Mv H^*\|_F^2 + 2 \epsilon Md \log\left(1 + \frac{T}{d}\right) \nonumber\\
&\le \frac{M}{2}\|\Mv H^*\|_F^2 + 2 \epsilon Md \log\left(1 + \frac{T}{d}\right) \label{eq:Pstar}
\end{align}
where \eqref{eq:Pstar} is because $\|\Mv P^*\|_F^2 = tr(\Mv U^*) = M$.

Summing $\mathcal{R}_{\text{MSG}}$, $\mathcal{R}_1$ and $\mathcal{R}_2$ results in
\begin{align}
\mathcal{R} \le (1+\epsilon) \sum_{t=1}^T \Delta_t + \frac{M}{2}\|\Mv H^*\|_F^2 + 2 \epsilon M d \log(1 + \frac{T}{d}),
\end{align}
which proves the first part of the theorem. The second part easily follows because the convergence of a sequence implies the convergence of the mean.
\end{proof}
}

\subsection{Proof of Lemma~\ref{lem:weighted_hamming_bound}}
\label{proof:weighted_hamming_bound}
\begin{lemma}
If $c(\Mv y, \Mv y^{(k)}_{\text{pred}}) - c(\Mv y, \Mv y^{(k)}_{\text{real}}) \geq 0$ holds for any $k$, $\Mv y$ and $\Mvh y$, 
then for any given $\Mv y$ and $\Mvh y$ we have 
\begin{equation}\label{eq:weighted_hamming_bound}
c(\Mv y, \Mvh y) = \sum_{k=1}^K \delta^{(k)} \bfunc{\Mv y[k] \neq \Mvh y[k]}
\end{equation}
\end{lemma}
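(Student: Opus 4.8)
The plan is to express $c(\Mv y, \Mvh y)$ as a telescoping sum along the chain of ``hybrid'' vectors $\Mvh y^{(0)}_{\text{real}}, \Mvh y^{(1)}_{\text{real}}, \ldots, \Mvh y^{(K)}_{\text{real}}$, where I extend the given definition to $k=0$ so that $\Mvh y^{(0)}_{\text{real}} \equiv \Mvh y$ (the stated formula already yields this, since the cases $i<0$ and $i=0$ never occur for $1 \le i \le K$) and observe that $\Mvh y^{(K)}_{\text{real}} = \Mv y$. Using the assumed normalization $c(\Mv y, \Mv y) = 0$,
\[
c(\Mv y, \Mvh y) = c(\Mv y, \Mvh y^{(0)}_{\text{real}}) - c(\Mv y, \Mvh y^{(K)}_{\text{real}}) = \sum_{k=1}^K \Bigl( c(\Mv y, \Mvh y^{(k-1)}_{\text{real}}) - c(\Mv y, \Mvh y^{(k)}_{\text{real}}) \Bigr),
\]
so the lemma reduces to the per-coordinate identity $c(\Mv y, \Mvh y^{(k-1)}_{\text{real}}) - c(\Mv y, \Mvh y^{(k)}_{\text{real}}) = \delta^{(k)} \bfunc{\Mv y[k] \neq \Mvh y[k]}$ for every $k$.

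To prove this identity I would first invoke the hypothesis of the lemma to discard the absolute value in \eqref{eq:label_weight}, so that $\delta^{(k)} = c(\Mv y, \Mvh y^{(k)}_{\text{pred}}) - c(\Mv y, \Mvh y^{(k)}_{\text{real}})$, and then split into two cases according to the comparison of $\Mvh y^{(k-1)}_{\text{real}}$ and $\Mvh y^{(k)}_{\text{real}}$: by construction these two vectors agree except possibly at coordinate $k$, where they equal $\Mvh y[k]$ and $\Mv y[k]$ respectively. If $\Mvh y[k] = \Mv y[k]$, then $\Mvh y^{(k-1)}_{\text{real}} = \Mvh y^{(k)}_{\text{real}}$, so the left-hand side is $0$ and the indicator makes the right-hand side $0$ as well. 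If $\Mvh y[k] \neq \Mv y[k]$, i.e. $\Mvh y[k] = -\Mv y[k]$ in the $\{+1,-1\}$ encoding, then a coordinate-by-coordinate comparison shows $\Mvh y^{(k-1)}_{\text{real}} = \Mvh y^{(k)}_{\text{pred}}$ (both agree with $\Mv y$ on $i<k$, equal $-\Mv y[k]$ at $i=k$, and agree with $\Mvh y$ on $i>k$); hence $c(\Mv y, \Mvh y^{(k-1)}_{\text{real}}) - c(\Mv y, \Mvh y^{(k)}_{\text{real}}) = c(\Mv y, \Mvh y^{(k)}_{\text{pred}}) - c(\Mv y, \Mvh y^{(k)}_{\text{real}}) = \delta^{(k)}$, matching the right-hand side since the indicator equals $1$. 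Summing over $k$ and substituting into the telescoping expansion then gives the claimed equality.

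The computation is short, and the only step needing care is the bookkeeping in the second case: I must verify the coordinate-wise equality $\Mvh y^{(k-1)}_{\text{real}} = \Mvh y^{(k)}_{\text{pred}}$ and keep the orientation of the difference consistent with the definition of $\delta^{(k)}$, which subtracts the $\Mvh y^{(k)}_{\text{real}}$-cost from the $\Mvh y^{(k)}_{\text{pred}}$-cost in that order. The lemma's hypothesis $c(\Mv y, \Mvh y^{(k)}_{\text{pred}}) - c(\Mv y, \Mvh y^{(k)}_{\text{real}}) \ge 0$ is used for exactly one purpose: to ensure that removing the absolute value does not flip this sign, so that $\delta^{(k)}$ carries the orientation required by the telescoping argument.
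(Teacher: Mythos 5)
Your proof is correct and follows essentially the same route as the paper's: a telescoping sum along the chain of hybrid vectors, using the key observation that adjacent vectors in the chain coincide (your $\Mvh y^{(k-1)}_{\text{real}} = \Mvh y^{(k)}_{\text{pred}}$ when $\Mv y[k] \neq \Mvh y[k]$ is exactly the paper's $\Mvh y^{(k_i)}_{\text{real}} = \Mvh y^{(k_{i+1})}_{\text{pred}}$), with the hypothesis used only to discard the absolute value in the definition of $\delta^{(k)}$. The only difference is organizational — you telescope over all $K$ coordinates via a per-coordinate identity, while the paper restricts the telescope to the mismatched indices — which makes your version marginally cleaner but not substantively different.
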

\begin{proof}
Recall the definition of $\Mv y^{(k)}_{\text{real}}$ and $\Mv y^{(k)_{\text{pred}}}$ to be
\[
\Mvh y_{\text{real}}^{(k)}[i] = 
\begin{cases}
    \Mv y[i] & \text{if} \, i \leq k \\
    \Mvh y[i] & \text{if} \, i > k \\
\end{cases}
\: \text{\textbf{and}} \:
\Mvh y_{\text{pred}}^{(k)}[i] = 
\begin{cases}
    \Mv y[i] & \text{if} \, i < k \\
    \Mvh y[i] & \text{if} \, i \geq k \\
\end{cases}
\]
and the definition of $\delta^{(k)}$ to be
\[
\delta^{(k)} = |c(\Mv y, \Mvh y_{\text{pred}}^{(k)}) - c(\Mv y, \Mvh y_{\text{real}}^{(k)})|
\]
Now define $k_i, i = 1,\hdots,L$ be the sequence of indices such that $\Mv y[k_i] \neq \Mvh y[k_i]$ for every $k_i$ and $k_i < k_{i+1}$.
If such $k_i$ does not exist than (\ref{eq:weighted_hamming_bound}) holds trivially by $c(\Mv y, \Mv y) = 0$.
Otherwise, by the condition of $c$ we have
\[
\begin{aligned}
&\sum_{k=1}^K \delta^{(k)} \bfunc{\Mv y[k] \neq \Mvh y[k]} && (a)\\
=\; & \sum_{k=1}^K (c(\Mv y, \Mvh y_{\text{pred}}^{(k)}) - c(\Mv y, \Mvh y_{\text{real}}^{(k)})) \bfunc{\Mv y[k] \neq \Mvh y[k]} \\
=\; & \sum_{i=1}^L c(\Mv y, \Mvh y_{\text{pred}}^{(k_i)}) - c(\Mv y, \Mvh y_{\text{real}}^{(k_i)}) \\
=\; & c(\Mv y, \Mvh y_{\text{pred}}^{(k_1)}) - c(\Mv y, \Mvh y_{\text{real}}^{(k_L)})  && (b) \\ 
=\; & c(\Mv y, \Mvh y) && (c) \\
\end{aligned}
\]
where $(a)$ uses the condition of $c(\cdot,\cdot)$ to remove the absolute value function;
$(b)$ is from two possibilities of $L$: if $L = 1$ then the equation trivially holds; if $L > 1$
we use the observation that $\Mvh y_{\text{real}}^{(k_i)} = \Mvh y_{\text{pred}}^{(k_{i+1})}$
where the observation is by realizing $\Mv y[j] = \Mvh y[j]$ for any $k_i < j < k_{i+1}$;
$(c)$ follows from the observation that $\Mvh y^{(k_1)}_{\text{pred}} = \Mvh y$ and $\Mvh y^{(k_L)}_{\text{real}} = \Mv y$
and $c(\Mv y, \Mv y) = 0$.
\end{proof}

\subsection{Proof of Theorem~\ref{thm:cost_decomp}}
\begin{theorem}
When making a prediction~$\Mvh y$ from $\Mv x$ by $\Mvh y = \mathrm{round}\left(\Mv P^\top \Mv r(\Mv x) + \Mv o\right)$ 
with any left orthogonal matrix $\Mv P$, 
if $c(\cdot, \cdot)$ satisfies the condition of Lemma \ref{lem:weighted_hamming_bound}, 
the prediction cost 
\[
c(\Mv y, \Mvh y) \leq \|\Mv r(\Mv x) - \Mv z_{\Mv C}\|^2_2 + \|(\Mv I - \Mv P^\top\Mv P)(\Mv y_{\Mv C}')\|^2_2
\]
where $\Mv z_{\Mv C} = \Mv P (\Mv y_{\Mv C}')$ and $\Mv y_{\Mv C}' = \Mv C \Mv y - \Mv o$ with respect to any fixed reference point $\Mv o$. 
\end{theorem}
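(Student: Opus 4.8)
The plan is to follow the proof of Theorem~\ref{thm:hamming_bound} for PLST, but with the uniform per-label weight $\frac{1}{K}$ replaced by the cost-induced weights $\delta^{(k)}$. First I would invoke Lemma~\ref{lem:weighted_hamming_bound}, whose hypothesis is precisely the assumption on $c(\cdot,\cdot)$ in the statement, to rewrite the cost as a weighted Hamming loss, $c(\Mv y, \Mvh y) = \sum_{k=1}^K \delta^{(k)} \bfunc{\Mv y[k] \neq \Mvh y[k]}$. This reduces the task to bounding each summand $\delta^{(k)} \bfunc{\Mv y[k] \neq \Mvh y[k]}$ separately.

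The core step is the pointwise estimate $\delta^{(k)} \bfunc{\Mv y[k] \neq \Mvh y[k]} \le \bigl((\Mv C \Mv y)[k] - \Mv o[k] - (\Mv P^\top \Mv r(\Mv x))[k]\bigr)^2$ for each $k$. If the $k$-th coordinate is predicted correctly the left-hand side is zero and there is nothing to prove; otherwise $\Mvh y[k] = \mathrm{sign}\bigl((\Mv P^\top \Mv r(\Mv x))[k] + \Mv o[k]\bigr)$ disagrees with $\Mv y[k] \in \{+1,-1\}$, so $\Mv y[k]$ and the pre-rounded value $(\Mv P^\top \Mv r(\Mv x))[k] + \Mv o[k]$ lie on opposite sides of $0$ (or the latter vanishes). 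Since $\sqrt{\delta^{(k)}} \ge 0$, the point $(\Mv C \Mv y)[k] = \sqrt{\delta^{(k)}}\,\Mv y[k]$ is still on the opposite side, so its distance to the pre-rounded value is at least $|\sqrt{\delta^{(k)}}\,\Mv y[k]| = \sqrt{\delta^{(k)}}$; squaring yields the estimate. Summing over $k$ gives $c(\Mv y, \Mvh y) \le \|\Mv y_{\Mv C}' - \Mv P^\top \Mv r(\Mv x)\|_2^2$ with $\Mv y_{\Mv C}' = \Mv C \Mv y - \Mv o$.

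Finally I would split $\Mv y_{\Mv C}' - \Mv P^\top \Mv r(\Mv x) = (\Mv I - \Mv P^\top \Mv P)\Mv y_{\Mv C}' + \Mv P^\top(\Mv z_{\Mv C} - \Mv r(\Mv x))$, where $\Mv z_{\Mv C} = \Mv P \Mv y_{\Mv C}'$, exactly as in the proof of Theorem~\ref{thm:hamming_bound}. Left orthogonality $\Mv P \Mv P^\top = \Mv I$ makes these two vectors orthogonal and preserves the norm of the second (since $\|\Mv P^\top \Mv u\|_2 = \|\Mv u\|_2$), so by the Pythagorean identity $\|\Mv y_{\Mv C}' - \Mv P^\top \Mv r(\Mv x)\|_2^2 = \|(\Mv I - \Mv P^\top \Mv P)\Mv y_{\Mv C}'\|_2^2 + \|\Mv z_{\Mv C} - \Mv r(\Mv x)\|_2^2$, which is the claimed bound.

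I expect the only delicate point to be the sign-and-scaling argument in the pointwise estimate: one must verify that rescaling $\Mv y[k]$ by the nonnegative factor $\sqrt{\delta^{(k)}}$ does not spoil the sign mismatch used to lower-bound the squared residual, and dispose of the degenerate cases ($\delta^{(k)} = 0$, or a vanishing pre-rounded value), all of which turn out to be harmless. Beyond that, the argument is a direct transcription of the PLST bound, so no new tools are needed.
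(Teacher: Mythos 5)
Your proposal is correct and follows essentially the same route as the paper: the paper likewise first reduces $c(\Mv y,\Mvh y)$ to the weighted Hamming loss via Lemma~\ref{lem:weighted_hamming_bound}, proves the same per-coordinate bound $\delta^{(k)}\bfunc{\Mv y[k]\neq\Mvh y[k]} \le (\sqrt{\delta^{(k)}}\Mv y[k]-\Mvt y[k])^2$ (stated there as an auxiliary lemma, with the sign-mismatch argument phrased after dividing by $\sqrt{\delta^{(k)}}$ rather than via your distance argument, and the $\delta^{(k)}=0$ case handled separately), and then applies the identical orthogonal decomposition using $\Mv P\Mv P^\top=\Mv I$. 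No gaps.
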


Recall the definition of $\Mv  C$ in the main context is
\begin{equation}\label{eq:C}
\Mv C = \mbox{diag}(\sqrt{\delta^{(1)}},...,\sqrt{\delta^{(K)}}) 
\end{equation}
Next we show and prove the following lemma before we proceed to the complete proof.
\begin{lemma}\label{lem:1}
Given the ground truth $\Mv y$, if the binary-value prediction $\Mvh y \in \{+1,-1\}^K$ is made by 
$\mathrm{round}(\Mvt y)$ where $\Mvt y$ is the real-value prediction $\Mvt y \in \mathbb{R}^K$.
Then for any $\Mv y$, $\Mvh y$, $\Mvt y$, if $c$ satisfies the condition in Lemma~\ref{lem:weighted_hamming_bound}, we have
\begin{equation}
c(\Mv y, \Mvh y) \leq \|\Mv C\Mv y - \Mvt y\|^2
\end{equation}
\end{lemma}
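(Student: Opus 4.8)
The plan is to reduce the claimed inequality to the weighted Hamming loss identity of Lemma~\ref{lem:weighted_hamming_bound}, and then bound that weighted Hamming loss by the squared distance $\|\Mv C \Mv y - \Mvt y\|^2$ coordinate by coordinate. First I would invoke Lemma~\ref{lem:weighted_hamming_bound}: since $c(\cdot,\cdot)$ satisfies the stated condition, we have the exact decomposition $c(\Mv y, \Mvh y) = \sum_{k=1}^K \delta^{(k)} \bfunc{\Mv y[k] \neq \Mvh y[k]}$. So it suffices to show
\[
\sum_{k=1}^K \delta^{(k)} \bfunc{\Mv y[k] \neq \Mvh y[k]} \;\leq\; \sum_{k=1}^K \bigl(\sqrt{\delta^{(k)}}\,\Mv y[k] - \Mvt y[k]\bigr)^2,
\]
using $\Mv C = \mathrm{diag}(\sqrt{\delta^{(1)}},\dots,\sqrt{\delta^{(K)}})$ from \eqref{eq:C}. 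Because both sides are sums over $k$, it is enough to verify the inequality termwise: $\delta^{(k)} \bfunc{\Mv y[k] \neq \Mvh y[k]} \leq (\sqrt{\delta^{(k)}}\,\Mv y[k] - \Mvt y[k])^2$ for every $k$.

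The termwise check splits into two cases according to whether the $k$-th prediction is correct. If $\Mv y[k] = \Mvh y[k]$, the left side is $0$ while the right side is a square, hence nonnegative, so the inequality holds trivially. If $\Mv y[k] \neq \Mvh y[k]$, then since $\Mvh y[k] = \mathrm{sign}(\Mvt y[k])$, the real-valued prediction $\Mvt y[k]$ has the opposite sign to $\Mv y[k]$ (or is zero), so $\Mv y[k] \cdot \Mvt y[k] \leq 0$. Writing $\Mv y[k] \in \{+1,-1\}$ and using $\Mv y[k]^2 = 1$, I would expand
\[
\bigl(\sqrt{\delta^{(k)}}\,\Mv y[k] - \Mvt y[k]\bigr)^2 = \delta^{(k)} - 2\sqrt{\delta^{(k)}}\,\Mv y[k]\Mvt y[k] + \Mvt y[k]^2 \;\geq\; \delta^{(k)},
\]
since the cross term $-2\sqrt{\delta^{(k)}}\,\Mv y[k]\Mvt y[k]$ is nonnegative (as $\Mv y[k]\Mvt y[k] \leq 0$ and $\sqrt{\delta^{(k)}} \geq 0$) and $\Mvt y[k]^2 \geq 0$. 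This establishes the termwise bound in the mispredicted case, and summing over $k$ completes the proof of Lemma~\ref{lem:1}.

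There is no serious obstacle here; the only point requiring care is the sign argument in the mispredicted case — namely that $\mathrm{round}(\cdot)$ taking a sign guarantees $\Mv y[k]\Mvt y[k] \leq 0$ whenever $\Mv y[k] \neq \mathrm{sign}(\Mvt y[k])$. One should also note that $\delta^{(k)} \geq 0$ by its definition in \eqref{eq:label_weight} as an absolute value, so $\sqrt{\delta^{(k)}}$ is well-defined and the matrix $\Mv C$ makes sense. A minor bookkeeping subtlety is the boundary case $\Mvt y[k] = 0$: depending on the convention for $\mathrm{sign}(0)$, one of the two cases still applies, and in either case the cross term vanishes and $\delta^{(k)} \leq \delta^{(k)} + 0$ still holds, so the bound is unaffected.
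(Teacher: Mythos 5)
Your proposal is correct and follows essentially the same route as the paper's proof: invoke Lemma~\ref{lem:weighted_hamming_bound} to get the exact weighted-Hamming decomposition, reduce to the termwise inequality $\delta^{(k)}\bfunc{\Mv y[k]\neq\Mvh y[k]}\le(\sqrt{\delta^{(k)}}\,\Mv y[k]-\Mvt y[k])^2$, and verify it per coordinate. The only (cosmetic) difference is that you expand the square and use the sign of the cross term, whereas the paper normalizes by $\sqrt{\delta^{(k)}}$ and bounds the mismatch indicator by the squared distance; both hinge on the same fact that a sign mismatch forces $\Mv y[k]\Mvt y[k]\le 0$.
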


\begin{proof}
From Lemma~\ref{lem:weighted_hamming_bound} we have $c(\Mv y, \Mvh y) = \sum_{k=1}^K \delta^{(k)}\bfunc{\Mv y[k] \neq \Mvh y[k]}$.
As $\|\Mv C\Mv y - \Mvt y\|_2^2 = \sum_{k=1}^K (\sqrt{\delta^{(K)}}\Mv y[k] - \Mvt y[k])^2$,
it suffices to show that for all $k$ we have 
\begin{equation}\label{eq:bit_bound}
\delta^{(k)}\bfunc{\Mv y[k] \neq \Mvh y[k]} \leq (\sqrt{\delta^{(k)}}\Mv y[k] - \Mvt y[k])^2
\end{equation}

When $\delta^{(k)} = 0$, (\ref{eq:bit_bound}) holds trivially. 
When $\delta^{(k)} > 0$, we have
\[
\begin{aligned}
& \delta^{(k)} \bfunc{\Mv y[k] \neq \Mvh y[k]}  \\
                                 = \; &  \delta^{(k)}(\bfunc{\Mvt y[k] \geq 0}\bfunc{\Mv y[k] = -1} 
                                            + \bfunc{\Mvt y[k] < 0}\bfunc{\Mv y[k] = +1}) \\
                                 = \;& \delta^{(k)}(\bfunc{\frac{\Mvt y[k]}{\sqrt{\delta^{(k)}}} \geq 0}\bfunc{\Mv y[k] = -1} 
                                  + \bfunc{\frac{\Mvt y[k]}{\sqrt{\delta^{(k)}}} < 0}\bfunc{\Mv y[k] = +1}) \\
                                \leq \;& \delta^{(k)}((\frac{\Mvt y[k]}{\sqrt{\delta^{(k)}}} - \Mv y[k])^2\bfunc{\Mv y[k] = -1}  
                                + (\frac{\Mvt y[k]}{\sqrt{\delta^{(k)}}} - \Mv y[k])^2\bfunc{\Mv y[k] = +1})  \\
                               = \; & \delta^{(k)}(\frac{\Mvt y[k]}{\sqrt{\delta^{(k)}}} - \Mv y[k]) ^ 2 \\ 
                               = \; & (\sqrt{\delta^{(k)}}\Mv y[k] - \Mvt y[k]) ^ 2 
\end{aligned}
\]
where the second equality uses the fact that $\delta^{(k)} > 0$. 
As $\delta^{(k)} \geq 0$ holds by its definition, (\ref{eq:bit_bound}) holds for every $k$.
Summing (\ref{eq:bit_bound}) with respect to all $k$ then completes the proof.
\end{proof}

With Lemma~\ref{lem:1} established, we now prove Theorem~\ref{thm:cost_decomp}.
\begin{proof}[Proof of Theorem~\ref{thm:cost_decomp}]
If the given $c$ satisfies the condition in Lemma (\ref{lem:weighted_hamming_bound}),
and let $\Mvt y = \Mv P^\top \Mv r(\Mv x) + \Mv o$ and $\Mvh y = \mathrm{round}(\Mvt y)$.
Then for any $(\Mv x, \Mv y)$ we have 
\[
\begin{aligned}
& c(\Mv y, \Mvh y) \\ 
    \leq \;& \|\Mv C\Mv y - \Mvt y\|^2_2 && (a)\\
    =    \;& \|((\Mvt y - \Mv o - \Mv P^\top\Mv P \Mv y_{\Mv C}') - (\Mv y_{\Mv C}' - \Mv P^\top\Mv P \Mv y_{\Mv C}') ) \|^2_2 \\
    =    \;& \|(\Mv P^\top(\Mv r(\Mv x) - \Mv z^\Mv C) - (\Mv I- \Mv P^\top\Mv P)\Mv y_{\Mv C}'  \|^2_2 \\
    =    \;& \|(\Mv P^\top(\Mv r(\Mv x) - \Mv z^\Mv C)\|^2_2 + \|(\Mv I - \Mv P^\top\Mv P)\Mv y_{\Mv C}')\|^2_2  && (b) \\
    =    \;& \|\Mv r(\Mv x) - \Mv z_{\Mv C}\|^2_2 + \|(\Mv I - \Mv P^\top\Mv P)\Mv y_{\Mv C}'\|^2_2 && (c) \\
\end{aligned}
\]
where we recall that $\Mvb y_{\Mv C}' = \Mv C \Mv y - \Mv o$ and $\Mv z_{\Mv C} = \Mv P(\Mv y_{\Mv c}')$.
$(a)$ is from Lemma~\ref{eq:bit_bound}, while $(b)$ and $(c)$ follow from the orthogonal rows of $\Mv P$.
\end{proof}

We note that the proof above closely follows the proof of Theorem 1 in~\cite{plst}, while the key difference comes from Lemma~\ref{lem:1} to handle the weighted Hamming loss.

\section{Complete Results of Experiments}\label{appendix:exp}
Here we report the complete results of each experiment. 

\subsection{Necessity of Online LSDR}
\label{appendix:exp:lsdr}
We report the complete results of comparison between O-BR and DPP with $M = 10\%$, $25\%$ and $50\%$ of $K$ 
from Table~\ref{tbl:noise_comp_p30} to Table~\ref{tbl:noise_comp_p70} with respect to all four evaluation criteria, 
where the best values (the lowest) are marked in bold.

The results show that DPP outperforms O-BR as the value of $p$ increases with respect to Hamming loss, F1 loss and Accuracy loss,
demonstrating the robustness of DPP.
On the otter hand, the results related to Normalized rank loss from Table~\ref{tbl:noise_comp_p30} to Table~\ref{tbl:noise_comp_p70} show that, 
while DPP cannot outperform O-BR regarding this specific criterion, DPP does start to perform competitively as the value of $p$ increases. 
The observation again demonstrates that DPP indeed suffers less from noisy labels comparing to O-BR due to the incorporation with LSDR. 
\begin{table*}[h]
    \centering
    \resizebox{0.75\textwidth}{!}{
\begin{tabular}{cccccc }\hline
$\mathsf{Dataset}$ & $\mathsf{Alg.}$ & $\mathsf{Hamm. \, loss}$ & $\mathsf{F1 \, loss}$ & $\mathsf{Acc. \, loss}$ & $\mathsf{Norm. \, rank \, loss}$ \\ \hline
\multirow{4}{*}{$\mathsf{CAL500}$} & O-BR & ${0.1130 \pm 0.0003}$ & $\mathbf{0.823 \pm 0.002}$ & $\mathbf{0.453 \pm 0.001}$ & $\mathbf{0.898 \pm 0.001}$ \\ 
 & DPP-50 & ${0.1143 \pm 0.0001}$ & $\mathbf{0.823 \pm 0.002}$ & ${0.455 \pm 0.001}$ & $\mathbf{0.897 \pm 0.001}$ \\ 
 & DPP-25 & ${0.1133 \pm 0.0002}$ & ${0.830 \pm 0.003}$ & ${0.454 \pm 0.001}$ & $\mathbf{0.900 \pm 0.001}$ \\ 
 & DPP-10 & $\mathbf{0.1113 \pm 0.0002}$ & ${0.837 \pm 0.002}$ & ${0.458 \pm 0.001}$ & ${0.911 \pm 0.002}$ \\ \hline
\multirow{4}{*}{$\mathsf{Corel5k}$} & O-BR & $\mathbf{0.0070 \pm 0.0000}$ & ${0.949 \pm 0.001}$ & $\mathbf{0.496 \pm 0.000}$ & $\mathbf{0.957 \pm 0.001}$ \\ 
 & DPP-50 & ${0.0072 \pm 0.0000}$ & $\mathbf{0.945 \pm 0.001}$ & $\mathbf{0.496 \pm 0.001}$ & $\mathbf{0.957 \pm 0.001}$ \\ 
 & DPP-25 & ${0.0072 \pm 0.0000}$ & ${0.949 \pm 0.001}$ & $\mathbf{0.497 \pm 0.000}$ & $\mathbf{0.958 \pm 0.000}$ \\ 
 & DPP-10 & ${0.0071 \pm 0.0000}$ & ${0.949 \pm 0.001}$ & ${0.498 \pm 0.000}$ & ${0.960 \pm 0.001}$ \\ \hline
\multirow{4}{*}{$\mathsf{emotions}$} & O-BR & $\mathbf{0.2213 \pm 0.0011}$ & $\mathbf{0.697 \pm 0.005}$ & $\mathbf{0.480 \pm 0.004}$ & $\mathbf{0.719 \pm 0.005}$ \\ 
 & DPP-50 & $\mathbf{0.2214 \pm 0.0013}$ & ${0.740 \pm 0.008}$ & ${0.504 \pm 0.005}$ & ${0.764 \pm 0.004}$ \\ 
 & DPP-25 & $\mathbf{0.2226 \pm 0.0013}$ & ${0.767 \pm 0.006}$ & ${0.527 \pm 0.003}$ & ${0.783 \pm 0.002}$ \\ 
 & DPP-10 & $\mathbf{0.2238 \pm 0.0026}$ & ${0.857 \pm 0.003}$ & ${0.570 \pm 0.002}$ & ${0.858 \pm 0.004}$ \\ \hline
\multirow{4}{*}{$\mathsf{enron}$} & O-BR & ${0.0584 \pm 0.0002}$ & $\mathbf{0.694 \pm 0.002}$ & $\mathbf{0.386 \pm 0.001}$ & $\mathbf{0.766 \pm 0.002}$ \\ 
 & DPP-50 & ${0.0572 \pm 0.0002}$ & $\mathbf{0.697 \pm 0.003}$ & $\mathbf{0.388 \pm 0.001}$ & $\mathbf{0.770 \pm 0.002}$ \\ 
 & DPP-25 & ${0.0534 \pm 0.0002}$ & $\mathbf{0.696 \pm 0.002}$ & ${0.397 \pm 0.001}$ & $\mathbf{0.767 \pm 0.002}$ \\ 
 & DPP-10 & $\mathbf{0.0489 \pm 0.0001}$ & ${0.716 \pm 0.002}$ & ${0.414 \pm 0.001}$ & ${0.784 \pm 0.002}$ \\ \hline
\multirow{4}{*}{$\mathsf{mediamill}$} & O-BR & $\mathbf{0.0271 \pm 0.0000}$ & $\mathbf{0.640 \pm 0.001}$ & $\mathbf{0.403 \pm 0.000}$ & $\mathbf{0.721 \pm 0.000}$ \\ 
 & DPP-50 & ${0.0272 \pm 0.0000}$ & $\mathbf{0.640 \pm 0.001}$ & $\mathbf{0.402 \pm 0.000}$ & $\mathbf{0.721 \pm 0.000}$ \\ 
 & DPP-25 & ${0.0272 \pm 0.0000}$ & $\mathbf{0.639 \pm 0.001}$ & $\mathbf{0.403 \pm 0.000}$ & $\mathbf{0.721 \pm 0.001}$ \\ 
 & DPP-10 & ${0.0272 \pm 0.0000}$ & $\mathbf{0.639 \pm 0.001}$ & $\mathbf{0.402 \pm 0.000}$ & $\mathbf{0.720 \pm 0.001}$ \\ \hline
\multirow{4}{*}{$\mathsf{medical}$} & O-BR & $\mathbf{0.0168 \pm 0.0001}$ & $\mathbf{0.550 \pm 0.004}$ & $\mathbf{0.448 \pm 0.004}$ & $\mathbf{0.563 \pm 0.005}$ \\ 
 & DPP-50 & ${0.0177 \pm 0.0001}$ & $\mathbf{0.544 \pm 0.006}$ & $\mathbf{0.446 \pm 0.002}$ & $\mathbf{0.556 \pm 0.003}$ \\ 
 & DPP-25 & ${0.0183 \pm 0.0001}$ & ${0.577 \pm 0.004}$ & ${0.469 \pm 0.005}$ & ${0.589 \pm 0.005}$ \\ 
 & DPP-10 & ${0.0190 \pm 0.0001}$ & ${0.645 \pm 0.006}$ & ${0.538 \pm 0.003}$ & ${0.651 \pm 0.004}$ \\ \hline
\multirow{4}{*}{$\mathsf{nuswide}$} & O-BR & $\mathbf{0.0151 \pm 0.0000}$ & $\mathbf{0.627 \pm 0.001}$ & $\mathbf{0.668 \pm 0.000}$ & ${0.632 \pm 0.000}$ \\ 
 & DPP-50 & ${0.0151 \pm 0.0000}$ & $\mathbf{0.627 \pm 0.000}$ & $\mathbf{0.667 \pm 0.000}$ & ${0.633 \pm 0.000}$ \\ 
 & DPP-25 & ${0.0151 \pm 0.0000}$ & $\mathbf{0.627 \pm 0.000}$ & $\mathbf{0.667 \pm 0.000}$ & $\mathbf{0.632 \pm 0.000}$ \\ 
 & DPP-10 & ${0.0151 \pm 0.0000}$ & $\mathbf{0.626 \pm 0.000}$ & $\mathbf{0.668 \pm 0.000}$ & ${0.632 \pm 0.000}$ \\ \hline
\multirow{4}{*}{$\mathsf{scene}$} & O-BR & $\mathbf{0.1197 \pm 0.0005}$ & $\mathbf{0.626 \pm 0.001}$ & $\mathbf{0.560 \pm 0.002}$ & $\mathbf{0.628 \pm 0.003}$ \\ 
 & DPP-50 & ${0.1282 \pm 0.0008}$ & ${0.695 \pm 0.003}$ & ${0.622 \pm 0.002}$ & ${0.698 \pm 0.003}$ \\ 
 & DPP-25 & ${0.1273 \pm 0.0005}$ & ${0.706 \pm 0.003}$ & ${0.632 \pm 0.002}$ & ${0.710 \pm 0.004}$ \\ 
 & DPP-10 & ${0.1258 \pm 0.0004}$ & ${0.717 \pm 0.003}$ & ${0.643 \pm 0.001}$ & ${0.715 \pm 0.002}$ \\ \hline
\multirow{4}{*}{$\mathsf{yeast}$} & O-BR & $\mathbf{0.2034 \pm 0.0004}$ & $\mathbf{0.669 \pm 0.002}$ & $\mathbf{0.406 \pm 0.001}$ & $\mathbf{0.755 \pm 0.002}$ \\ 
 & DPP-50 & $\mathbf{0.2032 \pm 0.0004}$ & ${0.678 \pm 0.004}$ & ${0.413 \pm 0.002}$ & ${0.762 \pm 0.003}$ \\ 
 & DPP-25 & ${0.2045 \pm 0.0004}$ & ${0.711 \pm 0.004}$ & ${0.427 \pm 0.002}$ & ${0.783 \pm 0.003}$ \\ 
 & DPP-10 & ${0.2034 \pm 0.0005}$ & ${0.733 \pm 0.005}$ & ${0.443 \pm 0.002}$ & ${0.798 \pm 0.009}$ \\ \hline
\end{tabular}}

    \caption{DPP vs. O-BR on Noisy Data, $p$ = 0.3}
    \label{tbl:noise_comp_p30}
\end{table*}

\begin{table*}[h]
    \centering
    \resizebox{0.75\textwidth}{!}{
\begin{tabular}{cccccc }\hline
$\mathsf{Dataset}$ & $\mathsf{Alg.}$ & $\mathsf{Hamm. \, loss}$ & $\mathsf{F1 \, loss}$ & $\mathsf{Acc. \, loss}$ & $\mathsf{Norm. \, rank \, loss}$ \\ \hline
\multirow{4}{*}{$\mathsf{CAL500}$} & O-BR & $\mathbf{0.0815 \pm 0.0003}$ & $\mathbf{0.925 \pm 0.002}$ & ${0.483 \pm 0.001}$ & $\mathbf{0.961 \pm 0.001}$ \\ 
 & DPP-50 & ${0.0834 \pm 0.0001}$ & $\mathbf{0.925 \pm 0.002}$ & $\mathbf{0.480 \pm 0.001}$ & $\mathbf{0.962 \pm 0.001}$ \\ 
 & DPP-25 & ${0.0823 \pm 0.0002}$ & ${0.932 \pm 0.002}$ & ${0.483 \pm 0.000}$ & $\mathbf{0.961 \pm 0.001}$ \\ 
 & DPP-10 & $\mathbf{0.0816 \pm 0.0002}$ & ${0.947 \pm 0.002}$ & ${0.485 \pm 0.001}$ & ${0.970 \pm 0.001}$ \\ \hline
\multirow{4}{*}{$\mathsf{Corel5k}$} & O-BR & $\mathbf{0.0049 \pm 0.0000}$ & ${0.898 \pm 0.001}$ & $\mathbf{0.543 \pm 0.000}$ & $\mathbf{0.902 \pm 0.001}$ \\ 
 & DPP-50 & ${0.0051 \pm 0.0000}$ & ${0.899 \pm 0.001}$ & $\mathbf{0.544 \pm 0.001}$ & $\mathbf{0.900 \pm 0.001}$ \\ 
 & DPP-25 & ${0.0051 \pm 0.0000}$ & $\mathbf{0.897 \pm 0.001}$ & $\mathbf{0.544 \pm 0.000}$ & $\mathbf{0.900 \pm 0.001}$ \\ 
 & DPP-10 & ${0.0051 \pm 0.0000}$ & ${0.898 \pm 0.001}$ & $\mathbf{0.543 \pm 0.001}$ & $\mathbf{0.902 \pm 0.001}$ \\ \hline
\multirow{4}{*}{$\mathsf{emotions}$} & O-BR & ${0.1736 \pm 0.0014}$ & $\mathbf{0.694 \pm 0.004}$ & $\mathbf{0.633 \pm 0.003}$ & ${0.698 \pm 0.004}$ \\ 
 & DPP-50 & ${0.1689 \pm 0.0017}$ & $\mathbf{0.691 \pm 0.003}$ & ${0.640 \pm 0.003}$ & ${0.706 \pm 0.004}$ \\ 
 & DPP-25 & ${0.1660 \pm 0.0015}$ & $\mathbf{0.703 \pm 0.006}$ & ${0.646 \pm 0.002}$ & ${0.706 \pm 0.004}$ \\ 
 & DPP-10 & $\mathbf{0.1598 \pm 0.0014}$ & $\mathbf{0.699 \pm 0.004}$ & ${0.650 \pm 0.002}$ & $\mathbf{0.692 \pm 0.004}$ \\ \hline
\multirow{4}{*}{$\mathsf{enron}$} & O-BR & ${0.0475 \pm 0.0002}$ & ${0.768 \pm 0.001}$ & ${0.491 \pm 0.002}$ & $\mathbf{0.809 \pm 0.002}$ \\ 
 & DPP-50 & ${0.0470 \pm 0.0002}$ & $\mathbf{0.765 \pm 0.003}$ & $\mathbf{0.488 \pm 0.001}$ & $\mathbf{0.809 \pm 0.001}$ \\ 
 & DPP-25 & ${0.0440 \pm 0.0002}$ & $\mathbf{0.764 \pm 0.003}$ & ${0.491 \pm 0.001}$ & $\mathbf{0.806 \pm 0.002}$ \\ 
 & DPP-10 & $\mathbf{0.0398 \pm 0.0002}$ & ${0.772 \pm 0.002}$ & ${0.510 \pm 0.002}$ & $\mathbf{0.810 \pm 0.002}$ \\ \hline
\multirow{4}{*}{$\mathsf{mediamill}$} & O-BR & $\mathbf{0.0217 \pm 0.0000}$ & $\mathbf{0.831 \pm 0.001}$ & $\mathbf{0.548 \pm 0.000}$ & $\mathbf{0.840 \pm 0.001}$ \\ 
 & DPP-50 & $\mathbf{0.0217 \pm 0.0000}$ & $\mathbf{0.830 \pm 0.001}$ & ${0.550 \pm 0.000}$ & $\mathbf{0.839 \pm 0.001}$ \\ 
 & DPP-25 & $\mathbf{0.0217 \pm 0.0000}$ & $\mathbf{0.830 \pm 0.001}$ & ${0.550 \pm 0.001}$ & $\mathbf{0.840 \pm 0.001}$ \\ 
 & DPP-10 & $\mathbf{0.0217 \pm 0.0000}$ & $\mathbf{0.830 \pm 0.001}$ & ${0.549 \pm 0.000}$ & $\mathbf{0.840 \pm 0.001}$ \\ \hline
\multirow{4}{*}{$\mathsf{medical}$} & O-BR & $\mathbf{0.0153 \pm 0.0001}$ & ${0.570 \pm 0.002}$ & $\mathbf{0.655 \pm 0.005}$ & ${0.568 \pm 0.004}$ \\ 
 & DPP-50 & ${0.0163 \pm 0.0001}$ & ${0.563 \pm 0.005}$ & $\mathbf{0.661 \pm 0.003}$ & ${0.577 \pm 0.004}$ \\ 
 & DPP-25 & ${0.0160 \pm 0.0001}$ & ${0.569 \pm 0.004}$ & $\mathbf{0.664 \pm 0.005}$ & ${0.570 \pm 0.004}$ \\ 
 & DPP-10 & ${0.0157 \pm 0.0001}$ & $\mathbf{0.561 \pm 0.003}$ & ${0.690 \pm 0.003}$ & $\mathbf{0.565 \pm 0.003}$ \\ \hline
\multirow{4}{*}{$\mathsf{nuswide}$} & O-BR & $\mathbf{0.0109 \pm 0.0000}$ & ${0.537 \pm 0.000}$ & ${0.730 \pm 0.000}$ & $\mathbf{0.537 \pm 0.000}$ \\ 
 & DPP-50 & ${0.0110 \pm 0.0000}$ & ${0.537 \pm 0.000}$ & ${0.730 \pm 0.000}$ & $\mathbf{0.537 \pm 0.000}$ \\ 
 & DPP-25 & ${0.0110 \pm 0.0000}$ & $\mathbf{0.536 \pm 0.000}$ & ${0.730 \pm 0.000}$ & $\mathbf{0.536 \pm 0.000}$ \\ 
 & DPP-10 & ${0.0109 \pm 0.0000}$ & ${0.536 \pm 0.000}$ & $\mathbf{0.730 \pm 0.000}$ & $\mathbf{0.537 \pm 0.000}$ \\ \hline
\multirow{4}{*}{$\mathsf{scene}$} & O-BR & ${0.0965 \pm 0.0006}$ & ${0.533 \pm 0.003}$ & $\mathbf{0.718 \pm 0.002}$ & ${0.533 \pm 0.003}$ \\ 
 & DPP-50 & ${0.0926 \pm 0.0004}$ & ${0.525 \pm 0.002}$ & ${0.731 \pm 0.002}$ & ${0.524 \pm 0.002}$ \\ 
 & DPP-25 & ${0.0915 \pm 0.0004}$ & $\mathbf{0.519 \pm 0.003}$ & ${0.739 \pm 0.001}$ & ${0.522 \pm 0.003}$ \\ 
 & DPP-10 & $\mathbf{0.0902 \pm 0.0004}$ & ${0.524 \pm 0.003}$ & ${0.740 \pm 0.001}$ & $\mathbf{0.515 \pm 0.004}$ \\ \hline
\multirow{4}{*}{$\mathsf{yeast}$} & O-BR & ${0.1581 \pm 0.0005}$ & $\mathbf{0.853 \pm 0.002}$ & $\mathbf{0.518 \pm 0.001}$ & $\mathbf{0.875 \pm 0.001}$ \\ 
 & DPP-50 & ${0.1586 \pm 0.0005}$ & $\mathbf{0.850 \pm 0.002}$ & $\mathbf{0.520 \pm 0.001}$ & $\mathbf{0.873 \pm 0.002}$ \\ 
 & DPP-25 & ${0.1573 \pm 0.0004}$ & ${0.860 \pm 0.002}$ & ${0.524 \pm 0.001}$ & $\mathbf{0.878 \pm 0.002}$ \\ 
 & DPP-10 & $\mathbf{0.1543 \pm 0.0004}$ & ${0.876 \pm 0.004}$ & ${0.531 \pm 0.002}$ & ${0.890 \pm 0.002}$ \\ \hline
\end{tabular}}

    \caption{DPP vs. O-BR on Noisy Data, $p$ = 0.5}
    \label{tbl:noise_comp_p50}
\end{table*}

\begin{table*}[h]
    \centering
    \resizebox{0.75\textwidth}{!}{
\begin{tabular}{cccccc }\hline
$\mathsf{Dataset}$ & $\mathsf{Alg.}$ & $\mathsf{Hamm. \, loss}$ & $\mathsf{F1 \, loss}$ & $\mathsf{Acc. \, loss}$ & $\mathsf{Norm. \, rank \, loss}$ \\ \hline
\multirow{4}{*}{$\mathsf{CAL500}$} & O-BR & $\mathbf{0.0483 \pm 0.0003}$ & $\mathbf{0.985 \pm 0.001}$ & $\mathbf{0.495 \pm 0.000}$ & $\mathbf{0.990 \pm 0.001}$ \\ 
 & DPP-50 & ${0.0499 \pm 0.0002}$ & $\mathbf{0.983 \pm 0.001}$ & $\mathbf{0.495 \pm 0.000}$ & ${0.991 \pm 0.000}$ \\ 
 & DPP-25 & ${0.0502 \pm 0.0002}$ & $\mathbf{0.984 \pm 0.001}$ & ${0.495 \pm 0.000}$ & ${0.991 \pm 0.000}$ \\ 
 & DPP-10 & ${0.0490 \pm 0.0002}$ & ${0.987 \pm 0.000}$ & ${0.496 \pm 0.000}$ & ${0.992 \pm 0.001}$ \\ \hline
\multirow{4}{*}{$\mathsf{Corel5k}$} & O-BR & $\mathbf{0.0029 \pm 0.0000}$ & $\mathbf{0.716 \pm 0.002}$ & $\mathbf{0.647 \pm 0.001}$ & $\mathbf{0.716 \pm 0.002}$ \\ 
 & DPP-50 & ${0.0031 \pm 0.0000}$ & $\mathbf{0.713 \pm 0.001}$ & $\mathbf{0.646 \pm 0.001}$ & $\mathbf{0.714 \pm 0.002}$ \\ 
 & DPP-25 & ${0.0031 \pm 0.0000}$ & $\mathbf{0.714 \pm 0.001}$ & $\mathbf{0.647 \pm 0.001}$ & $\mathbf{0.715 \pm 0.002}$ \\ 
 & DPP-10 & ${0.0031 \pm 0.0000}$ & $\mathbf{0.712 \pm 0.002}$ & $\mathbf{0.646 \pm 0.001}$ & $\mathbf{0.714 \pm 0.002}$ \\ \hline
\multirow{4}{*}{$\mathsf{emotions}$} & O-BR & ${0.1007 \pm 0.0013}$ & ${0.493 \pm 0.006}$ & $\mathbf{0.759 \pm 0.002}$ & ${0.490 \pm 0.006}$ \\ 
 & DPP-50 & ${0.1017 \pm 0.0011}$ & ${0.486 \pm 0.005}$ & $\mathbf{0.758 \pm 0.002}$ & ${0.493 \pm 0.005}$ \\ 
 & DPP-25 & ${0.0993 \pm 0.0015}$ & ${0.489 \pm 0.006}$ & $\mathbf{0.757 \pm 0.002}$ & ${0.491 \pm 0.005}$ \\ 
 & DPP-10 & $\mathbf{0.0951 \pm 0.0013}$ & $\mathbf{0.477 \pm 0.004}$ & $\mathbf{0.763 \pm 0.002}$ & $\mathbf{0.474 \pm 0.004}$ \\ \hline
\multirow{4}{*}{$\mathsf{enron}$} & O-BR & ${0.0311 \pm 0.0002}$ & ${0.734 \pm 0.003}$ & $\mathbf{0.634 \pm 0.002}$ & ${0.753 \pm 0.003}$ \\ 
 & DPP-50 & ${0.0311 \pm 0.0002}$ & ${0.729 \pm 0.003}$ & $\mathbf{0.633 \pm 0.002}$ & ${0.745 \pm 0.002}$ \\ 
 & DPP-25 & ${0.0298 \pm 0.0002}$ & ${0.731 \pm 0.002}$ & $\mathbf{0.635 \pm 0.002}$ & ${0.742 \pm 0.003}$ \\ 
 & DPP-10 & $\mathbf{0.0266 \pm 0.0002}$ & $\mathbf{0.711 \pm 0.003}$ & ${0.644 \pm 0.001}$ & $\mathbf{0.726 \pm 0.003}$ \\ \hline
\multirow{4}{*}{$\mathsf{mediamill}$} & O-BR & $\mathbf{0.0130 \pm 0.0000}$ & $\mathbf{0.714 \pm 0.001}$ & $\mathbf{0.643 \pm 0.000}$ & ${0.715 \pm 0.000}$ \\ 
 & DPP-50 & ${0.0130 \pm 0.0000}$ & $\mathbf{0.715 \pm 0.000}$ & $\mathbf{0.643 \pm 0.000}$ & $\mathbf{0.714 \pm 0.000}$ \\ 
 & DPP-25 & ${0.0130 \pm 0.0000}$ & $\mathbf{0.714 \pm 0.000}$ & $\mathbf{0.643 \pm 0.000}$ & ${0.714 \pm 0.001}$ \\ 
 & DPP-10 & ${0.0130 \pm 0.0000}$ & $\mathbf{0.715 \pm 0.001}$ & $\mathbf{0.643 \pm 0.000}$ & ${0.715 \pm 0.001}$ \\ \hline
\multirow{4}{*}{$\mathsf{medical}$} & O-BR & $\mathbf{0.0099 \pm 0.0002}$ & ${0.398 \pm 0.007}$ & $\mathbf{0.814 \pm 0.003}$ & ${0.404 \pm 0.004}$ \\ 
 & DPP-50 & ${0.0106 \pm 0.0002}$ & ${0.401 \pm 0.005}$ & $\mathbf{0.812 \pm 0.003}$ & ${0.398 \pm 0.005}$ \\ 
 & DPP-25 & ${0.0105 \pm 0.0001}$ & ${0.391 \pm 0.004}$ & $\mathbf{0.815 \pm 0.002}$ & ${0.399 \pm 0.004}$ \\ 
 & DPP-10 & $\mathbf{0.0097 \pm 0.0001}$ & $\mathbf{0.377 \pm 0.004}$ & $\mathbf{0.819 \pm 0.003}$ & $\mathbf{0.377 \pm 0.005}$ \\ \hline
\multirow{4}{*}{$\mathsf{nuswide}$} & O-BR & $\mathbf{0.0066 \pm 0.0000}$ & $\mathbf{0.386 \pm 0.001}$ & ${0.808 \pm 0.000}$ & $\mathbf{0.386 \pm 0.001}$ \\ 
 & DPP-50 & ${0.0066 \pm 0.0000}$ & $\mathbf{0.386 \pm 0.000}$ & $\mathbf{0.807 \pm 0.000}$ & $\mathbf{0.385 \pm 0.000}$ \\ 
 & DPP-25 & ${0.0066 \pm 0.0000}$ & $\mathbf{0.386 \pm 0.000}$ & ${0.807 \pm 0.000}$ & $\mathbf{0.386 \pm 0.000}$ \\ 
 & DPP-10 & ${0.0066 \pm 0.0000}$ & $\mathbf{0.386 \pm 0.000}$ & ${0.807 \pm 0.000}$ & $\mathbf{0.385 \pm 0.001}$ \\ \hline
\multirow{4}{*}{$\mathsf{scene}$} & O-BR & ${0.0562 \pm 0.0004}$ & ${0.328 \pm 0.003}$ & $\mathbf{0.841 \pm 0.001}$ & ${0.328 \pm 0.002}$ \\ 
 & DPP-50 & $\mathbf{0.0544 \pm 0.0003}$ & ${0.323 \pm 0.002}$ & $\mathbf{0.841 \pm 0.001}$ & $\mathbf{0.321 \pm 0.002}$ \\ 
 & DPP-25 & $\mathbf{0.0542 \pm 0.0005}$ & $\mathbf{0.316 \pm 0.002}$ & $\mathbf{0.842 \pm 0.001}$ & $\mathbf{0.317 \pm 0.002}$ \\ 
 & DPP-10 & $\mathbf{0.0538 \pm 0.0005}$ & $\mathbf{0.313 \pm 0.002}$ & $\mathbf{0.842 \pm 0.001}$ & $\mathbf{0.318 \pm 0.002}$ \\ \hline
\multirow{4}{*}{$\mathsf{yeast}$} & O-BR & $\mathbf{0.0920 \pm 0.0004}$ & $\mathbf{0.746 \pm 0.002}$ & $\mathbf{0.625 \pm 0.001}$ & $\mathbf{0.747 \pm 0.002}$ \\ 
 & DPP-50 & $\mathbf{0.0918 \pm 0.0003}$ & $\mathbf{0.748 \pm 0.002}$ & $\mathbf{0.627 \pm 0.001}$ & $\mathbf{0.747 \pm 0.002}$ \\ 
 & DPP-25 & $\mathbf{0.0921 \pm 0.0004}$ & $\mathbf{0.747 \pm 0.002}$ & $\mathbf{0.627 \pm 0.001}$ & $\mathbf{0.746 \pm 0.002}$ \\ 
 & DPP-10 & $\mathbf{0.0915 \pm 0.0004}$ & $\mathbf{0.748 \pm 0.002}$ & $\mathbf{0.626 \pm 0.001}$ & $\mathbf{0.746 \pm 0.002}$ \\ \hline
\end{tabular}}

    \caption{DPP vs. O-BR on Noisy Data, $p$ = 0.7}
    \label{tbl:noise_comp_p70}
\end{table*}

\subsection{Experiments on Basis Drifting}
\label{appendix:exp:drift}
The complete results of comparison between DPP using (1) PBC, (2) PBT, and (3) nothing regarding Hamming loss can be found in Table~\ref{tbl:pb_comp_m10}, Table~\ref{tbl:pb_comp_m25} and Table~\ref{tbl:pb_comp_m50}, where the best values (the lowest) are marked in bold.
To further understand the behavior of basis drifting and the effectiveness of PBC and PBT for CS-DPP, 
we also compare CS-DPP coupled with PBC/PBT/none on F1 loss, Accuracy loss and Normalized rank loss, and summarize the results in the same tables.
From these results we can again draw the same conclusion as that in Section~\ref{sec:exp:basis drift}. That is, CS-DPP with either PBT or PBC greatly outperforms CS-DPP that neglects the basis drifting, and CS-DPP with PBT performs competitively with CS-DPP with PBC. 
\begin{table*}[h]
\centering
    \resizebox{0.75\textwidth}{!}{
\begin{tabular}{cccccc }\hline
$\mathsf{Dataset}$ & $\mathsf{Alg.}$ & $\mathsf{Hamm. \, loss}$ & $\mathsf{F1 \, loss}$ & $\mathsf{Acc. \, loss}$ & $\mathsf{Norm. \, rank \, loss}$ \\ \hline
\multirow{3}{*}{$\mathsf{CAL500}$} & CS-DPP-None & ${0.4464 \pm 0.0074}$ & ${0.733 \pm 0.001}$ & ${0.393 \pm 0.002}$ & ${0.843 \pm 0.001}$ \\ 
 & CS-DPP-PBT & $\mathbf{0.1443 \pm 0.0001}$ & $\mathbf{0.601 \pm 0.001}$ & $\mathbf{0.137 \pm 0.001}$ & $\mathbf{0.749 \pm 0.001}$ \\ 
 & CS-DPP-PBC & ${0.1454 \pm 0.0002}$ & $\mathbf{0.603 \pm 0.001}$ & ${0.144 \pm 0.002}$ & $\mathbf{0.748 \pm 0.001}$ \\ \hline
\multirow{3}{*}{$\mathsf{Corel5k}$} & CS-DPP-None & ${0.4814 \pm 0.0063}$ & ${0.957 \pm 0.000}$ & ${0.357 \pm 0.001}$ & ${0.980 \pm 0.000}$ \\ 
 & CS-DPP-PBT & $\mathbf{0.0099 \pm 0.0000}$ & ${0.853 \pm 0.001}$ & ${0.248 \pm 0.001}$ & ${0.912 \pm 0.001}$ \\ 
 & CS-DPP-PBC & ${0.0100 \pm 0.0000}$ & $\mathbf{0.850 \pm 0.001}$ & $\mathbf{0.237 \pm 0.001}$ & $\mathbf{0.910 \pm 0.000}$ \\ \hline
\multirow{3}{*}{$\mathsf{emotions}$} & CS-DPP-None & ${0.4787 \pm 0.0039}$ & ${0.618 \pm 0.004}$ & ${0.376 \pm 0.008}$ & ${0.696 \pm 0.005}$ \\ 
 & CS-DPP-PBT & ${0.3419 \pm 0.0033}$ & $\mathbf{0.445 \pm 0.003}$ & $\mathbf{0.159 \pm 0.021}$ & $\mathbf{0.563 \pm 0.007}$ \\ 
 & CS-DPP-PBC & $\mathbf{0.3301 \pm 0.0012}$ & $\mathbf{0.450 \pm 0.007}$ & $\mathbf{0.133 \pm 0.023}$ & $\mathbf{0.560 \pm 0.009}$ \\ \hline
\multirow{3}{*}{$\mathsf{enron}$} & CS-DPP-None & ${0.4030 \pm 0.0160}$ & ${0.802 \pm 0.002}$ & ${0.385 \pm 0.002}$ & ${0.875 \pm 0.001}$ \\ 
 & CS-DPP-PBT & $\mathbf{0.0560 \pm 0.0001}$ & ${0.534 \pm 0.002}$ & $\mathbf{0.124 \pm 0.003}$ & $\mathbf{0.642 \pm 0.002}$ \\ 
 & CS-DPP-PBC & ${0.0565 \pm 0.0001}$ & $\mathbf{0.528 \pm 0.002}$ & ${0.132 \pm 0.001}$ & $\mathbf{0.638 \pm 0.001}$ \\ \hline
\multirow{3}{*}{$\mathsf{mediamill}$} & CS-DPP-None & ${0.4936 \pm 0.0016}$ & ${0.692 \pm 0.016}$ & ${0.416 \pm 0.004}$ & ${0.728 \pm 0.001}$ \\ 
 & CS-DPP-PBT & ${0.0309 \pm 0.0000}$ & $\mathbf{0.460 \pm 0.000}$ & $\mathbf{0.066 \pm 0.002}$ & ${0.583 \pm 0.000}$ \\ 
 & CS-DPP-PBC & $\mathbf{0.0308 \pm 0.0000}$ & $\mathbf{0.460 \pm 0.000}$ & ${0.072 \pm 0.002}$ & $\mathbf{0.582 \pm 0.000}$ \\ \hline
\multirow{3}{*}{$\mathsf{medical}$} & CS-DPP-None & ${0.1923 \pm 0.0352}$ & ${0.896 \pm 0.002}$ & ${0.346 \pm 0.003}$ & ${0.932 \pm 0.003}$ \\ 
 & CS-DPP-PBT & ${0.0242 \pm 0.0001}$ & ${0.554 \pm 0.012}$ & ${0.132 \pm 0.005}$ & ${0.583 \pm 0.008}$ \\ 
 & CS-DPP-PBC & $\mathbf{0.0204 \pm 0.0002}$ & $\mathbf{0.508 \pm 0.006}$ & $\mathbf{0.096 \pm 0.003}$ & $\mathbf{0.549 \pm 0.007}$ \\ \hline
\multirow{3}{*}{$\mathsf{nuswide}$} & CS-DPP-None & ${0.4975 \pm 0.0006}$ & ${0.933 \pm 0.001}$ & ${0.520 \pm 0.001}$ & ${0.959 \pm 0.001}$ \\ 
 & CS-DPP-PBT & ${0.0201 \pm 0.0000}$ & $\mathbf{0.649 \pm 0.000}$ & $\mathbf{0.356 \pm 0.001}$ & $\mathbf{0.675 \pm 0.000}$ \\ 
 & CS-DPP-PBC & $\mathbf{0.0201 \pm 0.0000}$ & $\mathbf{0.648 \pm 0.000}$ & $\mathbf{0.358 \pm 0.001}$ & $\mathbf{0.675 \pm 0.000}$ \\ \hline
\multirow{3}{*}{$\mathsf{scene}$} & CS-DPP-None & ${0.4609 \pm 0.0080}$ & ${0.761 \pm 0.003}$ & ${0.362 \pm 0.007}$ & ${0.825 \pm 0.002}$ \\ 
 & CS-DPP-PBT & $\mathbf{0.1796 \pm 0.0001}$ & $\mathbf{0.723 \pm 0.002}$ & $\mathbf{0.264 \pm 0.012}$ & $\mathbf{0.798 \pm 0.003}$ \\ 
 & CS-DPP-PBC & $\mathbf{0.1797 \pm 0.0001}$ & $\mathbf{0.724 \pm 0.002}$ & $\mathbf{0.231 \pm 0.016}$ & $\mathbf{0.796 \pm 0.002}$ \\ \hline
\multirow{3}{*}{$\mathsf{yeast}$} & CS-DPP-None & ${0.4979 \pm 0.0015}$ & ${0.616 \pm 0.002}$ & ${0.422 \pm 0.003}$ & ${0.727 \pm 0.001}$ \\ 
 & CS-DPP-PBT & $\mathbf{0.2294 \pm 0.0010}$ & $\mathbf{0.435 \pm 0.004}$ & $\mathbf{0.003 \pm 0.000}$ & $\mathbf{0.549 \pm 0.003}$ \\ 
 & CS-DPP-PBC & $\mathbf{0.2307 \pm 0.0011}$ & $\mathbf{0.433 \pm 0.003}$ & $\mathbf{0.003 \pm 0.000}$ & $\mathbf{0.541 \pm 0.003}$ \\ \hline
\end{tabular}}

\caption{CS-DPP with PBC vs. PBT vs. None, $M$ = 10\% of $K$}
\label{tbl:pb_comp_m10}
\end{table*}

\begin{table*}[h]
\centering
    \resizebox{0.75\textwidth}{!}{
\begin{tabular}{cccccc }\hline
$\mathsf{Dataset}$ & $\mathsf{Alg.}$ & $\mathsf{Hamm. \, loss}$ & $\mathsf{F1 \, loss}$ & $\mathsf{Acc. \, loss}$ & $\mathsf{Norm. \, rank \, loss}$ \\ \hline
\multirow{3}{*}{$\mathsf{CAL500}$} & CS-DPP-None & ${0.4374 \pm 0.0100}$ & ${0.732 \pm 0.002}$ & ${0.392 \pm 0.002}$ & ${0.846 \pm 0.002}$ \\ 
 & CS-DPP-PBT & $\mathbf{0.1471 \pm 0.0002}$ & $\mathbf{0.604 \pm 0.001}$ & $\mathbf{0.151 \pm 0.002}$ & $\mathbf{0.750 \pm 0.001}$ \\ 
 & CS-DPP-PBC & ${0.1476 \pm 0.0001}$ & $\mathbf{0.602 \pm 0.001}$ & $\mathbf{0.150 \pm 0.002}$ & $\mathbf{0.751 \pm 0.001}$ \\ \hline
\multirow{3}{*}{$\mathsf{Corel5k}$} & CS-DPP-None & ${0.4997 \pm 0.0018}$ & ${0.965 \pm 0.000}$ & ${0.366 \pm 0.001}$ & ${0.983 \pm 0.000}$ \\ 
 & CS-DPP-PBT & $\mathbf{0.0100 \pm 0.0000}$ & $\mathbf{0.845 \pm 0.000}$ & ${0.223 \pm 0.001}$ & $\mathbf{0.905 \pm 0.000}$ \\ 
 & CS-DPP-PBC & ${0.0101 \pm 0.0000}$ & $\mathbf{0.844 \pm 0.000}$ & $\mathbf{0.220 \pm 0.001}$ & $\mathbf{0.904 \pm 0.000}$ \\ \hline
\multirow{3}{*}{$\mathsf{emotions}$} & CS-DPP-None & ${0.4988 \pm 0.0022}$ & ${0.631 \pm 0.004}$ & ${0.420 \pm 0.005}$ & ${0.722 \pm 0.003}$ \\ 
 & CS-DPP-PBT & $\mathbf{0.2768 \pm 0.0051}$ & $\mathbf{0.401 \pm 0.003}$ & $\mathbf{0.078 \pm 0.016}$ & $\mathbf{0.513 \pm 0.003}$ \\ 
 & CS-DPP-PBC & $\mathbf{0.2819 \pm 0.0036}$ & $\mathbf{0.398 \pm 0.004}$ & $\mathbf{0.046 \pm 0.015}$ & $\mathbf{0.509 \pm 0.003}$ \\ \hline
\multirow{3}{*}{$\mathsf{enron}$} & CS-DPP-None & ${0.4844 \pm 0.0050}$ & ${0.812 \pm 0.002}$ & ${0.386 \pm 0.002}$ & ${0.884 \pm 0.001}$ \\ 
 & CS-DPP-PBT & $\mathbf{0.0581 \pm 0.0002}$ & $\mathbf{0.517 \pm 0.001}$ & $\mathbf{0.136 \pm 0.002}$ & $\mathbf{0.633 \pm 0.001}$ \\ 
 & CS-DPP-PBC & ${0.0601 \pm 0.0002}$ & $\mathbf{0.519 \pm 0.001}$ & $\mathbf{0.135 \pm 0.001}$ & $\mathbf{0.633 \pm 0.001}$ \\ \hline
\multirow{3}{*}{$\mathsf{mediamill}$} & CS-DPP-None & ${0.4917 \pm 0.0015}$ & ${0.842 \pm 0.009}$ & ${0.429 \pm 0.001}$ & ${0.759 \pm 0.009}$ \\ 
 & CS-DPP-PBT & $\mathbf{0.0307 \pm 0.0000}$ & $\mathbf{0.458 \pm 0.000}$ & $\mathbf{0.070 \pm 0.000}$ & $\mathbf{0.581 \pm 0.000}$ \\ 
 & CS-DPP-PBC & $\mathbf{0.0307 \pm 0.0000}$ & $\mathbf{0.457 \pm 0.000}$ & $\mathbf{0.068 \pm 0.000}$ & $\mathbf{0.580 \pm 0.000}$ \\ \hline
\multirow{3}{*}{$\mathsf{medical}$} & CS-DPP-None & ${0.4493 \pm 0.0161}$ & ${0.902 \pm 0.002}$ & ${0.361 \pm 0.004}$ & ${0.931 \pm 0.004}$ \\ 
 & CS-DPP-PBT & ${0.0171 \pm 0.0002}$ & ${0.338 \pm 0.005}$ & ${0.043 \pm 0.003}$ & ${0.374 \pm 0.004}$ \\ 
 & CS-DPP-PBC & $\mathbf{0.0152 \pm 0.0001}$ & $\mathbf{0.316 \pm 0.004}$ & $\mathbf{0.036 \pm 0.002}$ & $\mathbf{0.360 \pm 0.004}$ \\ \hline
\multirow{3}{*}{$\mathsf{nuswide}$} & CS-DPP-None & ${0.4978 \pm 0.0007}$ & ${0.930 \pm 0.003}$ & ${0.523 \pm 0.000}$ & ${0.964 \pm 0.001}$ \\ 
 & CS-DPP-PBT & $\mathbf{0.0201 \pm 0.0000}$ & $\mathbf{0.648 \pm 0.000}$ & ${0.334 \pm 0.001}$ & $\mathbf{0.675 \pm 0.000}$ \\ 
 & CS-DPP-PBC & $\mathbf{0.0201 \pm 0.0000}$ & $\mathbf{0.648 \pm 0.000}$ & $\mathbf{0.329 \pm 0.001}$ & $\mathbf{0.675 \pm 0.000}$ \\ \hline
\multirow{3}{*}{$\mathsf{scene}$} & CS-DPP-None & ${0.5002 \pm 0.0012}$ & ${0.747 \pm 0.002}$ & ${0.373 \pm 0.004}$ & ${0.830 \pm 0.002}$ \\ 
 & CS-DPP-PBT & $\mathbf{0.1787 \pm 0.0014}$ & $\mathbf{0.632 \pm 0.003}$ & ${0.185 \pm 0.013}$ & $\mathbf{0.692 \pm 0.003}$ \\ 
 & CS-DPP-PBC & $\mathbf{0.1797 \pm 0.0014}$ & $\mathbf{0.631 \pm 0.004}$ & $\mathbf{0.142 \pm 0.011}$ & $\mathbf{0.697 \pm 0.004}$ \\ \hline
\multirow{3}{*}{$\mathsf{yeast}$} & CS-DPP-None & ${0.4992 \pm 0.0014}$ & ${0.622 \pm 0.001}$ & ${0.424 \pm 0.002}$ & ${0.737 \pm 0.001}$ \\ 
 & CS-DPP-PBT & $\mathbf{0.2139 \pm 0.0006}$ & ${0.389 \pm 0.001}$ & $\mathbf{0.017 \pm 0.001}$ & $\mathbf{0.495 \pm 0.001}$ \\ 
 & CS-DPP-PBC & $\mathbf{0.2144 \pm 0.0005}$ & $\mathbf{0.385 \pm 0.001}$ & $\mathbf{0.016 \pm 0.001}$ & $\mathbf{0.497 \pm 0.001}$ \\ \hline
\end{tabular}}

\caption{CS-DPP with PBC vs. PBT vs. None, $M$ = 25\% of $K$}
\label{tbl:pb_comp_m25}
\end{table*}

\begin{table*}[h]
\centering
    \resizebox{0.75\textwidth}{!}{
\begin{tabular}{cccccc }\hline
$\mathsf{Dataset}$ & $\mathsf{Alg.}$ & $\mathsf{Hamm. \, loss}$ & $\mathsf{F1 \, loss}$ & $\mathsf{Acc. \, loss}$ & $\mathsf{Norm. \, rank \, loss}$ \\ \hline
\multirow{3}{*}{$\mathsf{CAL500}$} & CS-DPP-None & ${0.4141 \pm 0.0176}$ & ${0.735 \pm 0.002}$ & ${0.398 \pm 0.002}$ & ${0.844 \pm 0.002}$ \\ 
 & CS-DPP-PBT & $\mathbf{0.1487 \pm 0.0002}$ & $\mathbf{0.602 \pm 0.001}$ & $\mathbf{0.154 \pm 0.001}$ & $\mathbf{0.752 \pm 0.001}$ \\ 
 & CS-DPP-PBC & $\mathbf{0.1490 \pm 0.0002}$ & $\mathbf{0.602 \pm 0.001}$ & $\mathbf{0.151 \pm 0.001}$ & $\mathbf{0.751 \pm 0.001}$ \\ \hline
\multirow{3}{*}{$\mathsf{Corel5k}$} & CS-DPP-None & ${0.5014 \pm 0.0017}$ & ${0.969 \pm 0.000}$ & ${0.369 \pm 0.001}$ & ${0.986 \pm 0.000}$ \\ 
 & CS-DPP-PBT & $\mathbf{0.0101 \pm 0.0000}$ & $\mathbf{0.843 \pm 0.000}$ & $\mathbf{0.214 \pm 0.001}$ & $\mathbf{0.901 \pm 0.000}$ \\ 
 & CS-DPP-PBC & $\mathbf{0.0101 \pm 0.0000}$ & $\mathbf{0.842 \pm 0.001}$ & $\mathbf{0.213 \pm 0.000}$ & ${0.903 \pm 0.001}$ \\ \hline
\multirow{3}{*}{$\mathsf{emotions}$} & CS-DPP-None & ${0.4941 \pm 0.0029}$ & ${0.631 \pm 0.003}$ & ${0.386 \pm 0.004}$ & ${0.729 \pm 0.002}$ \\ 
 & CS-DPP-PBT & $\mathbf{0.2308 \pm 0.0014}$ & $\mathbf{0.381 \pm 0.002}$ & $\mathbf{0.034 \pm 0.003}$ & $\mathbf{0.481 \pm 0.002}$ \\ 
 & CS-DPP-PBC & $\mathbf{0.2306 \pm 0.0012}$ & $\mathbf{0.377 \pm 0.002}$ & $\mathbf{0.033 \pm 0.003}$ & $\mathbf{0.481 \pm 0.002}$ \\ \hline
\multirow{3}{*}{$\mathsf{enron}$} & CS-DPP-None & ${0.4953 \pm 0.0016}$ & ${0.821 \pm 0.003}$ & ${0.385 \pm 0.002}$ & ${0.889 \pm 0.002}$ \\ 
 & CS-DPP-PBT & $\mathbf{0.0626 \pm 0.0002}$ & $\mathbf{0.523 \pm 0.001}$ & $\mathbf{0.130 \pm 0.001}$ & $\mathbf{0.636 \pm 0.001}$ \\ 
 & CS-DPP-PBC & ${0.0643 \pm 0.0001}$ & $\mathbf{0.522 \pm 0.001}$ & $\mathbf{0.129 \pm 0.001}$ & $\mathbf{0.636 \pm 0.001}$ \\ \hline
\multirow{3}{*}{$\mathsf{mediamill}$} & CS-DPP-None & ${0.4907 \pm 0.0018}$ & ${0.895 \pm 0.008}$ & ${0.426 \pm 0.001}$ & ${0.838 \pm 0.019}$ \\ 
 & CS-DPP-PBT & $\mathbf{0.0308 \pm 0.0000}$ & $\mathbf{0.457 \pm 0.000}$ & ${0.062 \pm 0.000}$ & ${0.581 \pm 0.000}$ \\ 
 & CS-DPP-PBC & $\mathbf{0.0307 \pm 0.0000}$ & $\mathbf{0.457 \pm 0.000}$ & $\mathbf{0.059 \pm 0.000}$ & $\mathbf{0.581 \pm 0.000}$ \\ \hline
\multirow{3}{*}{$\mathsf{medical}$} & CS-DPP-None & ${0.4177 \pm 0.0370}$ & ${0.907 \pm 0.002}$ & ${0.368 \pm 0.002}$ & ${0.944 \pm 0.002}$ \\ 
 & CS-DPP-PBT & ${0.0136 \pm 0.0001}$ & $\mathbf{0.252 \pm 0.002}$ & $\mathbf{0.021 \pm 0.001}$ & $\mathbf{0.303 \pm 0.002}$ \\ 
 & CS-DPP-PBC & $\mathbf{0.0130 \pm 0.0001}$ & $\mathbf{0.250 \pm 0.002}$ & $\mathbf{0.019 \pm 0.001}$ & $\mathbf{0.299 \pm 0.002}$ \\ \hline
\multirow{3}{*}{$\mathsf{nuswide}$} & CS-DPP-None & ${0.4972 \pm 0.0007}$ & ${0.940 \pm 0.004}$ & ${0.528 \pm 0.001}$ & ${0.964 \pm 0.002}$ \\ 
 & CS-DPP-PBT & $\mathbf{0.0201 \pm 0.0000}$ & $\mathbf{0.648 \pm 0.000}$ & ${0.307 \pm 0.000}$ & $\mathbf{0.674 \pm 0.000}$ \\ 
 & CS-DPP-PBC & $\mathbf{0.0201 \pm 0.0000}$ & $\mathbf{0.648 \pm 0.000}$ & $\mathbf{0.304 \pm 0.000}$ & $\mathbf{0.675 \pm 0.000}$ \\ \hline
\multirow{3}{*}{$\mathsf{scene}$} & CS-DPP-None & ${0.5015 \pm 0.0012}$ & ${0.745 \pm 0.001}$ & ${0.385 \pm 0.002}$ & ${0.832 \pm 0.001}$ \\ 
 & CS-DPP-PBT & $\mathbf{0.1731 \pm 0.0010}$ & $\mathbf{0.554 \pm 0.003}$ & ${0.125 \pm 0.005}$ & $\mathbf{0.626 \pm 0.004}$ \\ 
 & CS-DPP-PBC & $\mathbf{0.1720 \pm 0.0015}$ & $\mathbf{0.558 \pm 0.003}$ & $\mathbf{0.104 \pm 0.009}$ & $\mathbf{0.623 \pm 0.004}$ \\ \hline
\multirow{3}{*}{$\mathsf{yeast}$} & CS-DPP-None & ${0.4982 \pm 0.0011}$ & ${0.630 \pm 0.001}$ & ${0.413 \pm 0.001}$ & ${0.745 \pm 0.001}$ \\ 
 & CS-DPP-PBT & $\mathbf{0.2077 \pm 0.0003}$ & $\mathbf{0.382 \pm 0.001}$ & $\mathbf{0.024 \pm 0.001}$ & $\mathbf{0.493 \pm 0.001}$ \\ 
 & CS-DPP-PBC & $\mathbf{0.2079 \pm 0.0003}$ & $\mathbf{0.382 \pm 0.001}$ & $\mathbf{0.026 \pm 0.001}$ & $\mathbf{0.492 \pm 0.001}$ \\ \hline
\end{tabular}}

\caption{CS-DPP with PBC vs. PBT vs. None, $M$ = 50\% of $K$}
\label{tbl:pb_comp_m50}
\end{table*}

\subsection{Experiments on Cost-sensitivity}
\label{appendix:exp:cost}
We report the complete results of on all datasets with respect to all four cost functions in Table~\ref{tbl:cs_comp_m10} to Table~\ref{tbl:cs_comp_m50},
where the best values (the lowest) are marked in bold.
These complete results validate the conclusion in Section~\ref{sec:exp:cost}. 
\begin{table*}[h]
\centering
    \resizebox{0.75\textwidth}{!}{
\begin{tabular}{cccccc }\hline
$\mathsf{Dataset}$ & $\mathsf{Alg.}$ & $\mathsf{Hamm. \, loss}$ & $\mathsf{F1 \, loss}$ & $\mathsf{Acc. \, loss}$ & $\mathsf{Norm. \, rank \, loss}$ \\ \hline
\multirow{4}{*}{$\mathsf{CAL500}$} & O-CS & ${0.1610 \pm 0.0006}$ & ${0.953 \pm 0.003}$ & ${0.497 \pm 0.001}$ & ${0.971 \pm 0.002}$ \\ 
 & O-RAND & ${0.4042 \pm 0.0052}$ & ${0.750 \pm 0.004}$ & ${0.397 \pm 0.006}$ & ${0.858 \pm 0.004}$ \\ 
 & DPP & $\mathbf{0.1453 \pm 0.0001}$ & ${0.654 \pm 0.002}$ & ${0.399 \pm 0.001}$ & ${0.787 \pm 0.001}$ \\ 
 & CS-DPP & $\mathbf{0.1454 \pm 0.0002}$ & $\mathbf{0.603 \pm 0.001}$ & $\mathbf{0.144 \pm 0.002}$ & $\mathbf{0.748 \pm 0.001}$ \\ \hline
\multirow{4}{*}{$\mathsf{Corel5k}$} & O-CS & ${0.0117 \pm 0.0000}$ & ${0.926 \pm 0.002}$ & ${0.470 \pm 0.001}$ & ${0.949 \pm 0.001}$ \\ 
 & O-RAND & ${0.3734 \pm 0.0044}$ & ${0.980 \pm 0.001}$ & ${0.393 \pm 0.013}$ & ${0.990 \pm 0.000}$ \\ 
 & DPP & $\mathbf{0.0100 \pm 0.0000}$ & ${0.918 \pm 0.001}$ & ${0.470 \pm 0.000}$ & ${0.943 \pm 0.000}$ \\ 
 & CS-DPP & $\mathbf{0.0100 \pm 0.0000}$ & $\mathbf{0.850 \pm 0.001}$ & $\mathbf{0.237 \pm 0.001}$ & $\mathbf{0.910 \pm 0.000}$ \\ \hline
\multirow{4}{*}{$\mathsf{emotions}$} & O-CS & $\mathbf{0.3338 \pm 0.0073}$ & ${0.900 \pm 0.014}$ & ${0.508 \pm 0.006}$ & ${0.924 \pm 0.009}$ \\ 
 & O-RAND & ${0.3847 \pm 0.0099}$ & ${0.621 \pm 0.033}$ & ${0.363 \pm 0.020}$ & ${0.683 \pm 0.021}$ \\ 
 & DPP & $\mathbf{0.3335 \pm 0.0042}$ & $\mathbf{0.428 \pm 0.003}$ & ${0.223 \pm 0.009}$ & $\mathbf{0.558 \pm 0.004}$ \\ 
 & CS-DPP & $\mathbf{0.3301 \pm 0.0012}$ & ${0.450 \pm 0.007}$ & $\mathbf{0.133 \pm 0.023}$ & $\mathbf{0.560 \pm 0.009}$ \\ \hline
\multirow{4}{*}{$\mathsf{enron}$} & O-CS & ${0.0739 \pm 0.0006}$ & ${0.885 \pm 0.010}$ & ${0.463 \pm 0.004}$ & ${0.927 \pm 0.009}$ \\ 
 & O-RAND & ${0.3907 \pm 0.0090}$ & ${0.867 \pm 0.007}$ & ${0.320 \pm 0.015}$ & ${0.923 \pm 0.004}$ \\ 
 & DPP & $\mathbf{0.0563 \pm 0.0001}$ & ${0.552 \pm 0.003}$ & ${0.304 \pm 0.001}$ & ${0.646 \pm 0.002}$ \\ 
 & CS-DPP & $\mathbf{0.0565 \pm 0.0001}$ & $\mathbf{0.528 \pm 0.002}$ & $\mathbf{0.132 \pm 0.001}$ & $\mathbf{0.638 \pm 0.001}$ \\ \hline
\multirow{4}{*}{$\mathsf{mediamill}$} & O-CS & ${0.0485 \pm 0.0011}$ & ${0.821 \pm 0.025}$ & ${0.454 \pm 0.009}$ & ${0.868 \pm 0.014}$ \\ 
 & O-RAND & ${0.3737 \pm 0.0070}$ & ${0.899 \pm 0.007}$ & ${0.391 \pm 0.020}$ & ${0.950 \pm 0.003}$ \\ 
 & DPP & $\mathbf{0.0308 \pm 0.0000}$ & ${0.474 \pm 0.000}$ & ${0.307 \pm 0.000}$ & ${0.594 \pm 0.000}$ \\ 
 & CS-DPP & $\mathbf{0.0308 \pm 0.0000}$ & $\mathbf{0.460 \pm 0.000}$ & $\mathbf{0.072 \pm 0.002}$ & $\mathbf{0.582 \pm 0.000}$ \\ \hline
\multirow{4}{*}{$\mathsf{medical}$} & O-CS & ${0.0272 \pm 0.0006}$ & ${0.819 \pm 0.016}$ & ${0.397 \pm 0.004}$ & ${0.840 \pm 0.017}$ \\ 
 & O-RAND & ${0.3674 \pm 0.0093}$ & ${0.924 \pm 0.005}$ & ${0.301 \pm 0.019}$ & ${0.959 \pm 0.003}$ \\ 
 & DPP & $\mathbf{0.0204 \pm 0.0002}$ & ${0.602 \pm 0.008}$ & ${0.311 \pm 0.005}$ & ${0.628 \pm 0.008}$ \\ 
 & CS-DPP & $\mathbf{0.0204 \pm 0.0002}$ & $\mathbf{0.508 \pm 0.006}$ & $\mathbf{0.096 \pm 0.003}$ & $\mathbf{0.549 \pm 0.007}$ \\ \hline
\multirow{4}{*}{$\mathsf{nuswide}$} & O-CS & ${0.0239 \pm 0.0004}$ & ${0.746 \pm 0.005}$ & ${0.600 \pm 0.003}$ & ${0.741 \pm 0.003}$ \\ 
 & O-RAND & ${0.3707 \pm 0.0107}$ & ${0.956 \pm 0.002}$ & ${0.532 \pm 0.013}$ & ${0.973 \pm 0.002}$ \\ 
 & DPP & $\mathbf{0.0201 \pm 0.0000}$ & ${0.673 \pm 0.000}$ & ${0.580 \pm 0.000}$ & ${0.691 \pm 0.000}$ \\ 
 & CS-DPP & $\mathbf{0.0201 \pm 0.0000}$ & $\mathbf{0.648 \pm 0.000}$ & $\mathbf{0.358 \pm 0.001}$ & $\mathbf{0.675 \pm 0.000}$ \\ \hline
\multirow{4}{*}{$\mathsf{scene}$} & O-CS & ${0.2168 \pm 0.0047}$ & ${0.920 \pm 0.010}$ & ${0.491 \pm 0.003}$ & ${0.902 \pm 0.009}$ \\ 
 & O-RAND & ${0.3711 \pm 0.0172}$ & $\mathbf{0.743 \pm 0.030}$ & $\mathbf{0.295 \pm 0.029}$ & $\mathbf{0.782 \pm 0.009}$ \\ 
 & DPP & $\mathbf{0.1797 \pm 0.0001}$ & ${0.999 \pm 0.000}$ & ${0.500 \pm 0.000}$ & ${0.999 \pm 0.000}$ \\ 
 & CS-DPP & $\mathbf{0.1797 \pm 0.0001}$ & $\mathbf{0.724 \pm 0.002}$ & $\mathbf{0.231 \pm 0.016}$ & $\mathbf{0.796 \pm 0.002}$ \\ \hline
\multirow{4}{*}{$\mathsf{yeast}$} & O-CS & ${0.3077 \pm 0.0021}$ & ${0.885 \pm 0.018}$ & ${0.490 \pm 0.002}$ & ${0.926 \pm 0.014}$ \\ 
 & O-RAND & ${0.4162 \pm 0.0096}$ & ${0.596 \pm 0.008}$ & ${0.376 \pm 0.018}$ & ${0.702 \pm 0.014}$ \\ 
 & DPP & $\mathbf{0.2314 \pm 0.0014}$ & ${0.463 \pm 0.005}$ & ${0.340 \pm 0.003}$ & ${0.597 \pm 0.005}$ \\ 
 & CS-DPP & $\mathbf{0.2307 \pm 0.0011}$ & $\mathbf{0.433 \pm 0.003}$ & $\mathbf{0.003 \pm 0.000}$ & $\mathbf{0.541 \pm 0.003}$ \\ \hline
\end{tabular}}

\caption{CS-DPP vs Others, $M$ = 10\% of $K$}
\label{tbl:cs_comp_m10}
\end{table*}

\begin{table*}[h]
\centering
    
\caption{CS-DPP vs Others, $M$ = 25\% of $K$}
\label{tbl:cs_comp_m25}
\end{table*}

\begin{table*}[h]
\centering
    
\caption{CS-DPP vs Others, $M$ = 50\% of $K$}
\label{tbl:cs_comp_m50}
\end{table*}

\subsection{Experiments on Effect of Label Orders}
\label{appendix:exp:order}
The complete average results and the corresponding standard deviations of CS-DPP run on 50 random label orders are reported in Table \ref{tbl:cost_order_full}.
The results indicate that the standard deviation over the average results of 50 random orders are of $10^{-3}$ scale generally,
indicating that our CS-DPP is relatively not sensitive to the change of label order.
On the other hand, the results of CS-DPP have comparatively large deviation on several datasets for some cost functions,
such as the Normalized rank loss on dataset \textit{emotions} with $M = 10\%$ of $K$.
We attribute the reason to the instability of interaction between the randomness of $\Mv P_t$ and different label orders
based on the fact that larger deviations are observed only when $M = 10\%$ of $K$.
\begin{table*}[h]
\centering
\resizebox{0.9\textwidth}{!}{
\begin{tabular}{cccccc }\hline
$\mathsf{Dataset}$ & $\mathsf{Reduced \, Dim.}$ & $\mathsf{Hamm. \, loss}$ & $\mathsf{F1 \, loss}$ & $\mathsf{Acc. \, loss}$ & $\mathsf{Norm. \, rank \, loss}$ \\ \hline
\multirow{3}{*}{$\mathsf{CAL500}$} & $M = 10\%$ of $K$ & ${0.1458 \pm 0.00019}$ & ${0.5914 \pm 0.00108}$ & ${0.1247 \pm 0.00224}$ & ${0.7388 \pm 0.00105}$ \\ 
 & $M = 25\%$ of $K$ & ${0.1489 \pm 0.00012}$ & ${0.5956 \pm 0.00110}$ & ${0.1321 \pm 0.00210}$ & ${0.7428 \pm 0.00131}$ \\ 
 & $M = 50\%$ of $K$ & ${0.1503 \pm 0.00009}$ & ${0.5949 \pm 0.00101}$ & ${0.1371 \pm 0.00222}$ & ${0.7426 \pm 0.00127}$ \\ \hline
\multirow{3}{*}{$\mathsf{Corel5k}$} & $M = 10\%$ of $K$ & ${0.0102 \pm 0.00000}$ & ${0.8379 \pm 0.00175}$ & ${0.2382 \pm 0.00193}$ & ${0.9026 \pm 0.00138}$ \\ 
 & $M = 25\%$ of $K$ & ${0.0103 \pm 0.00000}$ & ${0.8248 \pm 0.00174}$ & ${0.2102 \pm 0.00102}$ & ${0.8936 \pm 0.00161}$ \\ 
 & $M = 50\%$ of $K$ & ${0.0102 \pm 0.00000}$ & ${0.8186 \pm 0.00138}$ & ${0.1991 \pm 0.00123}$ & ${0.8914 \pm 0.00152}$ \\ \hline
\multirow{3}{*}{$\mathsf{emotions}$} & $M = 10\%$ of $K$ & ${0.3421 \pm 0.00167}$ & ${0.4511 \pm 0.00525}$ & ${0.0745 \pm 0.08548}$ & ${0.5881 \pm 0.02669}$ \\ 
 & $M = 25\%$ of $K$ & ${0.2743 \pm 0.00000}$ & ${0.3964 \pm 0.00476}$ & ${0.0235 \pm 0.00597}$ & ${0.5068 \pm 0.00653}$ \\ 
 & $M = 50\%$ of $K$ & ${0.2324 \pm 0.00000}$ & ${0.3809 \pm 0.00450}$ & ${0.0237 \pm 0.00244}$ & ${0.4858 \pm 0.00463}$ \\ \hline
\multirow{3}{*}{$\mathsf{enron}$} & $M = 10\%$ of $K$ & ${0.0562 \pm 0.00020}$ & ${0.5421 \pm 0.00335}$ & ${0.1432 \pm 0.00333}$ & ${0.6573 \pm 0.00360}$ \\ 
 & $M = 25\%$ of $K$ & ${0.0600 \pm 0.00011}$ & ${0.5392 \pm 0.00291}$ & ${0.1364 \pm 0.00244}$ & ${0.6561 \pm 0.00332}$ \\ 
 & $M = 50\%$ of $K$ & ${0.0632 \pm 0.00009}$ & ${0.5428 \pm 0.00293}$ & ${0.1305 \pm 0.00216}$ & ${0.6627 \pm 0.00316}$ \\ \hline
\multirow{3}{*}{$\mathsf{mediamill}$} & $M = 10\%$ of $K$ & ${0.0309 \pm 0.00001}$ & ${0.4564 \pm 0.00037}$ & ${0.0617 \pm 0.00108}$ & ${0.5790 \pm 0.00049}$ \\ 
 & $M = 25\%$ of $K$ & ${0.0308 \pm 0.00000}$ & ${0.4535 \pm 0.00030}$ & ${0.0597 \pm 0.00062}$ & ${0.5756 \pm 0.00022}$ \\ 
 & $M = 50\%$ of $K$ & ${0.0308 \pm 0.00000}$ & ${0.4534 \pm 0.00027}$ & ${0.0565 \pm 0.00026}$ & ${0.5755 \pm 0.00027}$ \\ \hline
\multirow{3}{*}{$\mathsf{medical}$} & $M = 10\%$ of $K$ & ${0.0202 \pm 0.00014}$ & ${0.5246 \pm 0.01649}$ & ${0.0949 \pm 0.00836}$ & ${0.5764 \pm 0.02145}$ \\ 
 & $M = 25\%$ of $K$ & ${0.0150 \pm 0.00010}$ & ${0.3416 \pm 0.00815}$ & ${0.0337 \pm 0.00431}$ & ${0.4026 \pm 0.00942}$ \\ 
 & $M = 50\%$ of $K$ & ${0.0130 \pm 0.00003}$ & ${0.2783 \pm 0.00618}$ & ${0.0201 \pm 0.00276}$ & ${0.3361 \pm 0.00979}$ \\ \hline
\multirow{3}{*}{$\mathsf{nuswide}$} & $M = 10\%$ of $K$ & ${0.0201 \pm 0.00000}$ & ${0.6338 \pm 0.00064}$ & ${0.3394 \pm 0.00294}$ & ${0.6627 \pm 0.00063}$ \\ 
 & $M = 25\%$ of $K$ & ${0.0201 \pm 0.00000}$ & ${0.6305 \pm 0.00057}$ & ${0.3124 \pm 0.00189}$ & ${0.6600 \pm 0.00045}$ \\ 
 & $M = 50\%$ of $K$ & ${0.0201 \pm 0.00000}$ & ${0.6290 \pm 0.00035}$ & ${0.2945 \pm 0.00076}$ & ${0.6588 \pm 0.00042}$ \\ \hline
\multirow{3}{*}{$\mathsf{scene}$} & $M = 10\%$ of $K$ & ${0.2837 \pm 0.00000}$ & ${0.7433 \pm 0.00184}$ & ${0.1732 \pm 0.00593}$ & ${0.7917 \pm 0.00111}$ \\ 
 & $M = 25\%$ of $K$ & ${0.1873 \pm 0.00000}$ & ${0.6387 \pm 0.00199}$ & ${0.2265 \pm 0.02788}$ & ${0.6882 \pm 0.00196}$ \\ 
 & $M = 50\%$ of $K$ & ${0.1723 \pm 0.00005}$ & ${0.5571 \pm 0.00206}$ & ${0.1708 \pm 0.02179}$ & ${0.6138 \pm 0.00221}$ \\ \hline
\multirow{3}{*}{$\mathsf{yeast}$} & $M = 10\%$ of $K$ & ${0.2296 \pm 0.00010}$ & ${0.4518 \pm 0.00920}$ & ${0.0064 \pm 0.00081}$ & ${0.5448 \pm 0.02253}$ \\ 
 & $M = 25\%$ of $K$ & ${0.2162 \pm 0.00009}$ & ${0.3841 \pm 0.00200}$ & ${0.0170 \pm 0.00242}$ & ${0.4971 \pm 0.00379}$ \\ 
 & $M = 50\%$ of $K$ & ${0.2092 \pm 0.00001}$ & ${0.3784 \pm 0.00107}$ & ${0.0232 \pm 0.00158}$ & ${0.4901 \pm 0.00124}$ \\ \hline
\end{tabular}}
\caption{Results of CS-DPP on 50 random label orders}
\label{tbl:cost_order_full}
\end{table*}

%
%

\end{document}